
\documentclass{article}

\usepackage{microtype}
\usepackage{graphicx}
\usepackage{subfigure}
\usepackage{booktabs} 

\usepackage{hyperref}


\usepackage[accepted]{icml2021}

\usepackage{times} 
\usepackage{amsfonts}       
\usepackage{nicefrac}       
\usepackage{microtype}      
\usepackage{mathtools}
\usepackage{cases}
\usepackage{array}
\usepackage{color}
\usepackage{float}
\usepackage{amssymb}
\usepackage{wrapfig}
\usepackage{amsmath}
\usepackage{soul}
\usepackage{amsmath}
\usepackage{courier}
\usepackage{float}
\usepackage{tikz}
\usepackage{url}
\usepackage{bbm}
\usepackage{amssymb}
\usepackage{amsthm}
\newtheorem{theorem}{Theorem}[section]

\newtheorem{definition}{Definition}[section]

\usepackage{array}
\newcommand{\PreserveBackslash}[1]{\let\temp=\\#1\let\\=\temp}
\newcolumntype{C}[1]{>{\PreserveBackslash\centering}p{#1}}  
\newcolumntype{R}[1]{>{\PreserveBackslash\raggedleft}p{#1}}  
\newcolumntype{L}[1]{>{\PreserveBackslash\raggedright}p{#1}} 

\graphicspath{{./figs/}}
\newcommand{\wei}[1]{{\color{cyan}  \textbf{}}}
\newcommand{\hao}[1]{{\color{blue}  \textbf{}}}
\newcommand{\ding}[1]{{\color{red}  \textbf{}}}
\newcommand{\rebut}[1]{{\color{red}  \textbf{1}}}

\icmltitlerunning{Probabilistic Mixture-of-Experts for Efficient Deep Reinforcement Learning}

\begin{document}

\twocolumn[
\icmltitle{Probabilistic Mixture-of-Experts for Efficient Deep Reinforcement Learning}



\icmlsetsymbol{equal}{*}

\begin{icmlauthorlist}
\icmlauthor{Jie Ren}{equal,pku,xd}
\icmlauthor{Yewen Li}{equal,pku,xd}
\icmlauthor{Zihan Ding}{princeton}
\icmlauthor{Wei Pan}{delft}
\icmlauthor{Hao Dong}{pku}
\end{icmlauthorlist}


\icmlaffiliation{xd}{School of Electronic Engineering, Xidian University, Xi\'an, China}
\icmlaffiliation{princeton}{Princeton University, Princeton, United States}
\icmlaffiliation{delft}{Department of Cognitive Robotics, Delft University of Technology, Delft, Netherlands}
\icmlaffiliation{pku}{CFCS, Computer Science Department, Peking University, Beijing, China}

\icmlcorrespondingauthor{Hao Dong}{hao.dong@pku.edu.cn}

\icmlkeywords{Machine Learning, ICML}

\vskip 0.3in
]



\printAffiliationsAndNotice{\icmlEqualContribution} 

\begin{abstract}
	Deep reinforcement learning (DRL) has successfully solved various problems recently, 
	typically with a unimodal policy representation. However, grasping distinguishable 
	skills for some tasks with non-unique optima can be essential for further improving 
	its learning efficiency and performance, which may lead to a multimodal policy represented 
	as a mixture-of-experts (MOE). To our best knowledge, present DRL algorithms for general 
	utility do not deploy this method as policy function approximators due to the potential 
	challenge in its differentiability for policy learning. In this work, we propose 
	a probabilistic mixture-of-experts (PMOE) implemented with a Gaussian mixture model (GMM) 
	for multimodal policy, together with a novel gradient estimator for the indifferentiability problem, 
	which can be applied in generic off-policy and on-policy DRL algorithms using stochastic 
	policies, \emph{e.g.}, Soft Actor-Critic (SAC) and Proximal Policy Optimisation (PPO). Experimental 
	results testify the advantage of our method over unimodal polices and two 
	different MOE methods, as well as a method of option frameworks, based on the above two 
	types of DRL algorithms, on six MuJoCo tasks. Different gradient estimations for GMM like 
	the reparameterisation trick (Gumbel-Softmax) and the score-ratio trick are also compared 
	with our method. We further empirically demonstrate the distinguishable primitives learned 
	with PMOE and show the benefits of our method in terms of exploration.
\end{abstract}

\section{Introduction}
\label{submission}
Current deep reinforcement learning (DRL)~\cite{SuttonB98, SAC,PPO,a2c,DDPG, RL_book} methods are mostly built on parameterised models, specifically, neural networks. Function approximation lies at the core of DRL methods, including both value function approximation~\cite{DoubleDQN,DuelingDQN} and policy function approximation~\cite{SAC,PPO}. The methods with policy-centric function approximation in DRL are called policy-based algorithms, which typically apply the optimisation method: policy gradient. To improve the efficiency and performance of policy-based learning agents, various models are applied as representations of the policy. Scalability, generality and differentiability are desired characteristics of these models. 

More importantly, the models with broader capability of representation are essential for complex tasks. Among those policy representations, stochastic policies are those formulated as a` probabilistic distribution, that can generally be characterised as unimodal ones and multimodal ones.
In prior works, the choice of stochastic policies in DRL  is typically with the unimodal Gaussian distribution~\cite{gaussian_policy,SAC,PPO}.
However, most complex tasks usually allow multiple optimal solutions for even the same state, and a standard unimodal policy does not have the capability of capturing or leveraging that. 
Thus, some methods~\cite{MCP,MOERL_ACM09,MOERL_ACM16,MOERL_interpretable,MOERL_ICLR16,AdaptiveMOE} are proposed recently to model the policy as a multimodal distribution with the mixture-of-experts (MOE). 
The MOE, as originally proposed by \citep{gating91}, can be implemented with a Gaussian mixture model (GMM) in practice, which is also applicable for representing the  DRL policy.
Each unimodal Gaussian component of GMM can be seen as an expert or the so-called primitive~\cite{primitive}, and the mixing coefficients of GMM
are the probability for activating each primitive given an observation~\cite{MCP}. 
The distinguishable primitives learned by MOE can propose several solutions
for a task, which may potentially lead to better task performance and sample efficiency compared to its unimodal counterpart.

However, a straightforward usage of GMM in generic off-policy and on-policy DRL algorithms will face the indifferentiability problem in its end-to-end training scheme,
since optimising the GMM contains a process of optimising the
categorical distribution parameters~\cite{PRML}.
Prior MOE methods for RL usually works under some special settings, such as a two-stage training manner~\cite{MCP}, or with special task design~\cite{MOERL_ICLR16}. The indifferentiability 
problem can to solved to some extent by employing their policy representations, but our experiments show that it generally does not lead to better learning performances.
One possible explanation is the biased gradient estimation for the categorical distribution parameters of GMM.
Previous methods for solving the indifferentiability involves the reparameterisation
trick, \emph{e.g.}, Gumbel-Softmax~\cite{gumbel,gumbel2} and the score-ratio trick, \emph{e.g.}, REINFORCE~\cite{REINFORCE,likelihood-ratio,score-ratio1,score-ratio2,score-ratio3,score-ratio4}. However, the Gumbel-Softmax method will bring brittleness to the temperature hyperparameter~\cite{ABVRRG, REBAR} in the softmax function,
while the score-ratio tricks are hard to converge~\cite{variance_reinforce} due to the large variances. These problems will be fatal for the fragile DRL training~\cite{SAC} in general.

To tackle the above challenges, we propose a probabilistic MOE as a multimodal function approximator, which can be used as DRL policies for generic off-policy and on-policy DRL algorithms using stochastic policies. Moreover, a novel gradient estimator named Frequency Approximate Gradient is proposed for solving the indifferentiability problem when optimizing the MOE. 
Our method has shown advantageous performances on 6 MuJoCo continuous control tasks with an improvement up to 28.4\% in the SAC-based experiments and 39.3\% in the PPO-based experiments compared to vanilla SAC and PPO respectively, measured with the area under the curve (AUC).
Experiments also show the advantageous performance of our method over two other gradient estimation methods, Gumbel-Softmax and REINFORCE. We further analyse the learned MOE policies by displaying its distinguishable primitives and measuring its sensitivity to the choice of primitive numbers, as well as giving a potential reason for the performance improvement in terms of the exploration ranges. 

\section{Related Work}
\paragraph{Mixture-of-Experts}
To speed up the learning and improve the generalisation ability on different scenarios, \citet{AdaptiveMOE} proposed to use several different expert networks instead of a single one.
To partition the data space and assign different kernels for different spaces, \cite{ME_SVM_0,ME_SVM_1} combine MOE with SVM. To break the dependency among training outputs and speed 
up the convergence, the Gaussian process (GP) is generalised similarly to MOE \citep{ME_GP_0,ME_GP_1,ME_GP_2}. MOE can be also combined with RL \citep{ME_RL_1,MOERL_ACM09,MOERL_ACM16,MOERL_ICLR16,MCP}, 
in which the policies are modelled as probabilistic mixture models and each expert aim to learn distinguishable policies.
\cite{MOERL_ACM16} introduces a mixture of actor-critic experts approaches to learn terrain-adaptive dynamic locomotion skills. \cite{MCP} changes the mixture-of-experts distribution addition expression into the multiplication expression.

\paragraph{Hierarchical Policies}
The MOE policy structure can be seen as a hierarchical structure, as the agent chooses a
Gaussian component of GMM to act according to the mixing coefficient. 
There are two main related hierarchical policy structures. 
One is the feudal schema \citep{FEUDAL92}, which has “manager" agents to first make high-level decisions and the “worker" agents to make low-level actions.
\cite{FEUDAL_ICML17} generalises the feudal schema into continuous action space and uses an embedding operation to solve the indifferentiable problem.
The other is the options framework \citep{Option99,Option_old_0, Option_old_1, Option_old_2, Option_old_3}, which has an upper-level agent (policy-over-options) to decide whether the lower-level agent (sub-policy) should start or terminate. 
\cite{Option_NIPS16_2} uses internal and extrinsic rewards to learn sub-policies and policy-over-options. \cite{OptionAC} trains sub-policies and policy-over-options with a deep termination function.

\paragraph{Gradient Estimation Methods}
Stochastic neural networks rarely use discrete latent variables
due to the inability to backpropagate through samples~\cite{gumbel}.
Existing stochastic gradient estimation methods traditionally focus on the Path derivative gradient estimators and the Score-ratio gradient estimators.
Path derivative gradient estimators are formulated specifically for reparameterisable distributions, \emph{e.g.}, 
~\cite{VAE,vae2} employs a reparameterisation trick for the latent Gaussian distribution,
~\cite{st-estimator} designs a Straight-Through estimator for Bernoulli distribution, and ~\cite{gumbel} introduces
Gumbel-Softmax to approximate categorical samples whose parameter gradients can be easily computed via the
reparameterisation trick.
Score-ratio gradient estimators use the Log-derivative trick to derive an estimator, such as the score function
estimator(also referred to as REINFORCE~\cite{REINFORCE}), likelihood ratio estimator~\cite{likelihood-ratio},
and other estimators augmented with Monte Carlo variance reduction techniques~\cite{score-ratio1,score-ratio2,score-ratio3,score-ratio4}. 

\paragraph{Policy-based RL}
Policy-based RL aims to find the optimal policy to maximise the expected return through gradient updates. 
Among various algorithms, Actor-critic is often employed~\citep{Barto,SuttonB98}. 
Off-policy algorithms~\citep{PGQ, DDPG,Q-Prop, SAC} are more sample efficient than on-policy ones~\citep{PetersS08, PPO, a2c, Reactor}. 
However, the learned policies are still unimodal.

\section{Method}
	
	\subsection{Notation}
	The model-free RL problem can be formulated by Markov Decision Process (MDP), denoted as a tuple $(\mathcal{S}, \mathcal{A}, P, r)$, where $\mathcal{S}$ and $\mathcal{A}$ are continuous state and action space, respectively. 
	The agent observes state $s_t \in \mathcal{S}$ and takes an action $a_t \in \mathcal{A}$ at time step $t$. 
	The environment emits a reward $r: \mathcal{S} \times \mathcal{A} \rightarrow [r_{min}, r_{max}]$ and transitions to a new state $s_{t+1}$ according to the transition probabilities $P: \mathcal{S} \times \mathcal{S} \times \mathcal{A} \rightarrow [0, \infty)$.
	In deep reinforcement learning algorithms, we always use the Q-value function $Q(s_{t}, a_{t})$ to describe the expected return after taking an action $a_{t}$ in the state $s_{t}$. The Q-value can be iteratively computed by applying the Bellman backup given by:
		\begin{equation}
		Q(s_{t}, a_{t}) \triangleq \mathbb{E}_{s_{t+1}\sim P}\big[r(s_{t}, a_{t}) + \gamma \mathbb{E}_{a_{t+1}\sim \pi}[Q(s_{t+1}, a_{t+1})]\big].
		\end{equation}
	Our goal
	is to maximise the expected return:
	\begin{equation}
	\label{eq:general_goal}
	\pi_{\Theta*}(a_{t}|s_{t}) = \arg\max_{\pi_{\Theta}(a_{t}|s_{t})} \mathbb{E}_{a_t \sim\pi_{\Theta} (a_{t}|s_{t})}[Q(s_{t}, a_{t})],
	\end{equation}
	where $\Theta$ denotes the parameters of the policy network $\pi$. 
	With Q-value network (critic) $Q_{\phi}$ parameterised by $\phi$, Stochastic gradient descent (SGD) based approaches are usually used to update the policy network:
	\begin{equation}
	\label{eq:general_update}
	\Theta \leftarrow  \Theta + \nabla_{\Theta}\mathbb{E}_{a\sim\pi_{\Theta}(a_t|s_t)}[Q_{\phi}(s_t, a_t)].
	\end{equation}
	
	\subsection{Probabilistic Mixture-of-Experts (PMOE)}
	The proposed PMOE method follows the typical setting of MOE method, which decomposes a complex policy $\pi$ into a mixture of low-level stochastic policies with each of them as a probability distribution, represented as the following: 
	\begin{align}
	\label{eq:mixture_define}
    	\pi_{\{\theta, \psi\}}(a_{t}|s_{t}) &= \sum_{i=1}^{K} w_{\theta_{i}}(s_{t}) \pi_{\psi_{i}}(a_{t}|s_{t}), \\ &s.t.\, \sum_{i=1}^{K}w_{\theta_{i}} = 1, \, w_{\theta_{i}}>0,
	\end{align}
	where each $\pi_{\psi_{i}}$ denotes the action distribution within each low-level policy, \emph{i.e.} a \textit{primitive}, and $K$ denotes the number of primitives. $w_{\theta_{i}}$ is the weight that specifies the probability of the activating primitive $\pi_{\psi_i}$, which is called the \textit{routing} function. According to the GMM assumption~\cite{PRML}, $w_{\theta}$ is a Categorical distribution and $\pi_{\psi}$ is a unimodal Gaussian distribution. For $\forall  i \in \{1,2,...,K\}$, $\theta_i$ and $\psi_i$ are parameters of $w_{\theta_i}$ and $\pi_{\psi_i}$, respectively. 
	After the policy decomposition with PMOE method, we can rewrite the update rule in Eq.~\ref{eq:general_update} as:
	\begin{equation}
		\begin{aligned}
			\theta \leftarrow \theta + \nabla_{\theta}\mathbb{E}_{a_t\sim\pi_{\{\theta, \psi\}}(a_{t}|s_{t})}[Q_{\phi}(s_t, a_t)],\\
			\psi \leftarrow \psi + \nabla_{\psi}\mathbb{E}_{a_t\sim\pi_{\{\theta, \psi\}}(a_{t}|s_{t})}[Q_{\phi}(s_t, a_t)].
		\end{aligned}
	\end{equation}
	In practice, if we apply a Gaussian distribution for each of the low-level policies here in PMOE, the overall PMOE will end up to be a GMM. However, sampling from the mixture distributions of primitives $\pi_{\{\theta, \psi\}}(a_t|s_t)$ embeds a sampling process from the categorical distribution $w_{\theta}$, which makes the differential calculation of policy gradients commonly applied in DRL hard to achieve. We provide a theoretically guaranteed solution for approximating the gradients in the sampling process of PMOE and apply it for optimising the PMOE policy model within DRL, which will be described in details.

	\subsection{Learning the Routing}
	\label{sec:freq_loss}

	The routing function in MOE typically involves a sampling process from a categorical distribution (due to the discontinuity among multiple experts), which is indifferentiable~\cite{gumbel}. To handle this difficulty, we propose a new gradient estimator for this routing function. 
	
	Specifically, given a state $s_t$, we sample one action $a_t^i$ from each primitive $\pi_{\psi_{i}}$, to get a total of $K$ actions  $\{a_t^i; i=1, 2,\cdots, K\}$, and compute $K$ Q-value estimations $\{Q_{\phi}(s_t, a_t^i); i=1, 2,\cdots, K\}$ for each of the actions. We say the primitive $j$ is the “optimal" primitive under the Q-value estimation if $j = \arg\max_i Q_{\phi}(s_t, a_t^i)$.
	There exists a frequency of the primitive $j$ to be the “optimal" primitive given a set of state $\{s_t\}$, here we propose a new gradient estimator which optimise $\theta_k$ towards the frequency.
	\begin{definition}[Frequency Approximate Gradient] For a stochastic mixture-of-experts, the gradient value of a single-instance sampling process for the routing function $w_\theta$ can be estimated with the frequency approximate gradient, which is defined as:
		\begin{equation}
		\text{grad} = \delta_k\nabla_{\theta_k}w_{\theta_k}, \,\delta_k = -\mathbbm{1}_k^{\text{best}} + w_{\theta_k},
	\end{equation}
		where $\nabla_{\theta_k}w_{\theta_k}$ is the gradient of $w_{\theta_k}$ for parameters $\theta_k$ and $\mathbbm{1}_k^{\text{best}}$ is the indicator function that $\mathbbm{1}_k^{\text{best}} = 1$ if $k = \arg\max_jQ_{\phi}(s_{t}, a_{t}^j)$ and $\mathbbm{1}_k^{\text{best}} = 0$ otherwise.
	\end{definition}
	
	\begin{theorem}
    
    The accumulated frequency approximate gradient is an asymptotically unbiased estimation of the true gradient for the sampling process from a categorical distribution in the routing function, with a batch of $N \to \infty$ samples.


	\end{theorem}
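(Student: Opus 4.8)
The plan is to recast the claim as a law-of-large-numbers statement about matching the routing weights to the frequency with which each primitive is optimal under the critic. First I would make precise what the ``true gradient'' is. Given a state $s_t$ and fixed parameters, draw one action $a_t^j \sim \pi_{\psi_j}$ independently from each primitive and define $p_k^{\text{best}} = \mathbb{E}[\mathbbm{1}_k^{\text{best}}] = \Pr(k = \arg\max_j Q_\phi(s_t, a_t^j))$, the probability that primitive $k$ produces the highest-valued sample. The natural target for the categorical routing is $w_{\theta_k} = p_k^{\text{best}}$, so I would take the true routing gradient to be the gradient of the discrepancy $\tfrac{1}{2}(w_{\theta_k} - p_k^{\text{best}})^2$, namely $g_k = (w_{\theta_k} - p_k^{\text{best}})\nabla_{\theta_k} w_{\theta_k}$; note that the indicator in the Definition is exactly the single-sample plug-in for $p_k^{\text{best}}$, since $\text{grad} = \delta_k \nabla_{\theta_k} w_{\theta_k}$ is the gradient of $\tfrac{1}{2}(w_{\theta_k} - \mathbbm{1}_k^{\text{best}})^2$ when the indicator is held fixed.

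The second step is to check unbiasedness of a single instance. Since $w_{\theta_k}$ and $\nabla_{\theta_k} w_{\theta_k}$ depend only on $\theta$ and $s_t$ and not on the sampled actions, the estimator $\text{grad} = (w_{\theta_k} - \mathbbm{1}_k^{\text{best}})\nabla_{\theta_k} w_{\theta_k}$ is affine in the single random quantity $\mathbbm{1}_k^{\text{best}}$. Taking the expectation over the per-primitive action draws and using $\mathbb{E}[\mathbbm{1}_k^{\text{best}}] = p_k^{\text{best}}$ yields $\mathbb{E}[\text{grad}] = (w_{\theta_k} - p_k^{\text{best}})\nabla_{\theta_k} w_{\theta_k} = g_k$, so a single instance is already unbiased in expectation. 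The third step is to pass to the accumulated estimator: averaging over a batch of $N$ independent samples replaces $\mathbbm{1}_k^{\text{best}}$ by the empirical frequency $\hat{p}_k = \tfrac{1}{N}\sum_{n=1}^N \mathbbm{1}_k^{\text{best},(n)}$, and by the strong law of large numbers $\hat{p}_k \to p_k^{\text{best}}$ almost surely. Because the map $\hat{p}_k \mapsto (w_{\theta_k} - \hat{p}_k)\nabla_{\theta_k} w_{\theta_k}$ is continuous (indeed affine), the accumulated frequency approximate gradient converges to $g_k$ as $N \to \infty$, which is precisely the asserted asymptotic unbiasedness (in fact consistency) of the estimator for the true routing gradient.

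I expect the main obstacle to be conceptual rather than computational: stating precisely which ``true gradient'' is being approximated and arguing that matching $w_\theta$ to the optimal-primitive frequency is the correct learning signal for the embedded categorical sampling process. This is where the novelty sits, since it amounts to justifying that the hard assignment $\mathbbm{1}_k^{\text{best}}$ plays the role of a soft responsibility and that its batch frequency is the quantity the routing should track. Minor technical points to dispatch along the way are that ties in the $\arg\max$ occur with probability zero because the Gaussian primitives are absolutely continuous (so $\mathbbm{1}_k^{\text{best}}$ is well defined and $\sum_k \mathbbm{1}_k^{\text{best}} = 1$ almost surely), and that the interchange of expectation and $\nabla_{\theta_k}$ is legitimate since the mixing weights and their derivatives are bounded. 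Once the target $g_k$ is fixed, the law-of-large-numbers step itself is routine.
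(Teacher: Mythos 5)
Your argument is correct and follows essentially the same route as the paper: both identify the target gradient as $(f_k - w_{\theta_k})\nabla_{\theta_k}w_{\theta_k}$ (your $p_k^{\text{best}}$ is the paper's $f_k$) and obtain it as the limit of the batch-accumulated estimator by letting the empirical frequency $N_t/N$ of primitive $k$ being optimal converge to its true probability. Your added observation that affinity in $\mathbbm{1}_k^{\text{best}}$ makes even the single-sample estimator exactly unbiased for this target is a sharper statement than the paper's purely asymptotic one, but the underlying mechanism --- a law of large numbers for the optimal-primitive frequency, with the quadratic-loss interpretation of the resulting fixed point $w_{\theta_k}=f_k$ --- is the same.
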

	
	\begin{proof}
	\begin{equation}
		\begin{aligned}
			\delta_k
			&= -\mathbbm{1}_k^{\text{best}} + w_{\theta_k}\\
			&= \mathbbm{1}_k^{\text{best}}w_{\theta_k} - \mathbbm{1}_k^{\text{best}}w_{\theta_k} -\mathbbm{1}_k^{\text{best}} + w_{\theta_k}\\
			&= \frac{1}{2}\mathbbm{1}_k^{\text{best}}\nabla_{w_{\theta_k}}(1 - w_{\theta_k})^2 + \frac{1}{2}(1 - \mathbbm{1}_k^{\text{best}})\nabla_{w_{\theta_k}}w_{\theta_k}^2.
		\end{aligned}
		\label{eq:lead_to_freq_loss}
	\end{equation}
	Now we assume a batch of samples with a number of $N$ is applied, and the true probability of primitive $k$ to be the best primitive (\emph{i.e.} $k = \arg\max_jQ_{\phi}(s_{t}, a_{t}^k)$, is $f_k$.
	The batch accumulated gradient will be
	\begin{equation}
		\begin{aligned}
			\overline{\text{grad}}
			=& \frac{1}{N}\sum_{j=1}^N \text{grad} \\
			=& \frac{N_t}{2N}\nabla_{w_{\theta_k}}(1 - w_{\theta_k})^2\nabla_{\theta_k}w_{\theta_k} \\
			&+ \frac{(N-N_t)}{2N}\nabla_{w_{\theta_k}}w_{\theta_k}^2\nabla_{\theta_k}w_{\theta_k}\\
			\stackrel{N \to \infty}{=}&  (f_k - w_{\theta_k})\nabla_{\theta_k}w_{\theta_k},
		\end{aligned}
		\label{equ:acc_grad}
	\end{equation}
	where the last formula indicates that $N_t=N f_k$ when $N \to \infty$, since the true probability can be approximated by $\frac{N_t}{N}$ in the limit case.  Since $\nabla_{\theta_k}w_{\theta_k}$ is not always equal to $0$, the gradient equals to $0$ if and only if $w_{\theta_k} = f_k$ .  Optimising with the above equation is the same as minimising the distance between $w_{\theta_k}$ and $f_k$, with the optimal situation as $w_{\theta_k}=f_k$ when letting the last formula of Eq.~\ref{equ:acc_grad} be zero.
    \end{proof}
    
    Then $\theta_k$ is updated via a gradient descent based algorithm, \emph{e.g.}, Stochastic Gradient Descent (SGD):
	\begin{equation}
		\theta_k \leftarrow \theta_k - \delta_k\nabla_{\theta_k}w_{\theta_k}.
	\end{equation}
	
	According to the formation of Eq~\ref{eq:lead_to_freq_loss}, we can build an elegant loss function that achieves the same goal of gradient estimating:
	\begin{equation}
	\label{eq:freq_loss}
	\mathcal{L}_{freq} = (v-w)(v-w)^T, w=[w_{\theta_{1}}, w_{\theta_{2}}, \cdots, w_{\theta_{K}}],
	\end{equation}
	$v$ is a one-hot code vector $v = [v_1, v_2, \cdots, v_K]$ with:
	\begin{equation}
		\label{eq:compute_v}
		v_j = \left\{
		\begin{aligned}
		&1, \,\text{if}\, j=\arg\max_{i}Q_{\phi}(s_t, a_t^i);\\
		&0, \,\text{otherwise}.
		\end{aligned}
		\right.
	\end{equation}
	\subsection{Learning the Primitive}
	\label{sec:primitive_loss}
	To update the $\psi_i$ within each primitive, we provide two approaches of optimising the primitives: \textit{back-propagation-all} and \textit{back-propagation-max} manners. 

	For the \textit{back-propagation-all} approach, we update all the primitive:
	\begin{equation}
	\mathcal{L}_{pri}^{bpa} = -\sum_i^KQ_{\phi}(s_{t}, a_{t}^i), \, a_{t}^i \sim \pi_{\psi_{i}}(a_t|s_t).
	\end{equation}
	
	For the \textit{back-propagation-max} approach, 
	we use the highest Q-value estimation as the primitive loss:
	\begin{equation}
	\label{eq:pri_loss}
	\mathcal{L}_{pri}^{bpm} = -\max_{i=1, 2, \cdots, K}\{Q_{\phi}(s_{t}, a_{t}^i)\}, \, a_{t}^i \sim \pi_{\psi_{i}}(a_t|s_t).
	\end{equation}
	\label{sec:backprog_all}
	
	With either approach, we have the same stochastic policy gradients as following:
	\begin{equation}
	\begin{aligned}
		\nabla_{\psi_{i}}\mathcal{L}_{pri} 
		=& -\nabla_{\psi_j} \mathbb{E}_{\pi_{\psi_i}}[Q_{\phi}(s_{t}, a_{t})]\\
		=& \mathbb{E}_{\pi_{\psi_i}}[-Q_{\phi}(s_{t}, a_{t})\nabla_{\psi_j}\log\pi_{\psi_j}(a_t|s_t)]
	\end{aligned}
	\end{equation}

	Ideally, both approaches are feasible for learning a PMOE model. However, in practice, we find that the \textit{back-propagation-all} approach will tend to learn primitives that are close to each other, while the \textit{back-propagation-max} approach is capable of keeping primitives distinguishable. The phenomenon is demonstrated in our experimental analysis. Therefore, we adopt the \textit{back-propagation-max} approach as the default setting of PMOE model without additional clarification.
	\subsection{Learning the Critic}
	\label{sec:critic_loss}
	Similar to the standard off-policy RL algorithms, our Q-value network is also trained to minimise the Bellman residual:
	\begin{equation}
    	\begin{aligned}
    	\mathcal{L}_{critic} = &\mathbb{E}_{(s_t, a_t)\sim\mathcal{D}} [\|Q_{\phi}(s_t, a_t) - [r_t + \\ &\gamma \max_{a_{t+1}}Q_{\bar{\phi}}(s_{t+1}, a_{t+1})]\|_2], a_{t+1} \sim \pi(a_{t+1}|s_{t+1})
    	\end{aligned}
		\label{eq:critic_loss}
	\end{equation}
	where $\bar{\phi}$ is the parameters of the target network.

	The learning component can be easily embedded into the popular actor-critic algorithms, such as soft actor-critic (SAC)~\citep{SAC}, one of the state-of-the-art off-policy RL algorithms. In SAC, $Q_{\psi}(s_t, a_t^j)$ is substituted with $Q_{\psi}(s_t, a_t^j) + \alpha\mathcal{H}_j$, where $\alpha$ is temperature and $\mathcal{H}_j = -\log\pi_{\psi_{j}}(a_t|s_t)$ is the entropy which are the same as in SAC. The algorithm is summarised in Algorithm~\ref{alg:training}. When $K=1$, our algorithm simply reduces to the standard SAC. 
    \begin{algorithm}[]
        \caption{Probabilistic MOE}
        \label{alg:training}
     \begin{algorithmic}
        \STATE {\bfseries Input:} $\theta, \psi, \phi$
        \STATE Initialise the target networks $\bar{\theta}\leftarrow\theta, \bar{\psi}\leftarrow\psi, \bar{\phi}\leftarrow\phi $
        \STATE {Initialise an empty replay buffer: $\mathcal{D}\leftarrow\Phi$}
        \REPEAT
        \FOR{each environment step}
        \STATE {Sample action from policy: $a_{t}\sim\pi_{\{\theta, \psi\}} (a_{t}|s_t)$}
        \STATE {Interact with the environment: $s_{t+1} = p(s_{t+1|a_t, s_t})$}
        \STATE {Store in buffer: $\mathcal{D} = \mathcal{D}\cup\{s_t, a_t, s_{t+1}, r(s_t, a_t)\}$}
        \ENDFOR

        \FOR{each update step}
            \STATE {Sample from buffer: $\{s_t, a_t, s_{t+1}, r_t\} \sim \mathcal{D}$}
                \STATE {Compute $\mathcal{L}_{freq}$ with Eq. \ref{eq:freq_loss}, $\mathcal{L}_{pri}$ with Eq.~\ref{eq:pri_loss}, and $\mathcal{L}_{critic}$ with Eq.~\ref{eq:critic_loss}}
                \STATE {Update the policy with: \\
                $\theta \leftarrow \theta - \lambda_\theta\nabla_{\theta}\mathcal{L}_{freq}$, $\psi \leftarrow \psi - \lambda_\psi\nabla_{\psi}\mathcal{L}_{pri}$}
				\STATE {Update the critic with: \\
				$\phi \leftarrow \phi - \lambda_\phi\nabla_{\phi}\mathcal{L}_{critic}$}
        \ENDFOR
        \UNTIL{converge}
    \STATE {\bfseries Output:} $\theta, \psi, \phi.$
     \end{algorithmic}
     \end{algorithm}

\section{Experiments}

To testify the performance of our method, we conduct a thorough comparison on a set of challenging continuous control tasks in 
OpenAI Gym MuJoCo environments~\citep{GYM} with other baselines, including a MOE method with gating operation~\citep{gating91}, 
Double Option Actor-Critic (DAC)~\citep{DoubleOptionAC} option framework, the Multiplicative Compositional Policies (MCP)~\citep{MCP},
and other two implementations of PMOE with Gumbel-Softmax~\cite{gumbel, gumbel2} and REINFORCE~\cite{REINFORCE}. Well-known sample efficient algorithms involving Soft Actor-Critic (SAC)~\citep{SAC} and Proximal Policy Optimisation (PPO)~\citep{PPO} are adopted as basic DRL algorithms in our experiments, where different multimodal policy approximation methods are built on top. This verifies the generality of PMOE for different DRL algorithms.

To experimentally show the properties learned with PMOE, a deeper investigation of PMOE is conducted to find out the additional effects caused by deploying mixture models rather than a single model in policy learning.  We start with a simple self-defined \emph{target-reaching}  task with sparse rewards to show our method can indeed learn various optimal solutions with distinguishable primitives, which are further demonstrated on complicated tasks in MuJoCo. Additionally, the exploration behaviours are also compared between PMOE and other baselines to explain the advantageous learning efficiency of PMOE.
Finally, to know how the number of primitives affects the performance, we test PMOE with different values of $K$ on the  \textit{HumanoidStandup-v2} environment
to analyse the impact of the different number of primitives.

\subsection{Performance Evaluation}
\begin{figure}[ht!]
	\centering
	\subfigure[Ant-v2]{
		\includegraphics[width=0.170\textheight]{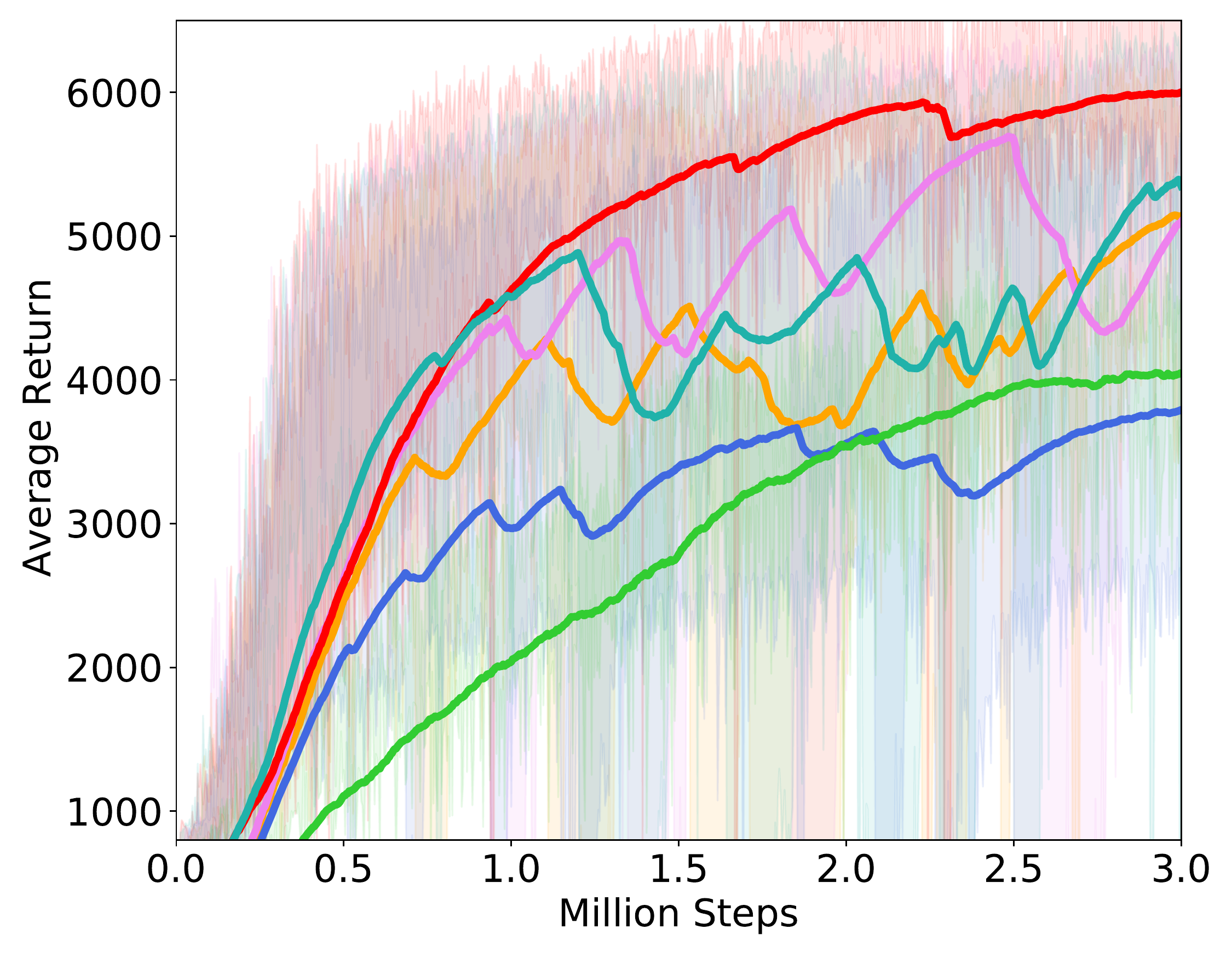}
	}
	\subfigure[Hopper-v2]{
		\includegraphics[width=0.170\textheight]{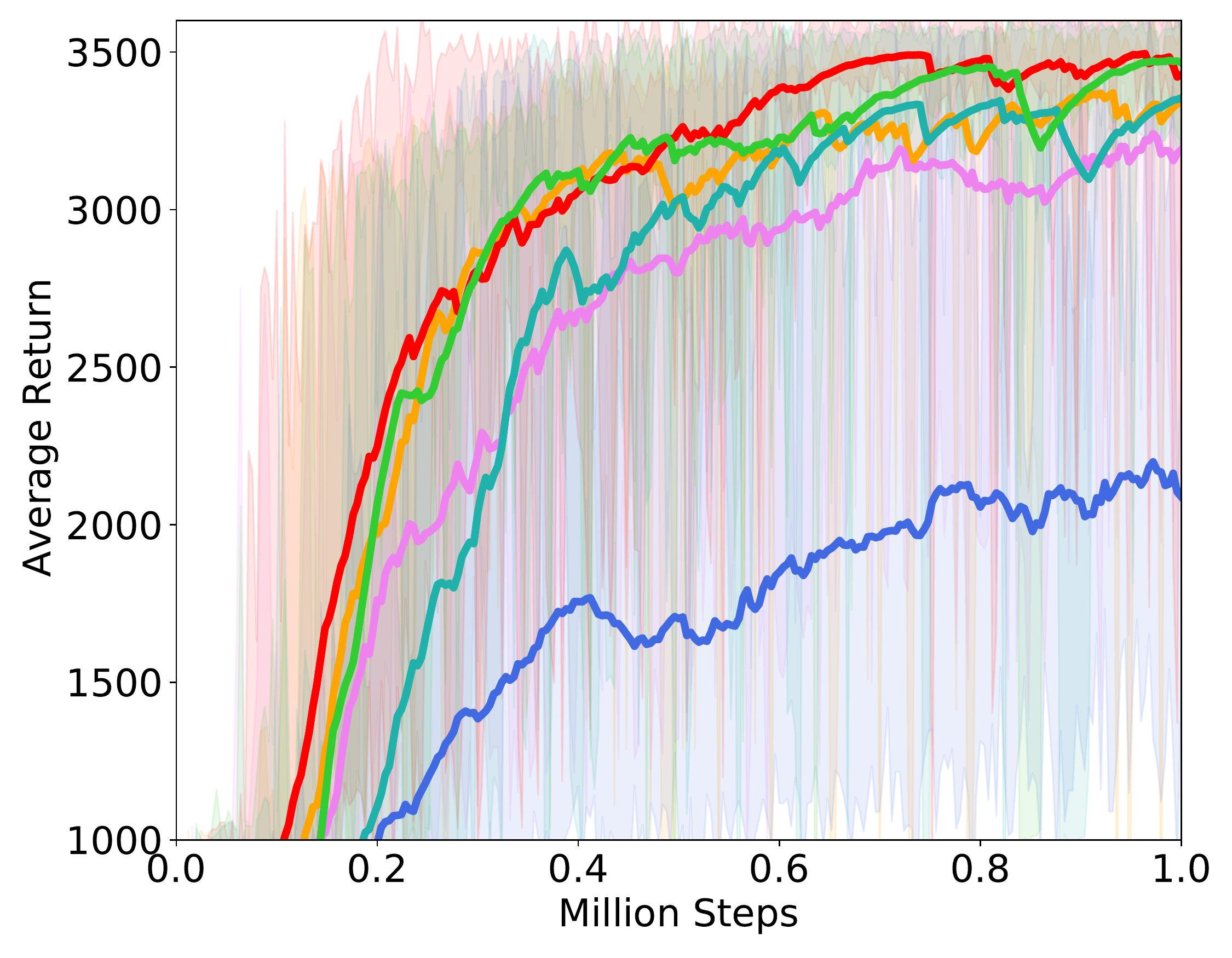}
	}
	\subfigure[Walker2D-v2]{
		\includegraphics[width=0.170\textheight]{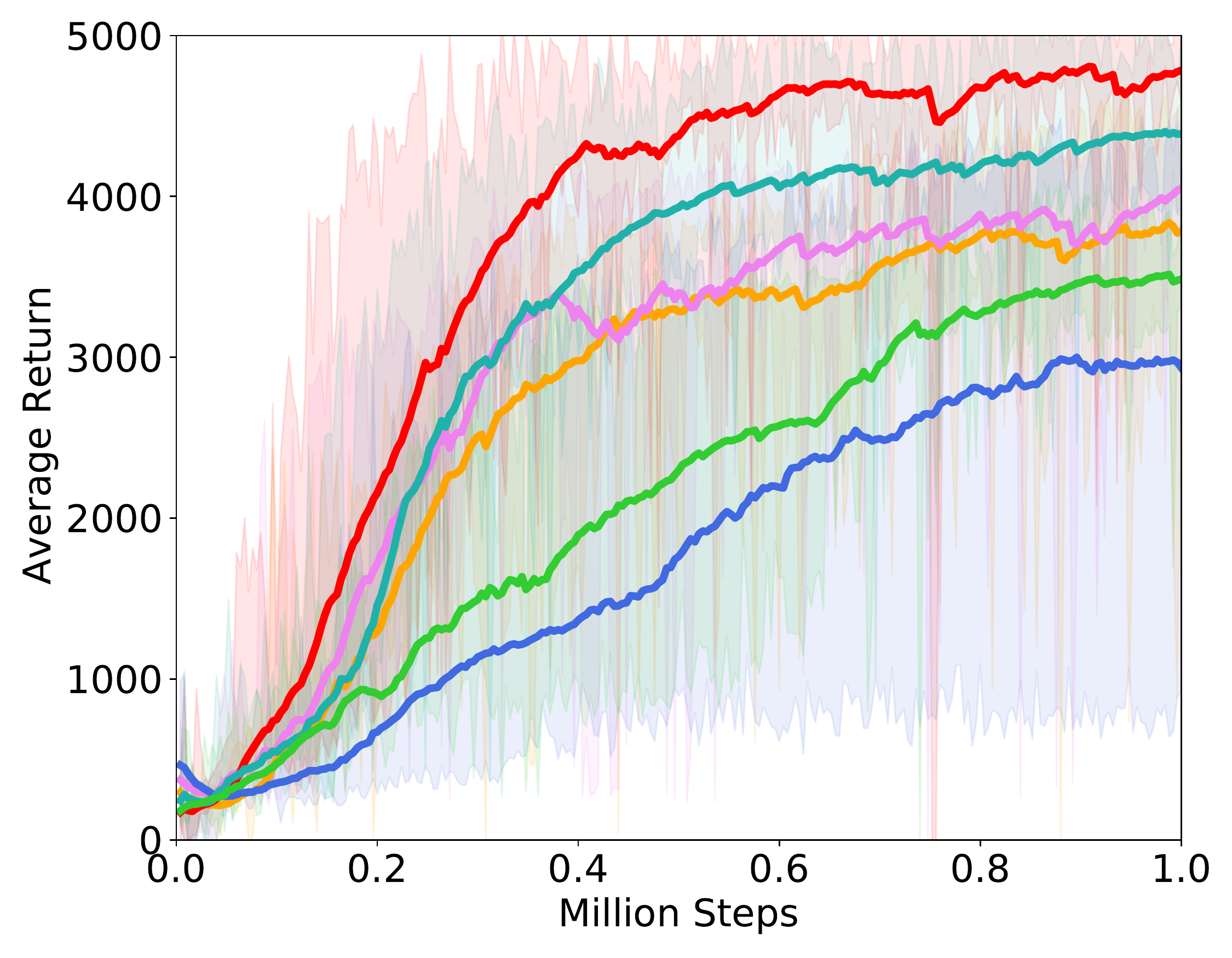}
	}
	\subfigure[Humanoid-v2]{
		\includegraphics[width=0.170\textheight]{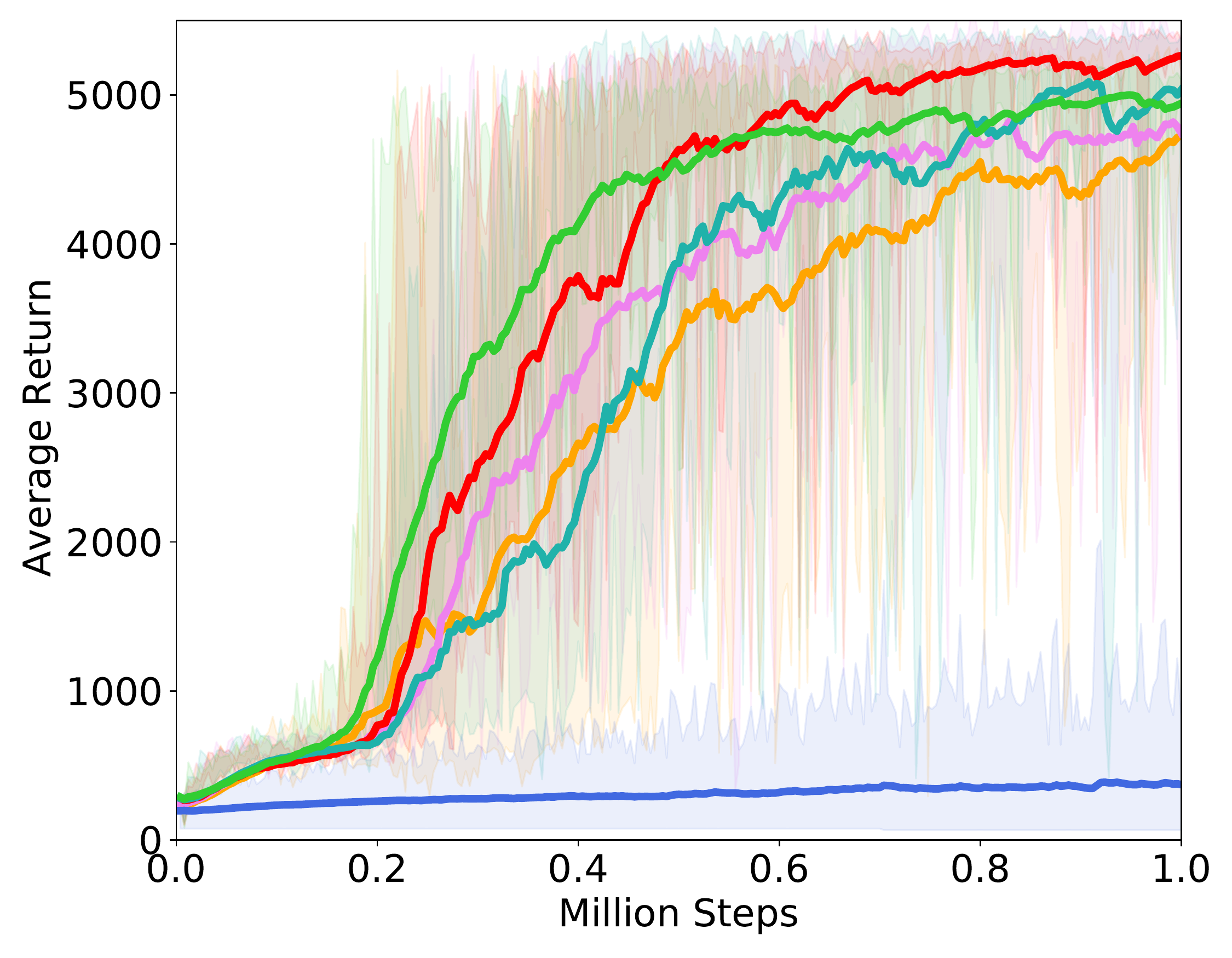}
	}
	\subfigure[HumanoidStandup-v2]{
		\includegraphics[width=0.170\textheight]{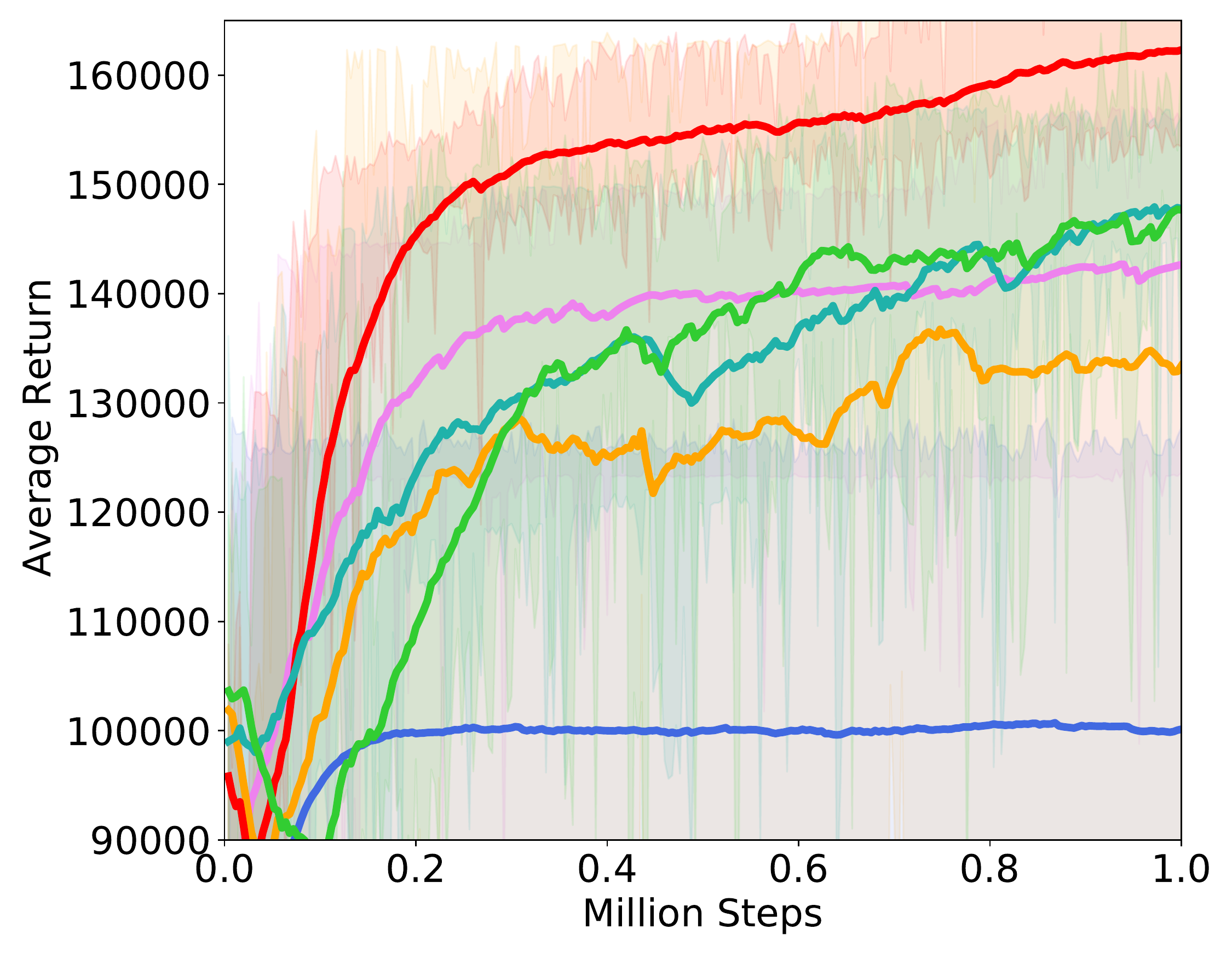}
	}
	\subfigure[HalfCheetah-v2]{
		\includegraphics[width=0.170\textheight]{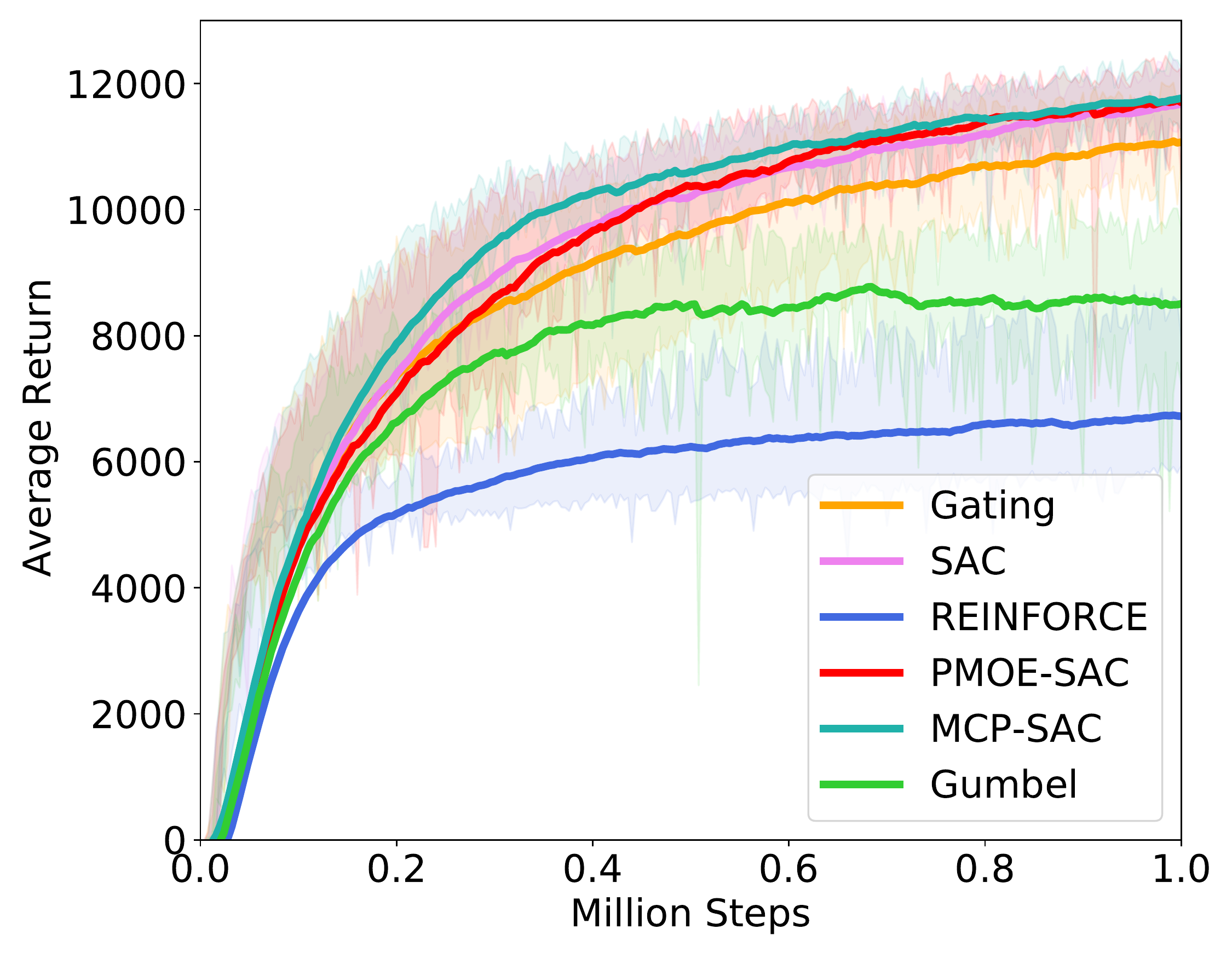}
	}
	\caption{Training curves on MuJoCo benchmarks with SAC-based algorithms. We set PMOE with $K=4$ in all the experiments except \textit{HalfCheetah-v2} with $K=2$ and \textit{HumanoidStandup-v2} and \textit{Humanoid-v2} with $K=10$.}
	\label{fig:sac_MuJoCo}
\end{figure}

The evaluation on the average returns with SAC-based and PPO-based algorithms are shown in Fig.~\ref{fig:sac_MuJoCo} and Fig.~\ref{fig:ppo_MuJoCo}, respectively. 
Specifically, in Fig.~\ref{fig:sac_MuJoCo}, SAC-based algorithms with either unimodal policy or our PMOE for policy approximation are compared against the MCP~\citep{MCP} and gating operation methods~\citep{gating91} in terms of the average returns across six typical MuJoCo tasks.\wei{I have no idea what you compared and can't understand the English} 
In Fig.~\ref{fig:ppo_MuJoCo}, a comparison with similar settings while basing on a different DRL algorithm is conducted to show the generality of PMOE method. 
Specifically, we compare the proposed PMOE for policy approximation on PPO with DAC and PPO-based MCP methods\wei{this is not right?}.
Besides, to demonstrate that our novel gradient estimator is proper for PMOE, we also compare our method with other two implementations based on the 
Gumbel-Softmax and REINFORCE in Fig.~\ref{fig:sac_MuJoCo} and in Fig.~\ref{fig:ppo_MuJoCo}.
PMOE is testified to provide improvement for general DRL algorithms with stochastic policies on a variety of tasks. 
Training details are provided in Appendix C.

\begin{figure}[ht!]
\centering
\subfigure[Ant-v2]{
	\includegraphics[width=0.17\textheight]{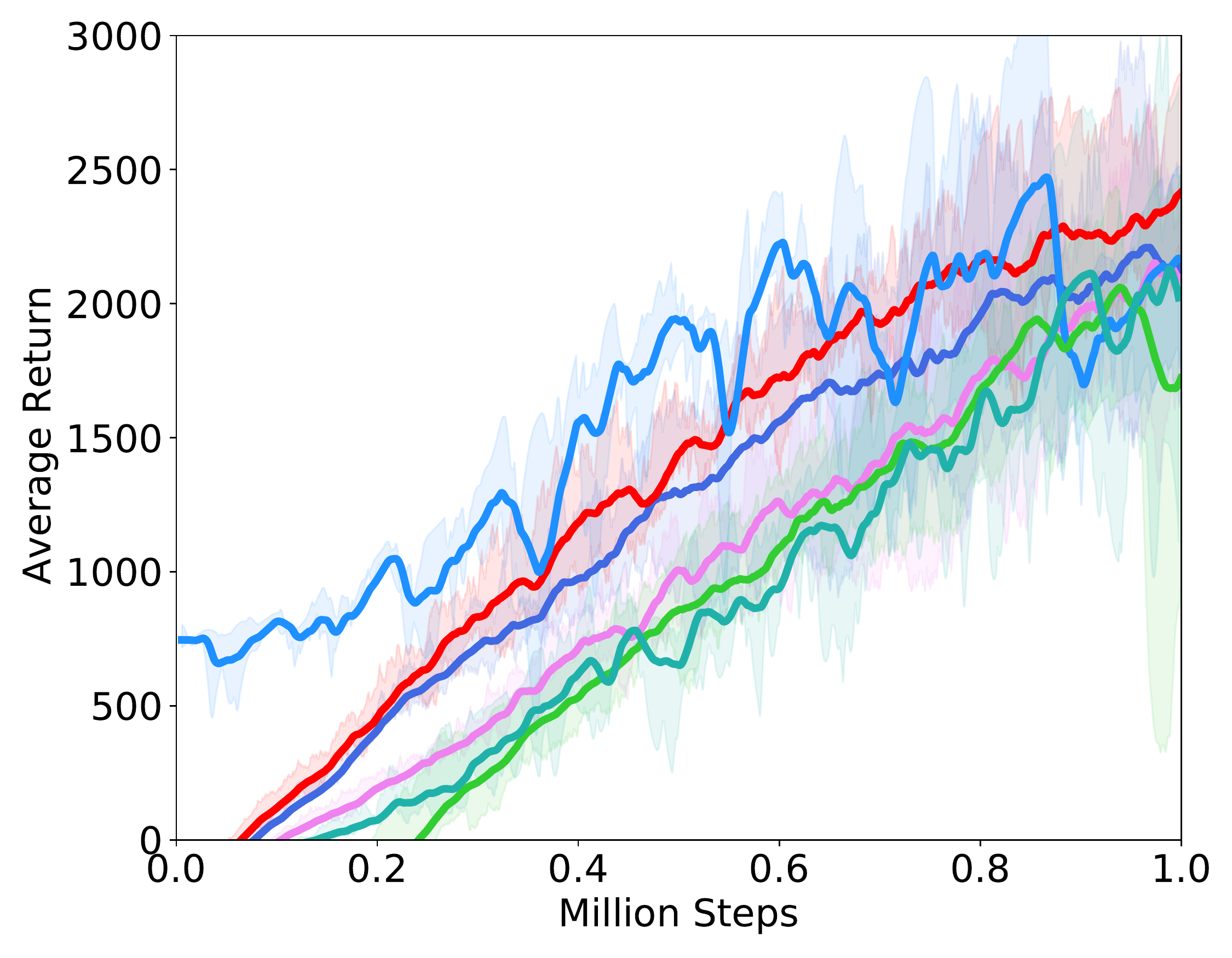}
}
\subfigure[Hopper-v2]{
	\includegraphics[width=0.17\textheight]{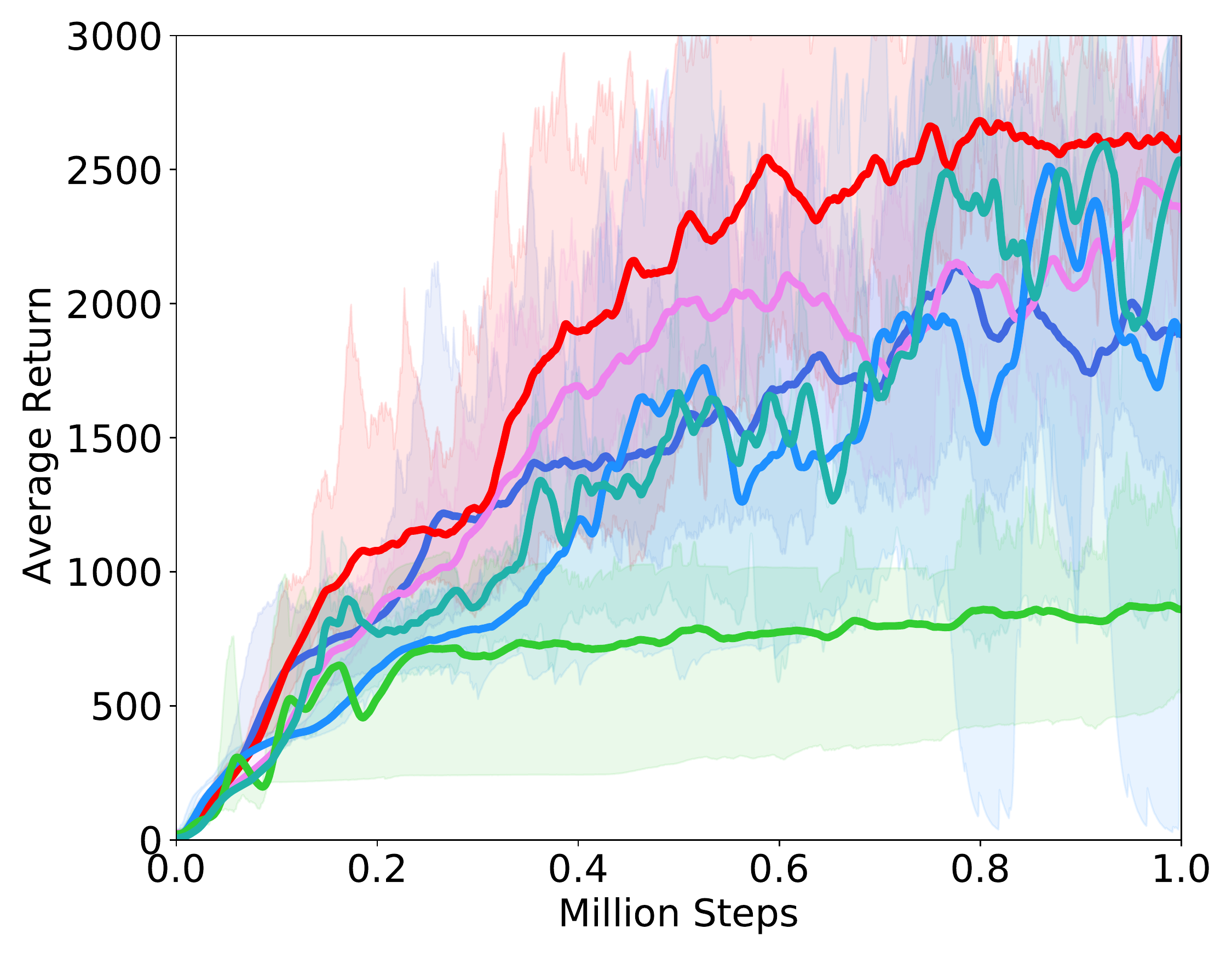}
}
\subfigure[Walker2D-v2]{
	\includegraphics[width=0.17\textheight]{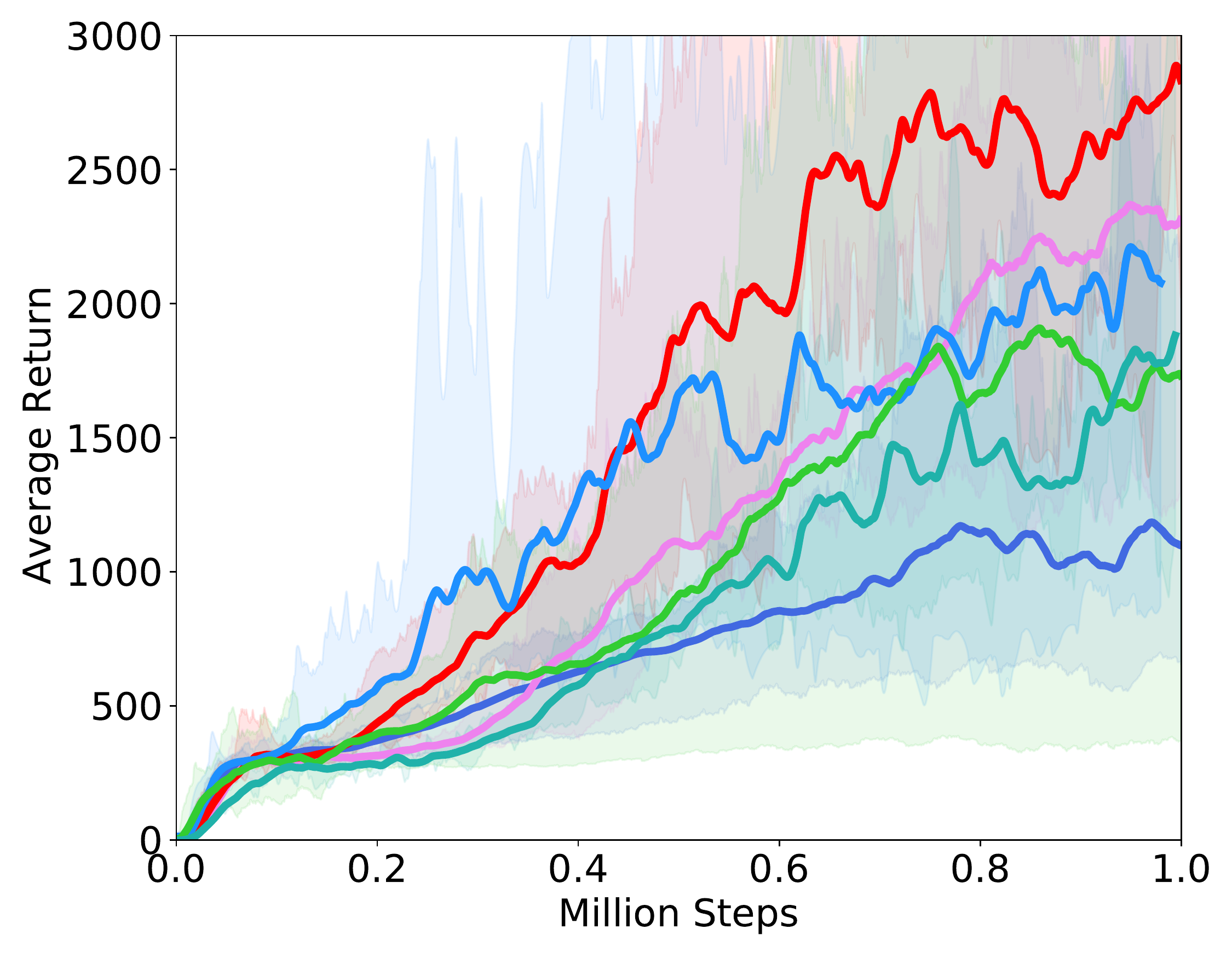}
}
\subfigure[Humanoid-v2]{
	\includegraphics[width=0.17\textheight]{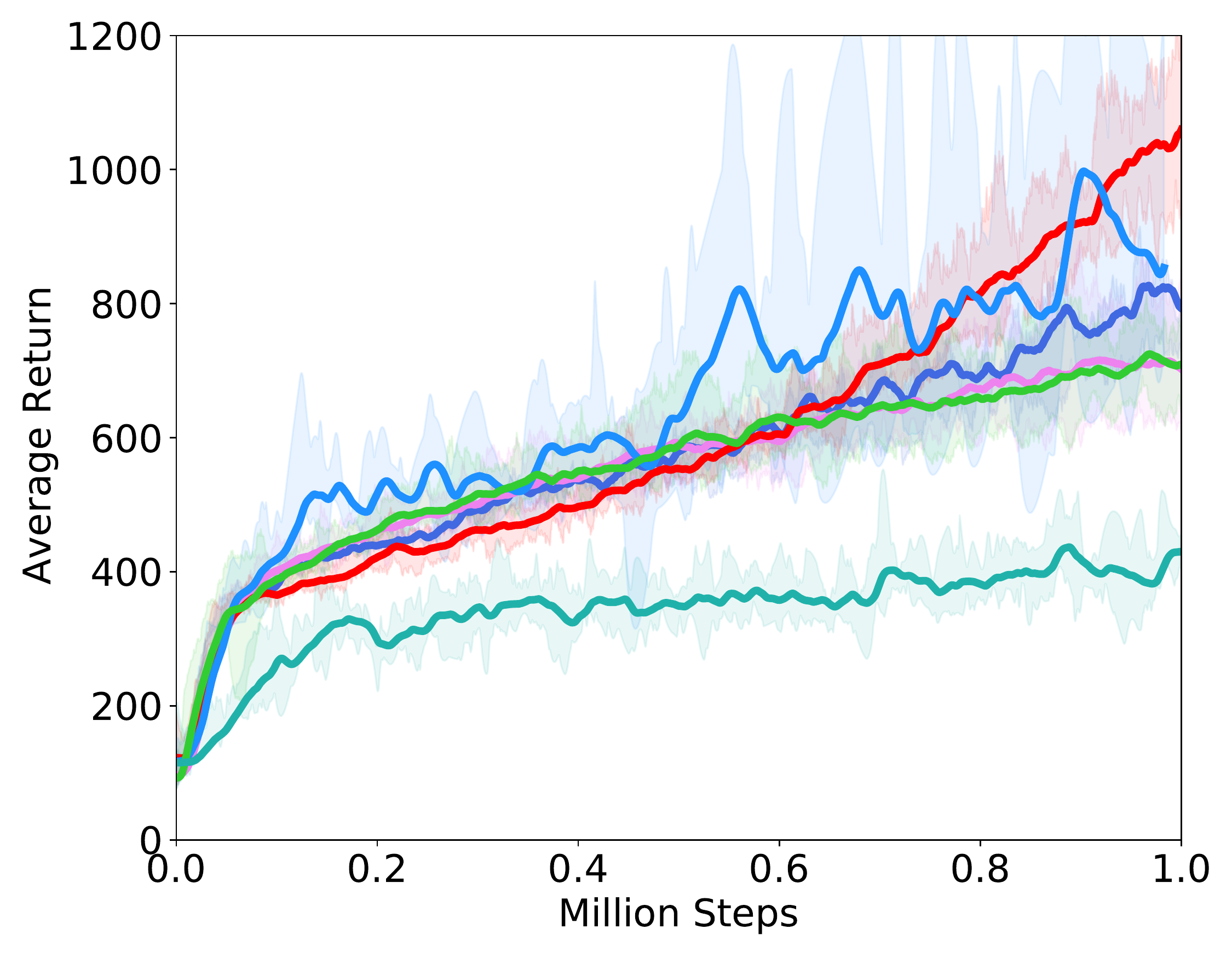}
}
\subfigure[HumanoidStandup-v2]{
	\includegraphics[width=0.17\textheight]{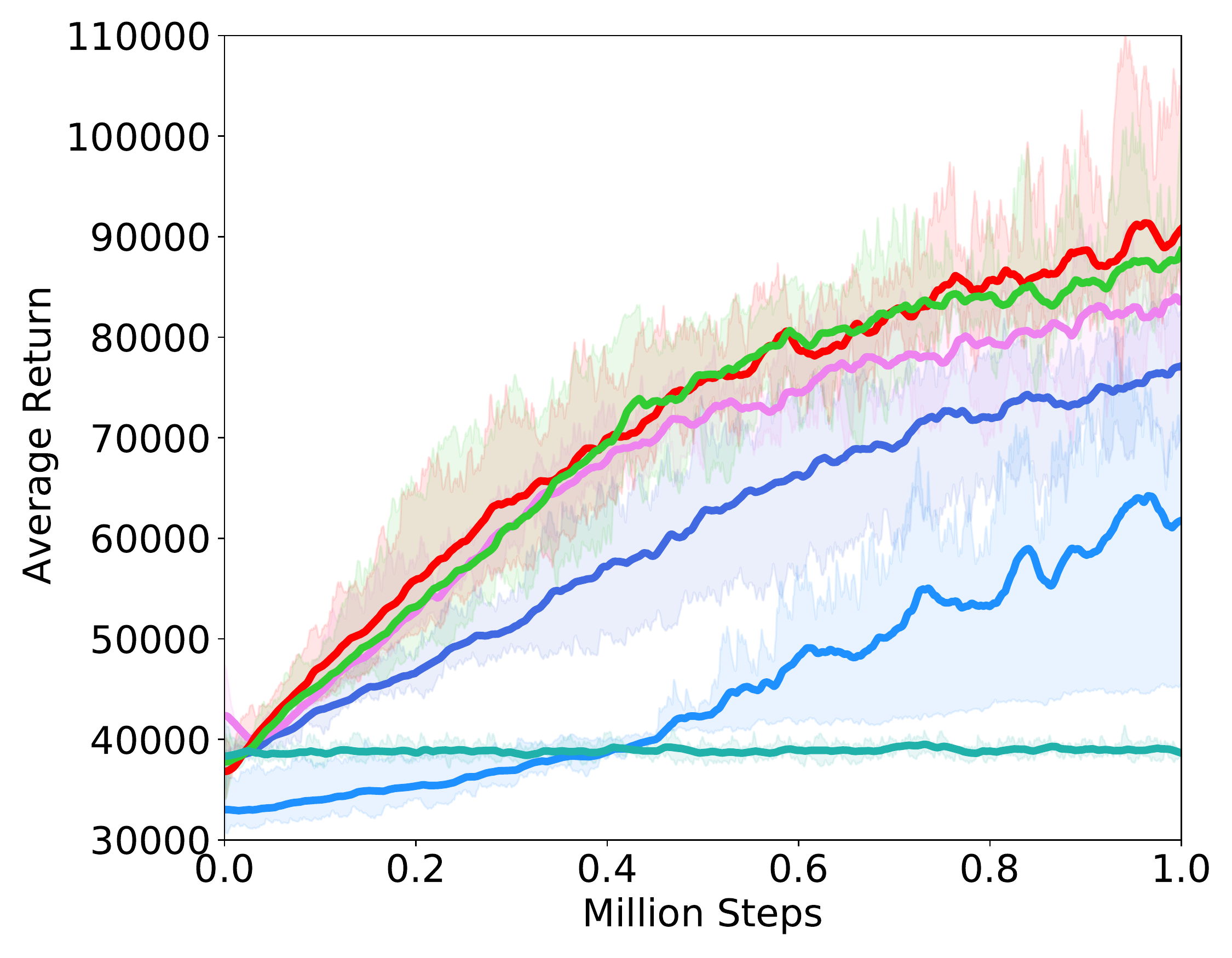}
}
\subfigure[HalfCheetah-v2]{
	\includegraphics[width=0.17\textheight]{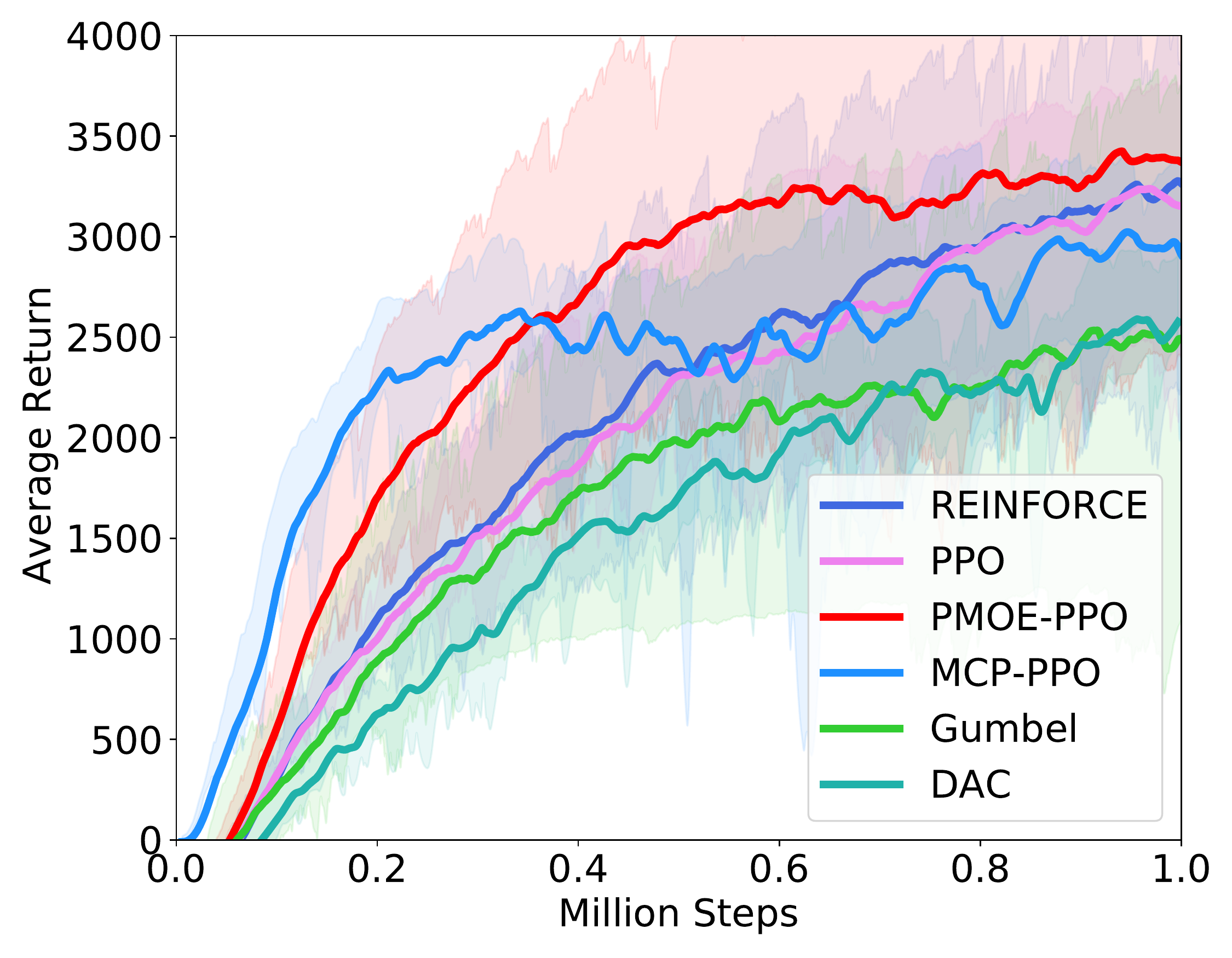}
}
\caption{Training curves on MuJoCo benchmark with PPO-based algorithms. We set a larger number, $K=16$ for \textit{Ant-v2}, $K=12$ for \textit{Hopper-v2}, $K=4$ for \textit{Walker2D-v2} and \textit{HumainoidStandup-v2}, $K=8$ for \textit{Humanoid-v2} and $K=8$ for \textit{HalfCheetah-v2}.
}
\label{fig:ppo_MuJoCo}
\end{figure}

\paragraph{Comparison of AUC}
\label{app:auc}

We compute the AUC (the area under the learning curve) to make the Fig.~\ref{fig:sac_MuJoCo}, Fig.~\ref{fig:ppo_MuJoCo},  Fig.~\ref{fig:compare_k} and Fig.~\ref{fig:k_entropy} more readable.
For SAC-based experiments, we assume the AUC of SAC is 1, and AUC values for all methods are shown in Table~\ref{tab:AUC-sac}.
For PPO-based experiments, we assume the AUC of PPO is 1, and AUC values for all methods are shown in Table~\ref{tab:AUC-ppo}.

\begin{table}[]
\scriptsize
\begin{tabular}{L{1.25cm}|C{0.68cm}C{0.68cm}C{0.73cm}C{0.75cm}C{0.6cm}C{0.6cm}}
          & Walker2D      & Half-Cheetah   & Humanoid      & Humanoid-Standup & Ant           & Hopper        \\ \hline
SAC       & 100.0\%          & 100.0\% & 100.0\%          & 100.0\%            & 100.0\%          & 100.0\%          \\ 
Gating    & 93.4\%           & 95.1\%           & 91.1\%           & 92.5\%             & 88.4\%           & 110.0\%          \\ 
MCP-SAC   & 109.0\%          & \textbf{103.2\%}          & 97.6\%           & 98.3\%             & 96.5\%           & 100.5\%          \\ 
REINFORCE & 60.0\%           & 60.7\%           & 9.8\%            & 73.0\%             & 71.5\%           & 63.7\%           \\ 
Gumbel    & 73.3\%           & 79.7\%           & \textbf{115.7\%} & 96.7\%             & 63.9\%           & 111.9\%          \\ \hline
PMOE-SAC  & \textbf{124.9\%} & 99.6\%           & 113.8\%          & \textbf{110.5\%}   & \textbf{128.4\%} & \textbf{115.5\%}
\end{tabular}
\caption{Comparison of the AUC between PMOE-SAC and other methods on six MuJoCo tasks.}
\label{tab:AUC-sac}
\end{table}

\begin{table}[htbp]
\scriptsize
\begin{tabular}{L{1.25cm}|C{0.68cm}C{0.68cm}C{0.73cm}C{0.75cm}C{0.6cm}C{0.6cm}}
          & Walker2D      & Half-Cheetah   & Humanoid      & Humanoid-Standup & Ant           & Hopper        \\ \hline
PPO       & 100.0\%          & 100.0\%          & 100.0\%          & 100.0\%            & 100.0\%          & 100.0\%          \\ 
DAC       & 73.9\%           & 75.2\%           & 61.0\%           & 57.6\%             & 87.9\%           & 90.9\%           \\ 
MCP-PPO   & 111.1\%          & 118.1\%          & \textbf{113.6\%} & 66.5\%             & \textbf{164.3\%} & 82.7\%           \\ 
REINFORCE & 63.0\%           & 103.6\%          & 102.3\%          & 89.1\%             & 124.6\%          & 90.5\%           \\
Gumbel    & 88.9\%           & 83.4\%           & 100.2\%          & 103.8\%            & 81.2\%    		   & 44.9\%			  \\ \hline
PMOE-PPO  & \textbf{138.4\%} & \textbf{126.6\%} & 107.5\% & \textbf{105.3\%}   & 139.3\%          & \textbf{120.7\%}
\end{tabular}
\caption{Comparison of the AUC between PMOE-PPO and other methods on six MuJoCo tasks.}
\label{tab:AUC-ppo}
\end{table}

\subsection{Experiment Analysis}
We provide in-depth analysis of the proposed PMOE to analysis the differences between PMOE method and other methods in RL process, and estimate the effects of the number of primitives. 
\paragraph{Distinguishable Primitives.}
Firstly, for a straight illustration of the distinguishable primitives,  we use a self-defined \emph{target-reaching} sparse reward environment as a toy example to analyse our method.
In this environment, the agent starts from a fixed initial position, then acts to reach a random target position within a certain range and avoids the obstacles on the path.
Only when the agent reaches the target position successfully, the agent can get the reward. 
As the reward setting is sparse, the exploration can be important, so we also analyse the exploration behaviours in this environment.
Details about this environment is provided in Appendix B. 

Fig.~\ref{fig:trajs_myenv} demonstrates the distinguishable primitives learned with PMOE on the target-reaching environment, 
for providing a simple and intuitive understanding. After the training stage, we sample 10 trajectories for each method and visualise them 
in Fig.~\ref{fig:trajs_myenv}. As we can see in Fig.~\ref{fig:traj_specified_primitive}, PMOE trained in back-propagation-max manner generates 
two distinguishable trajectories for different primitives.

In Fig.~\ref{fig:tsne_hopper}, 
we further demonstrate that PMOE can learn distinguishable primitives on more complex MuJoCo environments with the t-SNE~\citep{TSNE} method for visualisation. 
We sample 10K states $\{s_t;t=1, 2, \cdots, 10K\}$ from 10 trajectories and use t-SNE to perform dimensionality reduction on states and 
visualise the results in Fig.~\ref{fig:tsne_states}. We randomly choose one state cluster and sample actions corresponding to the states 
in that cluster. Then we use t-SNE to perform dimensionality reduction on those actions. The reason for taking a state clustering process 
before the action clustering is to reduce the variances of data in state-action spaces so that we can better visualise the action primitives 
for a specific cluster of states. The visualisation of action clustering with our approach and the gating operation are displayed 
in Fig.~\ref{fig:tsne_gating} and Fig.~\ref{fig:tsne_k4}. More t-SNE visualisations for other MuJoCo environments can be found in Appendix E. 
Our proposed PMOE method is testified to have stronger capability in distinguishing different primitives during policy learning process.

\begin{figure*}[ht!]
\centering
	\begin{tikzpicture}
    \node[anchor=south west,inner sep=0] at (0,0) {
    \begin{minipage}[]{0.5\textheight}
	\centering
	\subfigure[ ]{
		\includegraphics[width=0.1\textheight]{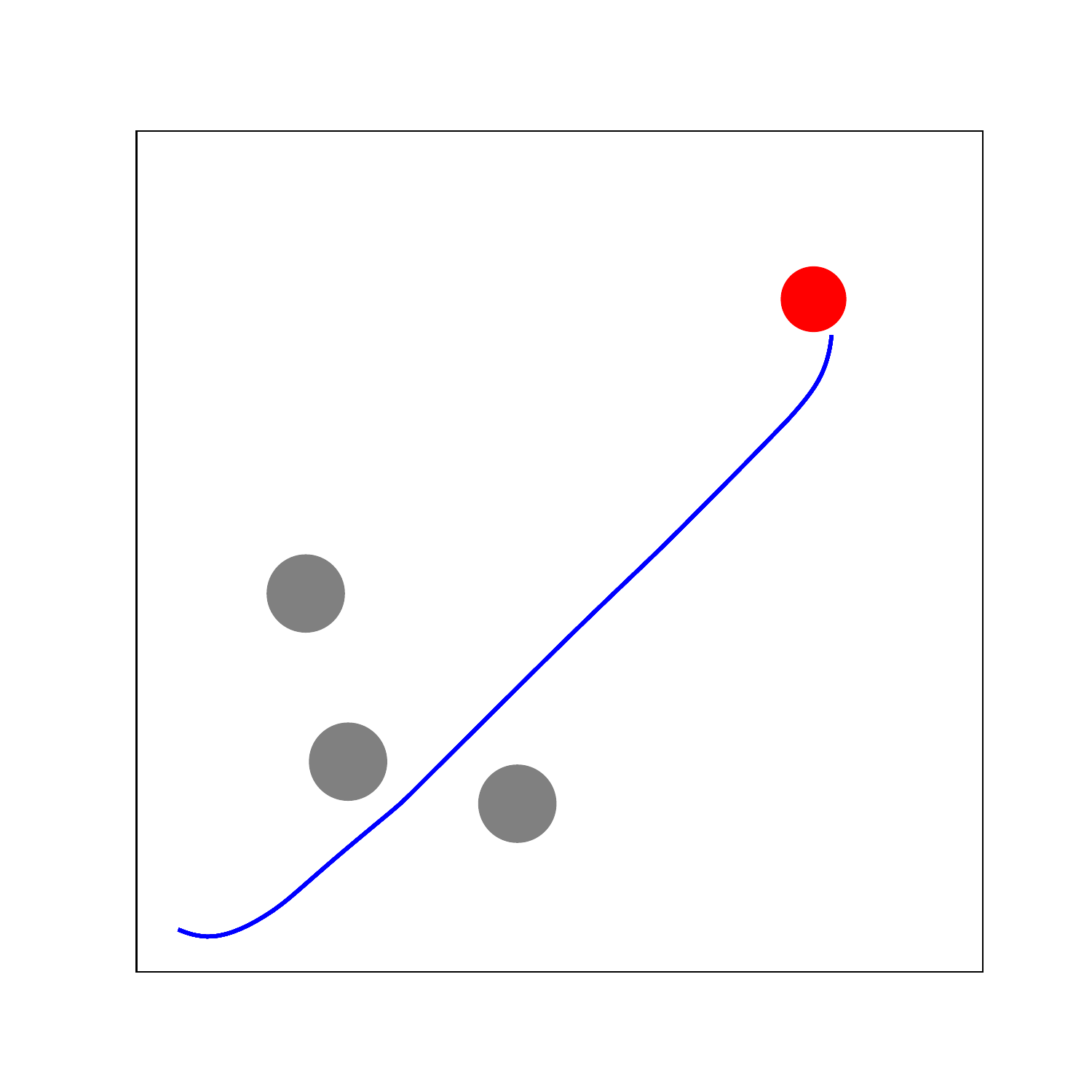}
		\label{fig:traj_k1}
	}
	\subfigure[ ]{
		\includegraphics[width=0.1\textheight]{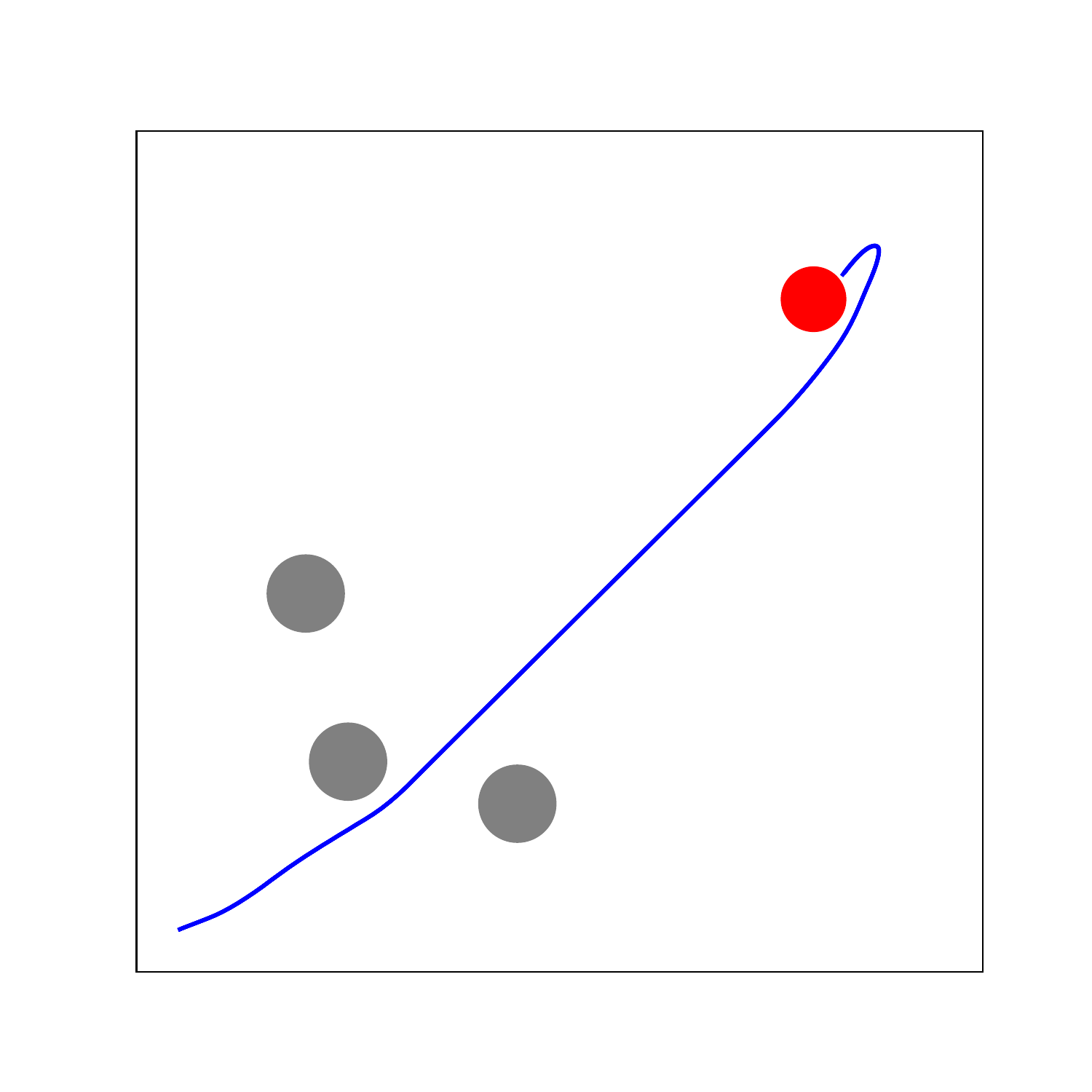}
		\label{fig:traj_k4_gating}
	}
	\subfigure[ ]{
		\includegraphics[width=0.1\textheight]{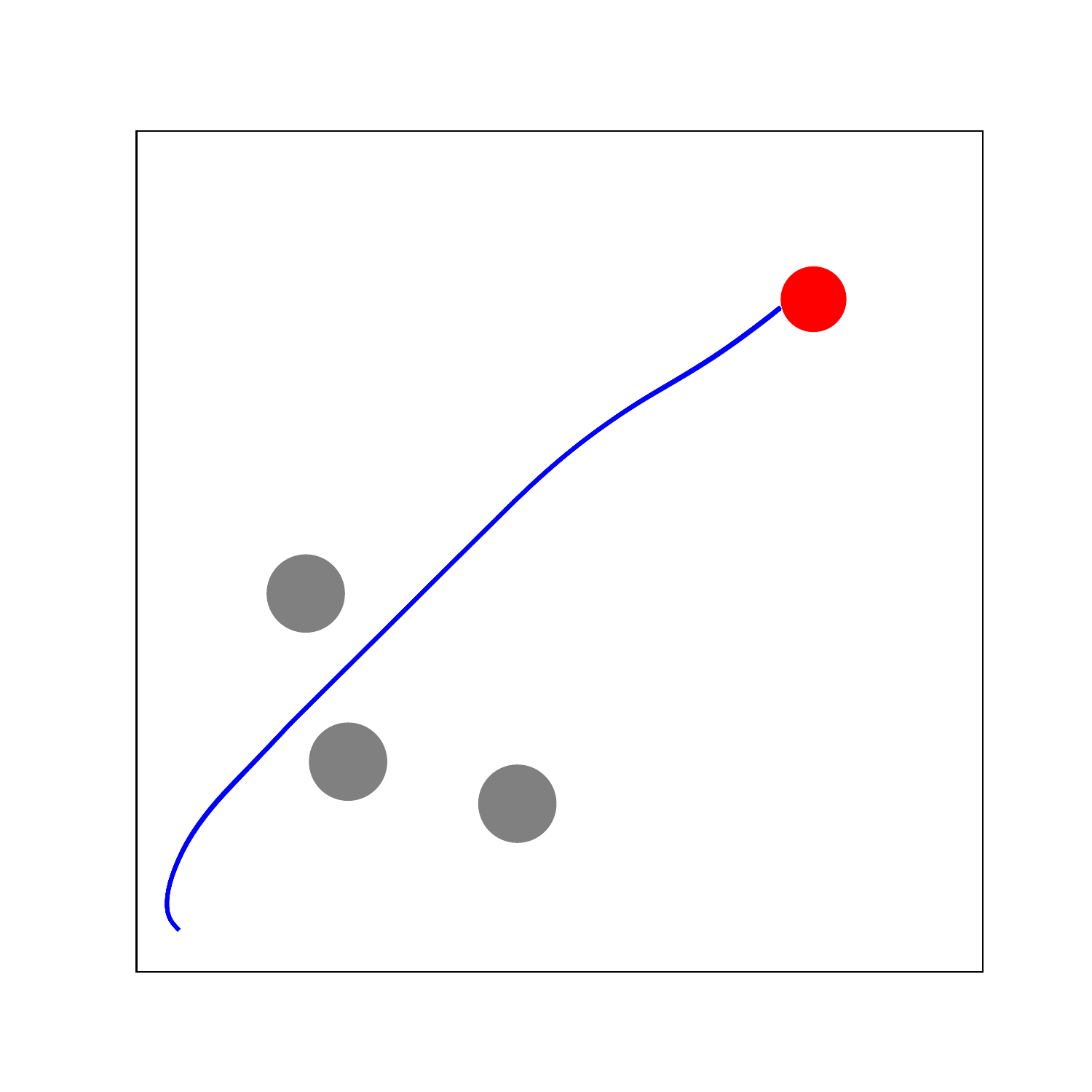}
		\label{fig:traj_k4_together}
	}
	\subfigure[ ]{
		\includegraphics[width=0.1\textheight]{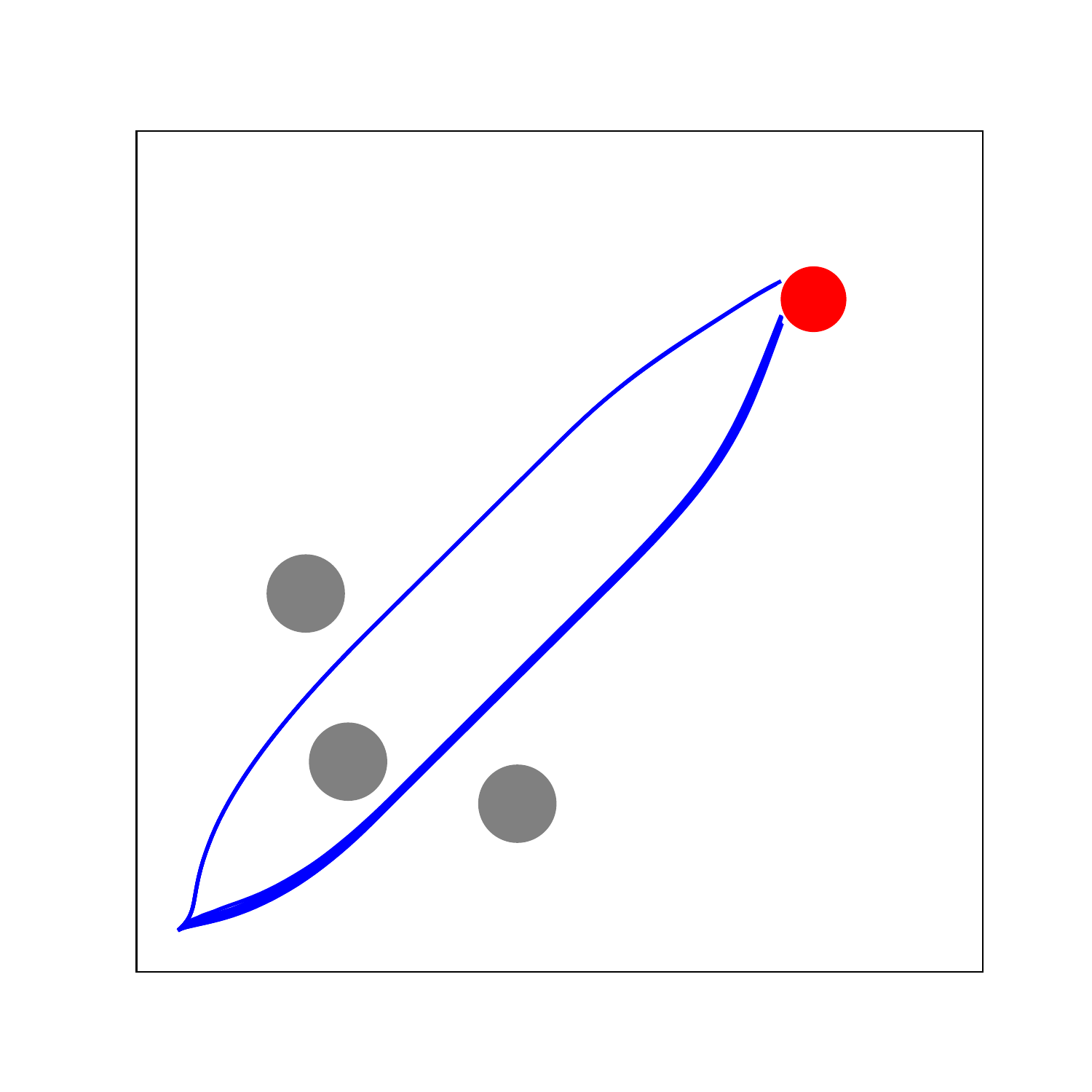}
		\label{fig:traj_k4_single}
	}
	\end{minipage}
	\subfigure[ ]{
    \begin{minipage}[]{0.08\textheight}
    \centering
	\includegraphics[width=0.07\textheight]{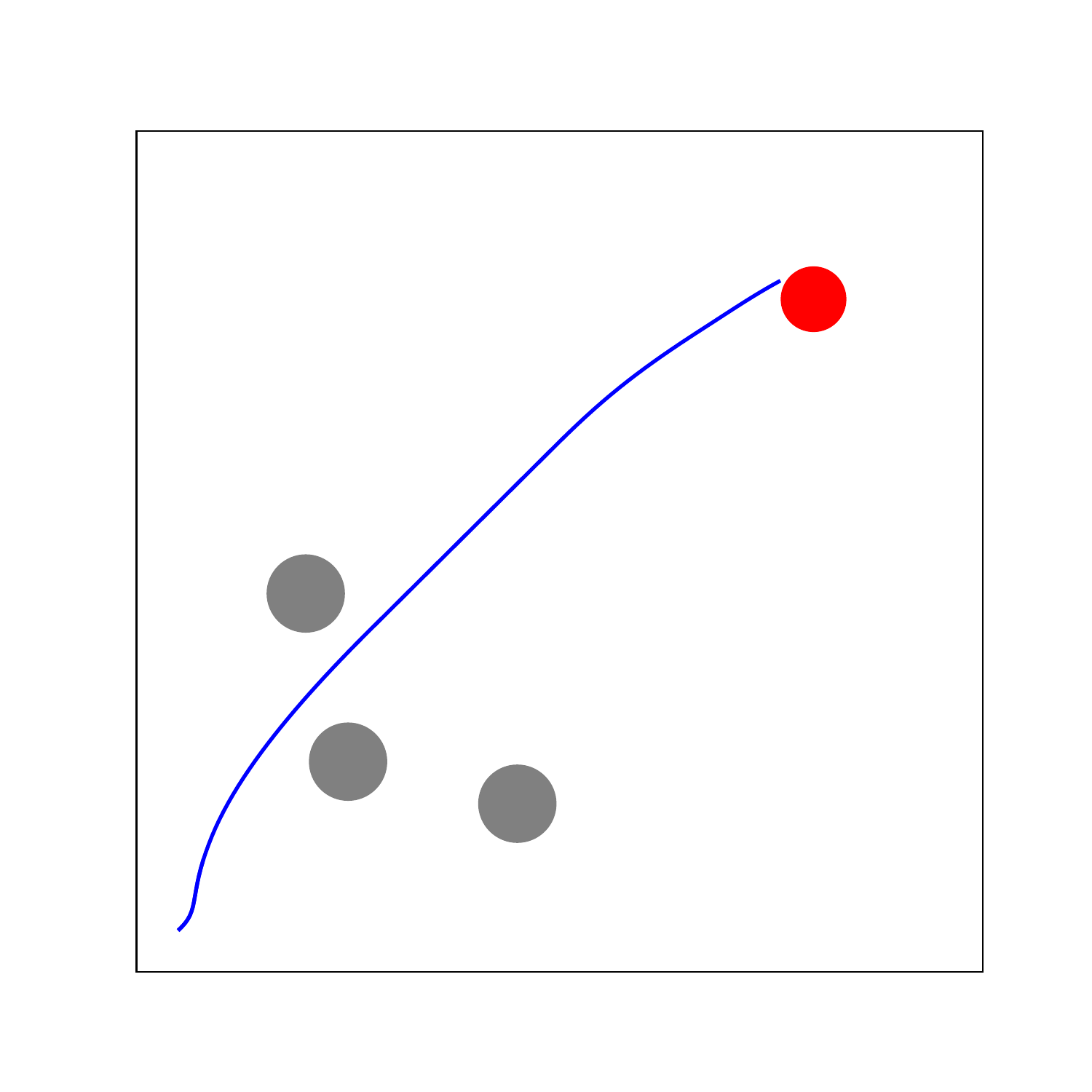}
	\includegraphics[width=0.07\textheight]{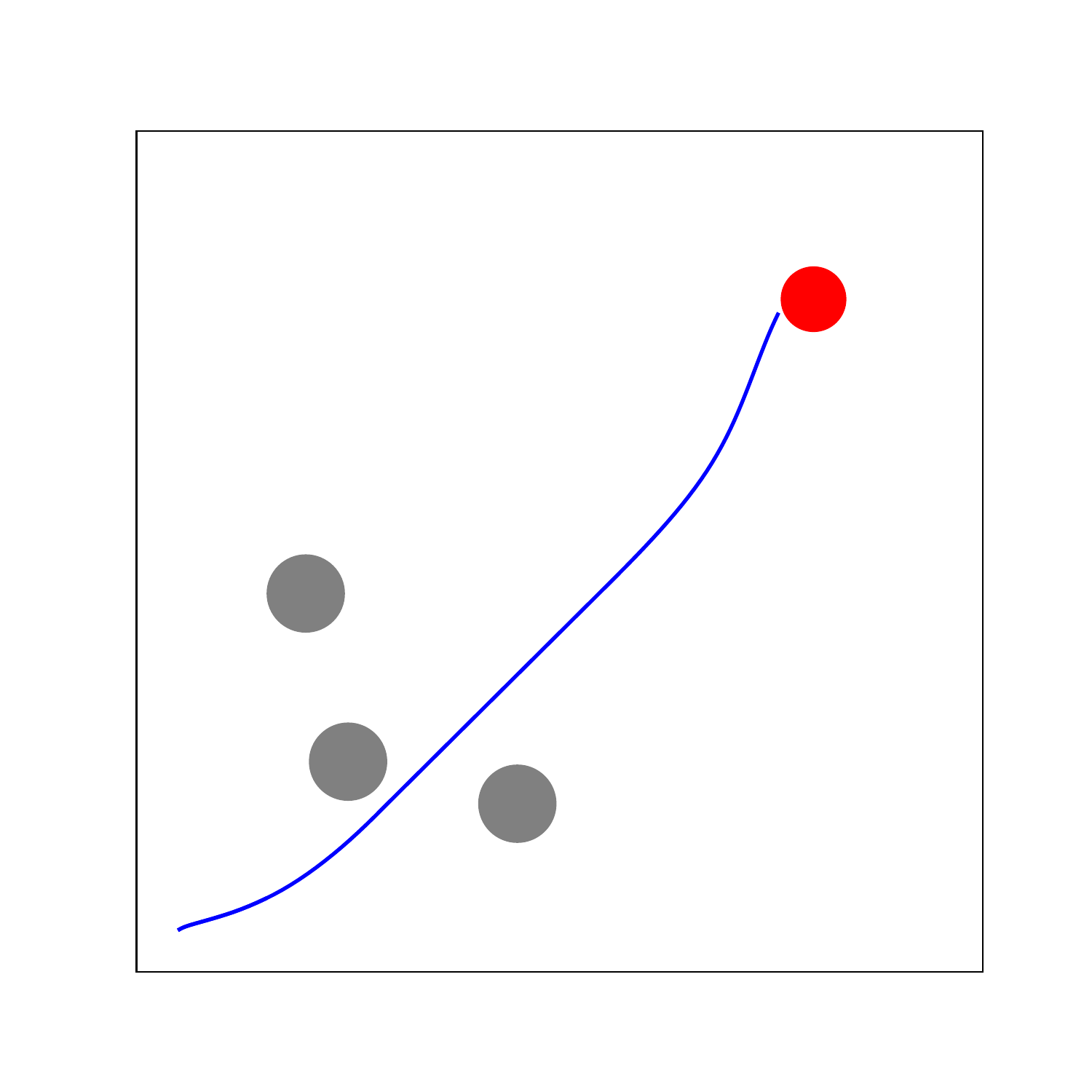}
	\end{minipage}
	\label{fig:traj_specified_primitive}
	}
	};
    \coordinate (start) at (11.3, 3.5);
    \coordinate (end) at (11.3, 0.5);
    
    
    \coordinate (upstart) at (9.5, 2.8);
    \coordinate (upend) at (11.7, 3.0);
    \coordinate (bottomstart) at (9.8, 2.5);
    \coordinate (bottomend) at (11.7, 1.0);
    
    \draw[very thick, dashed, gray] (start) -- (end);
    \draw[very thick, ->] (upstart) to[out=85,in=150] node {} (upend);
    \draw[very thick, ->] (bottomstart) to[out=-45,in=150] node {} (bottomend);
    
	\end{tikzpicture}
	\caption{Trajectories of the agents with our method and the baselines in the target-reaching environment. We fix the reset locations of target, obstacles and agent. ($a$), ($b$), ($c$) and ($d$) visualise the 10 trajectories collected with methods involving: original SAC, gating operation with SAC, back-propagation-all PMOE (discussed in Sec.~\ref{sec:backprog_all}) and back-propagation-max PMOE, respectively.
	($e$) shows the trajectories collected with two individual primitives with our approach. }
	\label{fig:trajs_myenv}
\end{figure*}

\label{sec:tsne}
\begin{figure*}[ht!]
\centering
\begin{tikzpicture}
\node[anchor=south west,inner sep=0] at (0,0) {
	\subfigure[Gating operation.]{
		\label{fig:tsne_gating}
		\includegraphics[width=0.2\textheight]{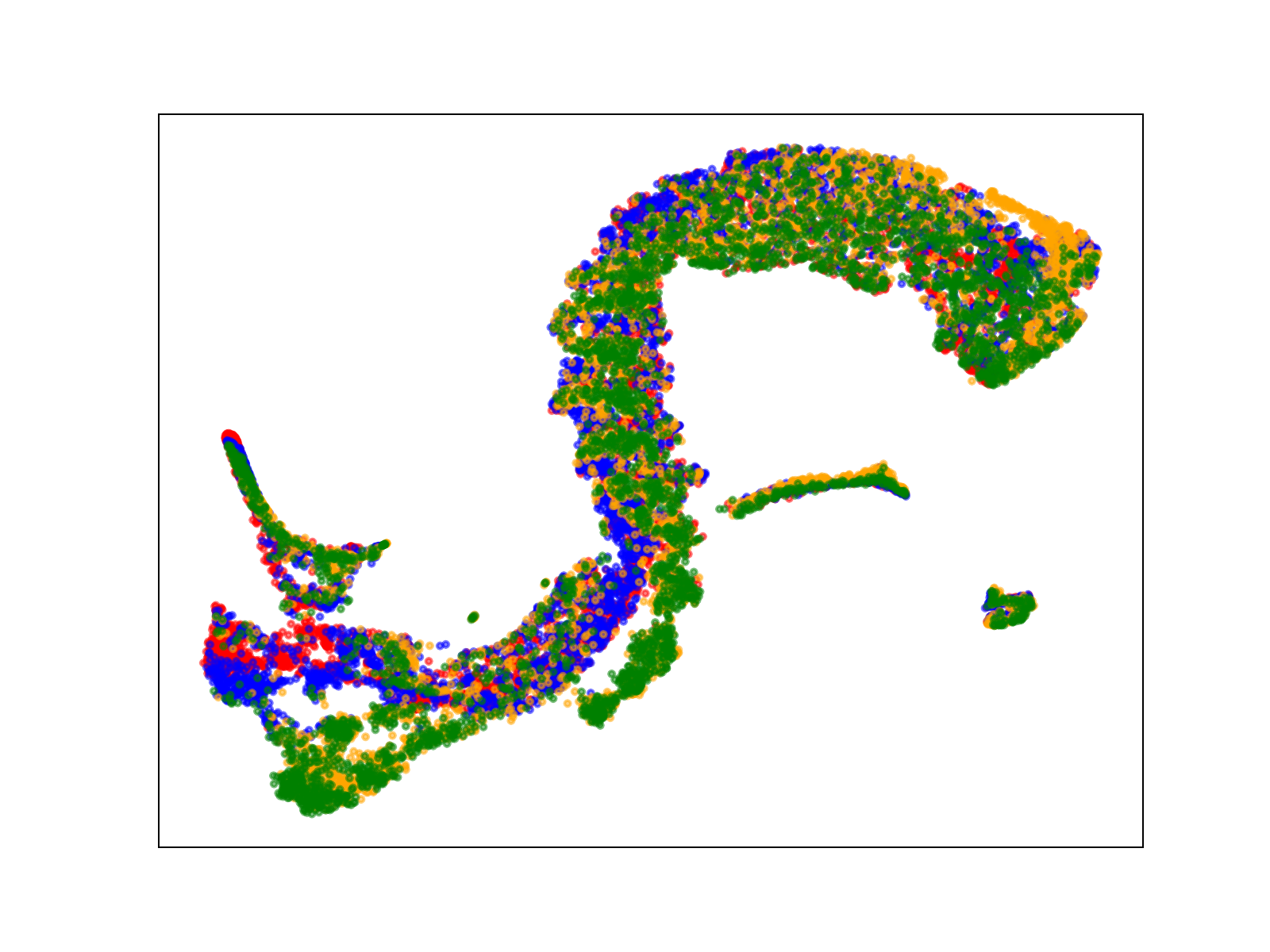}}
	\subfigure[States clusters.]{
		\label{fig:tsne_states}
		\includegraphics[width=0.2\textheight]{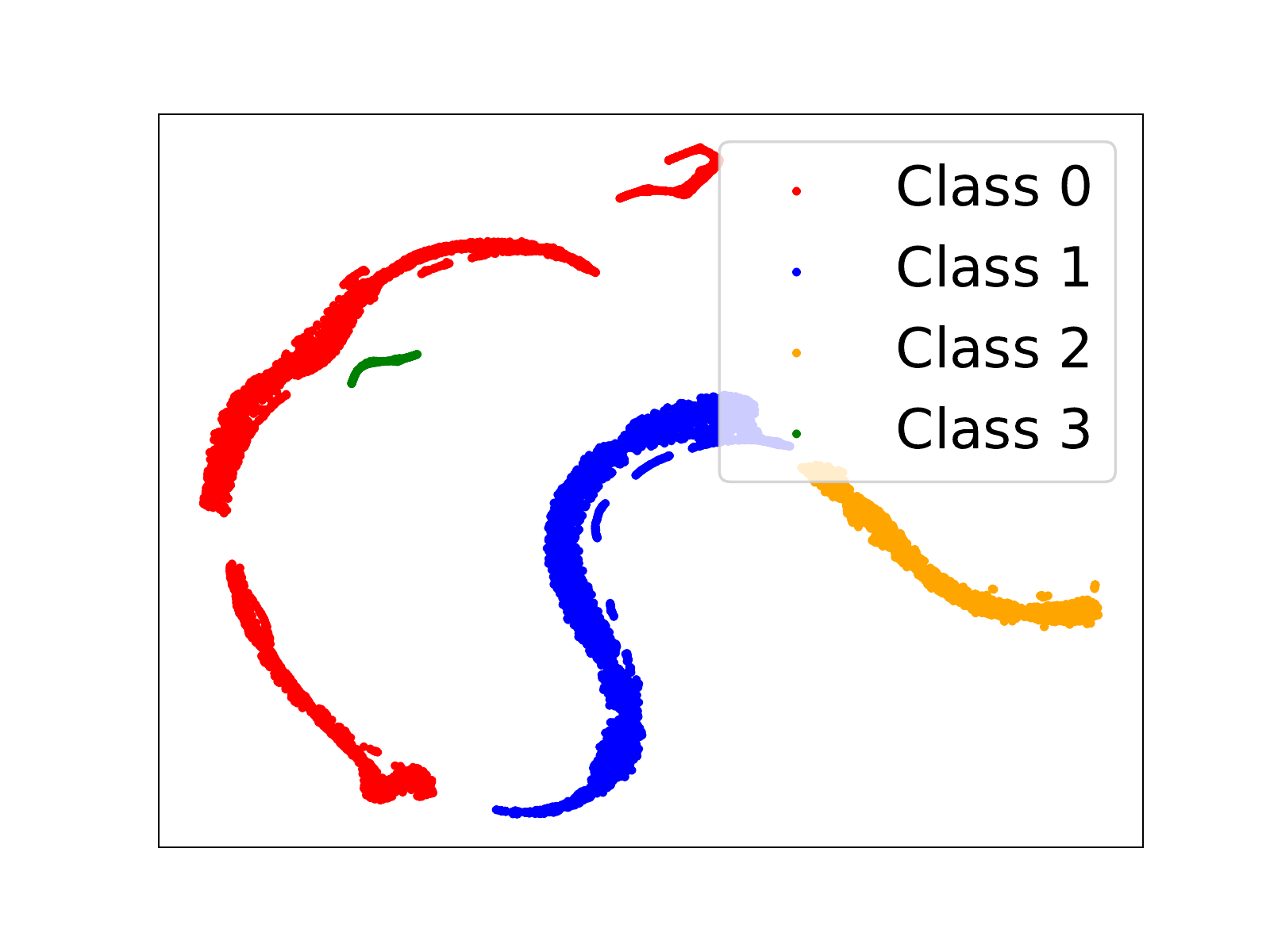}}
	\subfigure[Our approach.]{
		\label{fig:tsne_k4}
		\includegraphics[width=0.2\textheight]{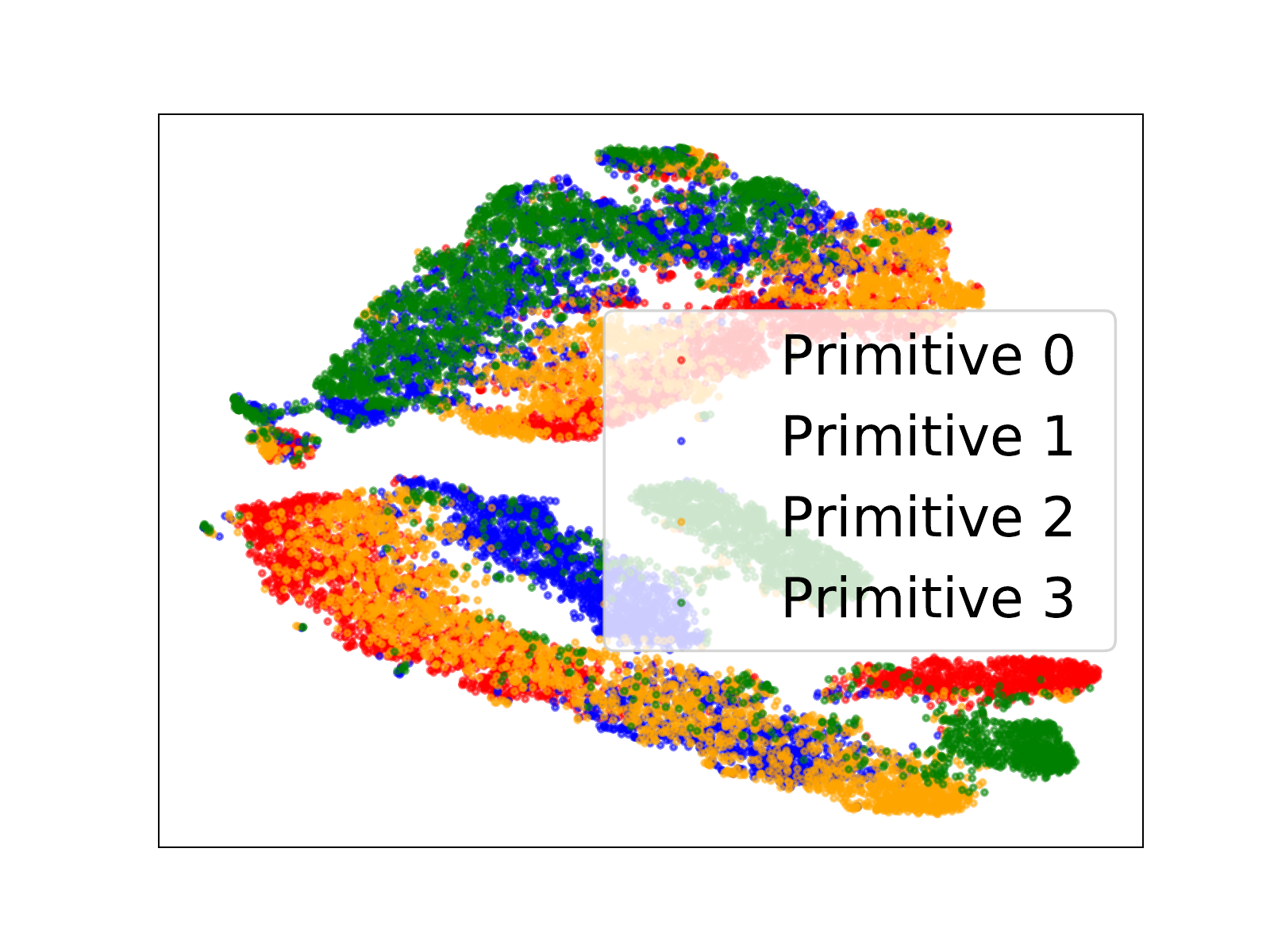}}
};
\coordinate (startleft) at (6.5, 2.5);
\coordinate (startright) at (8, 1.5);
\coordinate (gatingend) at (4.5, 2);
\coordinate (k4end) at (10, 2);
\draw[very thick, ->] (startleft) to[out=-135,in=45] node {} (gatingend);
\draw[very thick, ->] (startright) to[out=-45,in=-135] node {} (k4end);
\draw[dashed, gray, thick, rounded corners] (6.6, 1.0) rectangle (7.8, 2.9);
\end{tikzpicture}
\caption{Visualisation of distinguishable primitives learned with PMOE using t-SNE plot on \textit{Hopper-v2} environment. The states are first clustered as in ($b$). Then actions within the same state cluster are plotted with t-SNE as in ($a$) and ($c$) for the gating method and our approach, respectively. Our method clearly demonstrates more distinguishable primitives for the policy.}
\label{fig:tsne_hopper}
\end{figure*}

\paragraph{Exploration Behaviours.} Fig.~\ref{fig:compare_traj} demonstrates the exploration trajectories in the target-reaching environment, 
although all trajectories start from the same initial positions, our methods demonstrate larger exploration ranges compared against other 
baseline methods, which also yields a higher visiting frequency to the target region (in green) and therefore accelerates the learning process. 
To some extent, this comparison can be one reason for the improvement of using PMOE as the policy representation over a general unimodal policy. 
We find that by leveraging the mixture models, the agents gain effective information more quickly via different exploration behaviours, 
which cover a larger range of exploration space at the initial stages of learning and therefore ensure a higher ratio of target reaching.

\begin{figure*}[ht!]
	\centering
	\begin{tikzpicture}
    \node[anchor=south west,inner sep=0] at (0,0) {
	\subfigure[Ours]{
		\includegraphics[width=0.2\textheight]{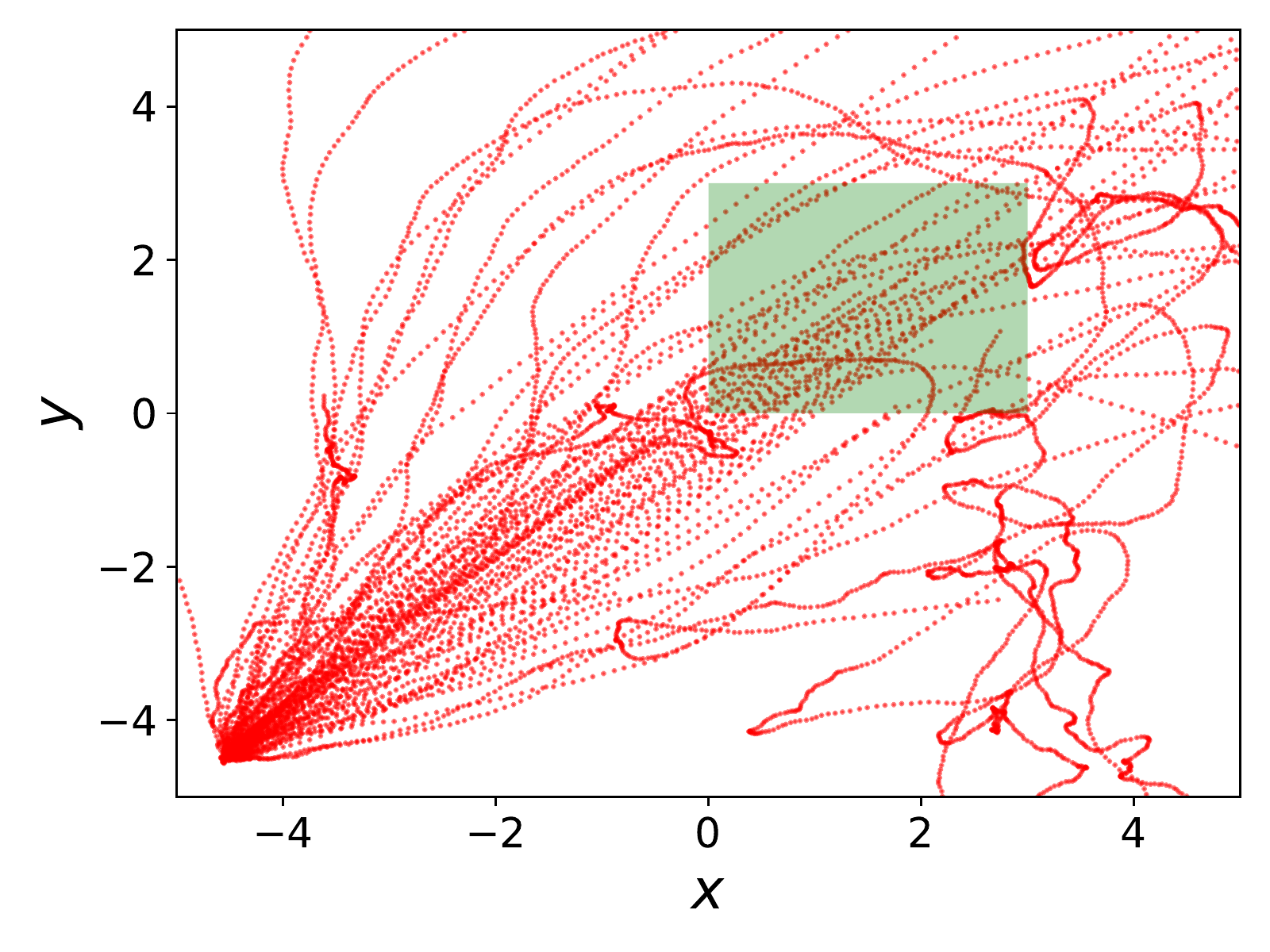}
	}
		\subfigure[Success Rate in Learning]{
		\includegraphics[width=0.18\textheight]{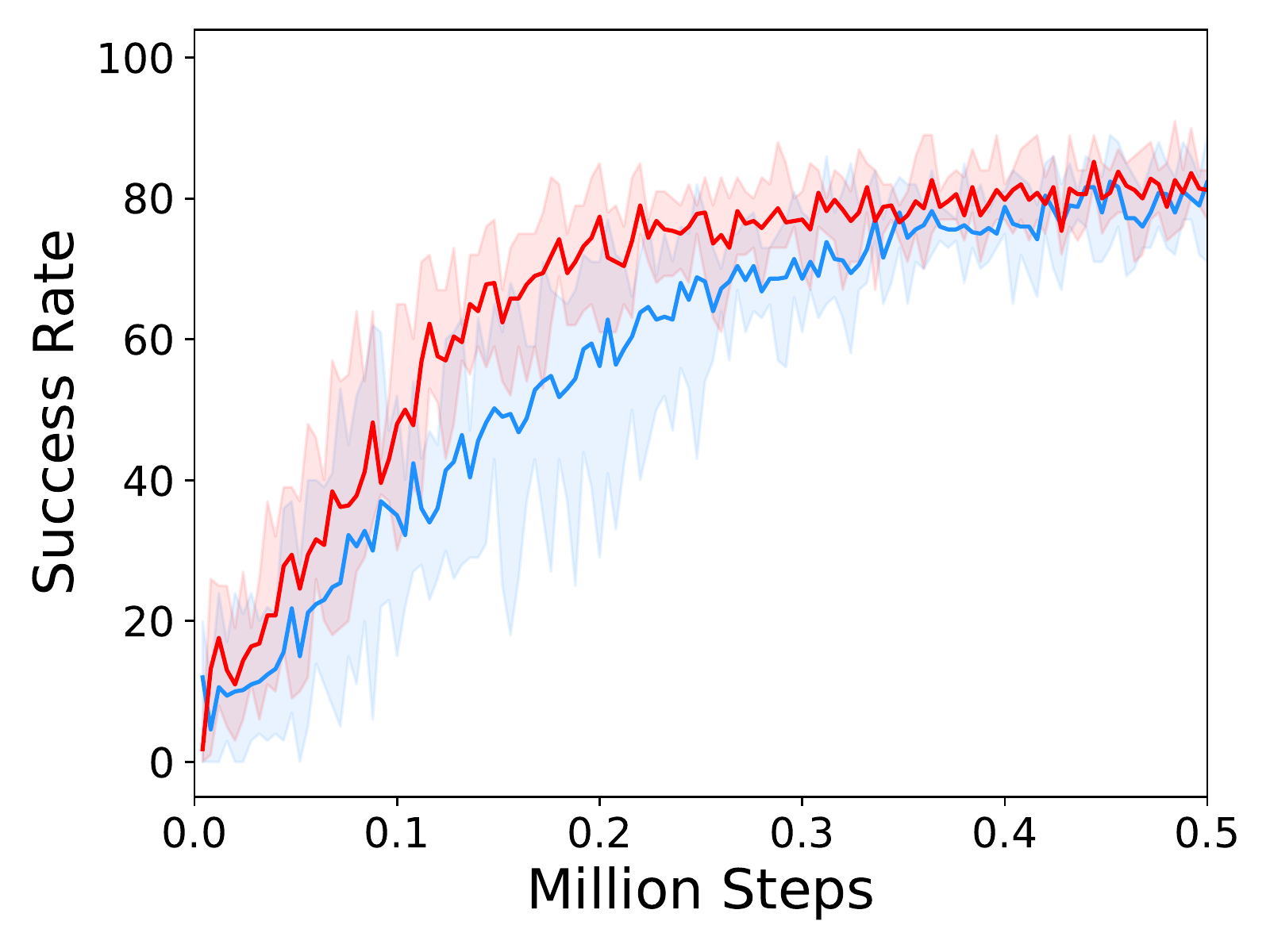}
	}
	\subfigure[SAC]{
		\includegraphics[width=0.2\textheight]{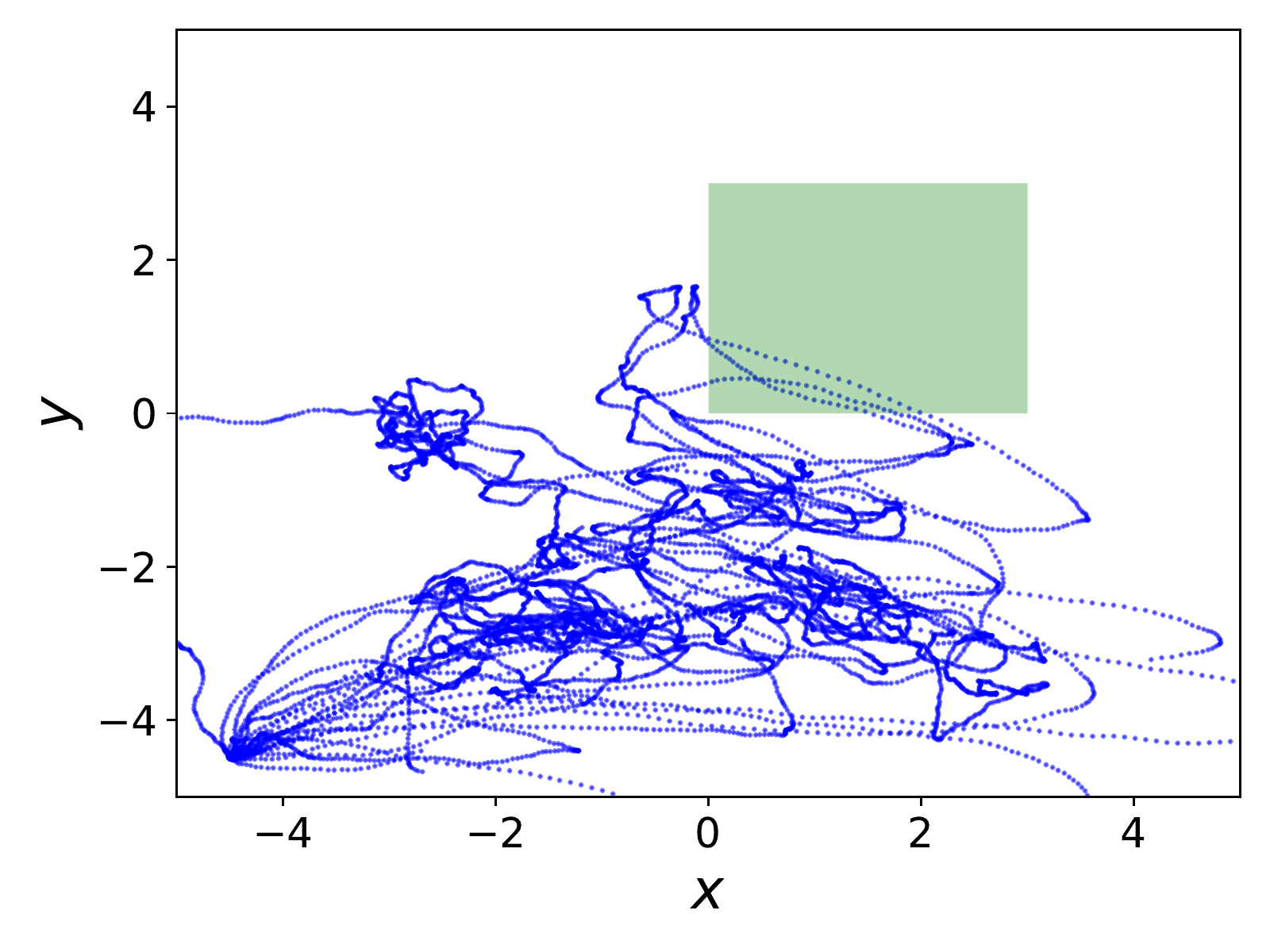}
	}
	};
    \coordinate (startleft) at (5.7, 1.7);
    \coordinate (startright) at (5.75,1.45);
    \coordinate (oursend) at (4.3, 1.5);
    \coordinate (sacend) at (9.5, 2);
    \draw[very thick, ->] (startleft) to[out=-150,in=-10] node {} (oursend);
    \draw[very thick, ->] (startright) to[out=-10,in=-150] node {} (sacend);
    \filldraw[gray, opacity=0.5] (5.65, 1.3) rectangle (5.85, 1.9);
	\end{tikzpicture}
	\caption{Visualisation of exploration trajectories in the initial training stage for the target-reaching environment. The initial $10K$ steps (the grey region on the learning curves in ($b$)) of exploration trajectories are plotted in ($a$) and ($c$) for our PMOE method (red) and SAC (blue), respectively. The green rectangle is the target region.}
	\label{fig:compare_traj}
\end{figure*}

\begin{figure}[ht!]
    \begin{center}
        \includegraphics[scale=0.21]{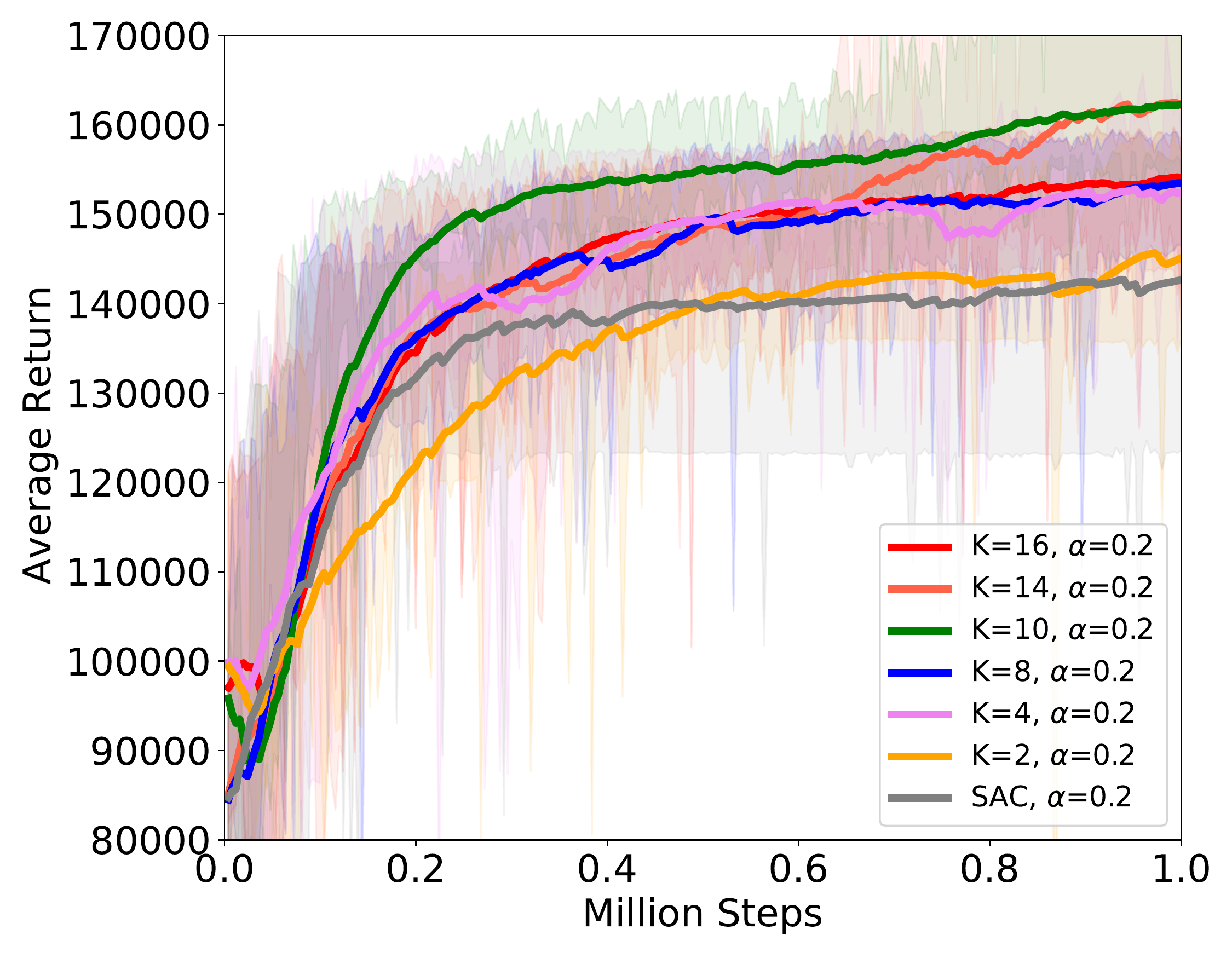}
    \end{center}
    \caption{
        Comparison of different numbers of primitives $K$ in terms of average returns on \textit{HumanoidStandup-v2} environment. For each case, we conduct 5 runs and take the means. The performance increases when $K$ increase from $2$ to $10$, but decreases if $K$ keep increasing from $10$ to $16$.
    }
    \label{fig:compare_k}
\end{figure}

\paragraph{Number of Primitives.} We investigate the effects caused by different numbers of primitives in GMM, 
as shown in Fig.~\ref{fig:compare_k}. This experiment is conducted on a relatively complex environment --- \textit{HumanoidStandup-v2} 
that has observations with a dimension of 376 and actions with a dimension of 17, therefore various skills could possibly lead to the goal of the task. The number of primitives $K$ is selected from $\{2, 4, 8, 10, 14, 16\}$ and other hyperparameters are the same. The results show that $K=10$ seems to perform the best, and $K=2$ performs the worst among all settings, showing that increasing the number of primitives can improve the learning efficiency in some situations. For Table~\ref{tab:AUC-k}, we assume the AUC of PMOE-SAC with $K=2$ is 1, and relative AUC values for all the methods are displayed.



\begin{table}[ht!]
\small
    \centering
    \begin{tabular}{c|c}
    \textbf{Number of K}&   AUC  \\
    \hline
    \textbf{2}  & 100\% \\    
    \textbf{4}  & 107.1\% \\    
    \textbf{8}  & 106.5\% \\    
    \textbf{10}  & \textbf{111.7}\% \\    
    \textbf{14}  & 108.0\% \\    
    \textbf{16}  & 107.0\% \\ \hline
    \end{tabular}
    \label{tab:AUC-k}
    \caption{Comparison of the AUC as a function of $K$ for PMOE-SAC algorithm on \textit{HumanoidStandup-v2} environment.}
\end{table}

To analyse the relationship of different $K$ and different entropy regularisation($\alpha$), 
we also compared 7 settings in the MuJoCo task \textit{HumanoidStandup-v2}, where $K$ is the number of primitives and $\alpha$ is the entropy regularisation. For each setting, we randomly choose 5 seeds to plot the learning curves in  Fig.~\ref{fig:k_entropy}. 
For  Table~\ref{tab:AUC-entropy},  weassume the AUC of SAC with $\alpha$ =  10 is 1, and relative AUC values for all the methods are displayed.

\begin{table}[ht!]
\small
\centering
    \begin{tabular}{l|c}
    Settings           & AUC              \\ 
                      \hline
    SAC, $\alpha$=10   & 100.0\%          \\
    SAC, $\alpha$=1    & 121.7\%          \\
    SAC, $\alpha$=0.2  & 108.3\%          \\
    SAC, $\alpha$=0.05 & 108.5\%          \\
    K=4, $\alpha$=10   & 57.3\%           \\
    K=4, $\alpha$=1    & \textbf{125.7\%} \\
    K=4, $\alpha$=0.2  & 114.7\%          \\
    K=4, $\alpha$=0.05 & 116.0\%          \\ \hline
\end{tabular}
\label{tab:AUC-entropy}
\caption{Comparison of the AUC as a function of the relation between $K$ and $\alpha$ for PMOE-SAC algorithm on \textit{HumanoidStandup-v2} environment.}
\end{table}

\begin{figure}[ht!]
    \centering
    \includegraphics[width = 0.3\textwidth]{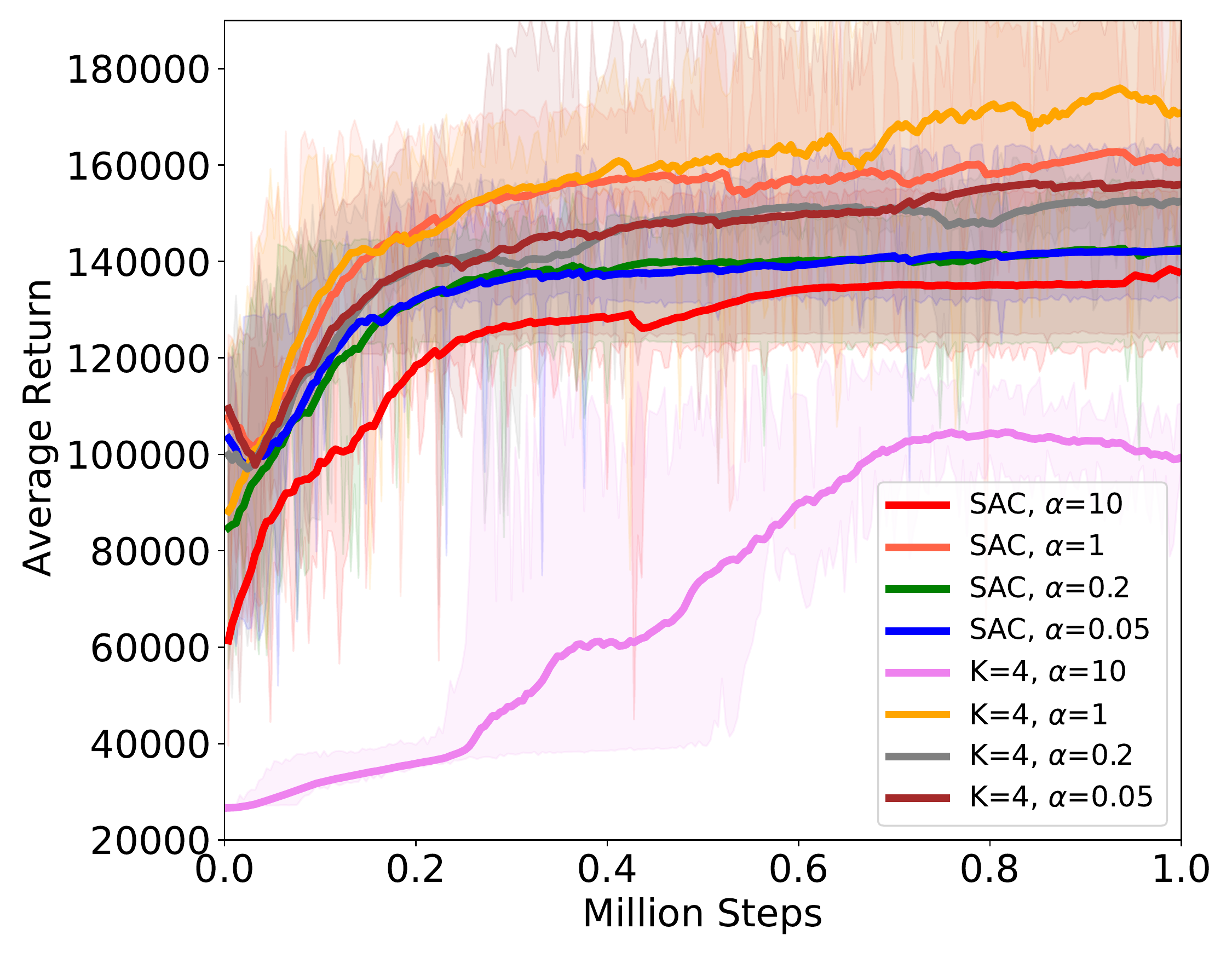}
    \caption{Comparison with different $K$ and different amounts of entropy regularisation. Our approach can be considered as a kind of entropy regularisation method and the number of primitives is positively correlated with the entropy of the policy. The larger number of primitives with smaller entropy has a similar performance to the smaller number of primitives with larger entropy.}
    \label{fig:k_entropy}
\end{figure}

\paragraph{Robustness Evaluation}
To evaluate the robustness of our approach, we develop an experiment on the \textit{Hopper-v2} environment. We add a random noise $\epsilon \sim \mathcal{N}(0, \sigma^2)$ to the state observation $s$, and use the noised state observation $\hat{s} = s + \epsilon $ as the input of the policy. Our approach has a more stable performance in the noised input observation situation, which is shown in Table~\ref{tab:robustness}.

\begin{table}[ht!]
\scriptsize
\centering
\begin{tabular}{l|lll}
Method & $\sigma=0$               & $\sigma=0.05$            & $\sigma=0.1$             \\ \hline
SAC    & 3387.4 $\pm$ 2.0          & 1994.9 $\pm$ 718.6          & 1006.2 $\pm$ 389.6          \\
gating & 3444.8 $\pm$ 3.1          & 2606.9 $\pm$ 864.4          & 1626.0 $\pm$ 771.9          \\
MCP    & 3524.8 $\pm$ 114.6          & 1610.2 $\pm$ 357.0          & 1008.4 $\pm$ 333.4          \\
Gumbel-Softmax &3248.3$\pm$453.6 &3042.9 $\pm$937.5 &1347.4$\pm$855.0 \\
REINFORECE  &1741.5$\pm$618.1 &1111.6$\pm$398.1 &558.7$\pm$276.5 \\
 \hline
Ours   & 3632.2 $\pm$ 4.0 & 3460.4 $\pm$ 456.7 & 1730.0 $\pm$ 703.1
\end{tabular}
\caption{We test our approach in the \textit{Hopper-v2} environment, each column stands for the average return with different variances of the noise distribution, the average return of each methods is averaged over 100 rounds.}
\label{tab:robustness}
\end{table}

\section{Conclusion}
To cope with the problems of low learning efficiency and multimodal solutions in continuous control tasks when applying DRL, this paper proposes the differentiable PMOE method that enables an end-to-end training scheme for generic RL algorithms with stochastic policies. Our proposed method is compatible with policy-gradient-based algorithms, like SAC and PPO. Experiments show performance improvement across various tasks is achieved by applying our PMOE method for policy approximation, as well as displaying distinguishable primitives for multiple solutions.


\bibliography{example_paper}
\bibliographystyle{icml2021}

\appendix

	\section{Probabilistic formulation of PMOE and Gating Operation}
	\label{app:gating}
	\label{app:diff_pdf}
	In this section, we show a detailed comparison of probabilistic formulation for GMM (as Eq.~(\ref{eq:gmm_a}) and (\ref{eq:PDF_GMM}), used in our PMOE method) and the gating operation method (Eq.~(\ref{eq:gating_origin}) to (\ref{eq:PDF_gating})), in term of their PDFs. The gating operation degenerates the multimodal action to a unimodal distribution, which is different from our PMOE method.
	
	For \textbf{GMM}, suppose a primitive $\pi_i(s)$ is a Gaussian distribution $\mathcal{N}(a|\mu(s), \sigma^{2}(s))$, drawing a sample from the mixture model can be seen as the following operation:
	\begin{equation}
	\label{eq:gmm_a}
	\begin{aligned}
			a \sim \pi(a|s) 
			&= \sum_{i=1}^{K}w_i(s)\pi_i(a|s)\\
			&= \sum_{i=1}^{K}w_i(s)\mathcal{N}(a|\mu_i(s), \sigma_i^{2}(s)),
	\end{aligned}	
    \end{equation}
	where the PDF is:
	\begin{equation}
		p(a) = \sum_{i=1}^{K}\frac{w_i(s)}{\sqrt{2\pi}\sigma_i(s)}\exp\{-\frac{(a-\mu_i(s))^2}{2\sigma_i^2(s)}\}.
	\label{eq:PDF_GMM}
	\end{equation}
	For \textbf{gating operation}, the outputs of the weight operation are the weights of each action from different primitives. 
	With those weights, the gating operation uses the weighted action as the final output action according to \cite{MCP}:
	\begin{equation}
	\label{eq:gating_origin}
	a = \sum_{i=1}^{K}w_i(s)a_i, \, s.t.\, a_i \sim \pi_i(a|s)=\mathcal{N}(a|\mu_i(s), \sigma_i^{2}(s)).
	\end{equation}
	As a primitive is a Gaussian distribution, Eq. \ref{eq:gating_origin} becomes:
	\begin{equation}
	a \sim \mathcal{N}(a|\sum_{i=1}^{K}w_i(s)\mu_i(s), \sum_{i=1}^{K}w_i(s)\sigma_i^{2}(s)),
	\end{equation}
	where the PDF is:
	\begin{equation}
	\label{eq:PDF_gating}
	\begin{split}
	p(a) = & \frac{1}{\sqrt{2\pi\sum_{i=1}^{K}w_i(s)\sigma_i^2(s)}} \\
	&\exp\{-\frac{(a-\sum_{i=1}^{K}w_i(s)\mu_i(s))^2}{2\sum_{i=1}^{K}w_i(s)\sigma_i^2(s)}\},
	\end{split}
	\end{equation}
	The above PDF shows that the gating operation could degenerate the Gaussian mixture model into the univariate Gaussian distribution.
	Other methods~\citep{AdaptiveMOE, MOERL_ACM16, FEUDAL_ICML17} also have the similar formulation.
	
	\section{Details of Target-Reaching Environment}
	\label{app:details_target_reaching}
	The visualisation of the target-reaching environment is shown in Fig.\ref{fig:myenv_vis}, the blue circle is the agent, the gray circles are obstacles and the circle in red is the target. The agent state is represented by an action vector $a=[a_x, a_y]$ and a velocity vector $v=[v_x, v_y]$. The playground of the environment is continuous and limited in $[-5, 5]$ in both x-axis and y-axis. The agent speed is limited into $[-2, 2]$, the blue coloured agent is placed at position $[x_{ag}, y_{ag}] = [-4.5, -4.5]$ and the red coloured target is randomly placed at position $[x_{tg}, y_{tg}]$, where $x_{tg}, y_{tg} \sim \mathcal{U}(0, 3)$ and $\mathcal{U}$ denotes uniform distribution. There are $M$ gray coloured obstacles with each position $[x_{obs}^i, y_{obs}^i]$, where $x_{obs}^i, y_{obs}^i\sim\mathcal{N}(0, 3^2)$ and $\mathcal{N}$ denotes Gaussian distribution. The observation is composed of $[[x_{tg} - x_{ag}, y_{tg} - y_{ag}], \{[x_{obs}^i - x_{ag}, y_{obs}^i - y_{ag}^i]; i=1, 2, \cdots, M\}, a, v]$. The input action is the continuous acceleration $a$ which is in the range of $[-2, 2]$.
	\begin{figure}[ht!]
		\centering
		\includegraphics[width=0.195\textheight]{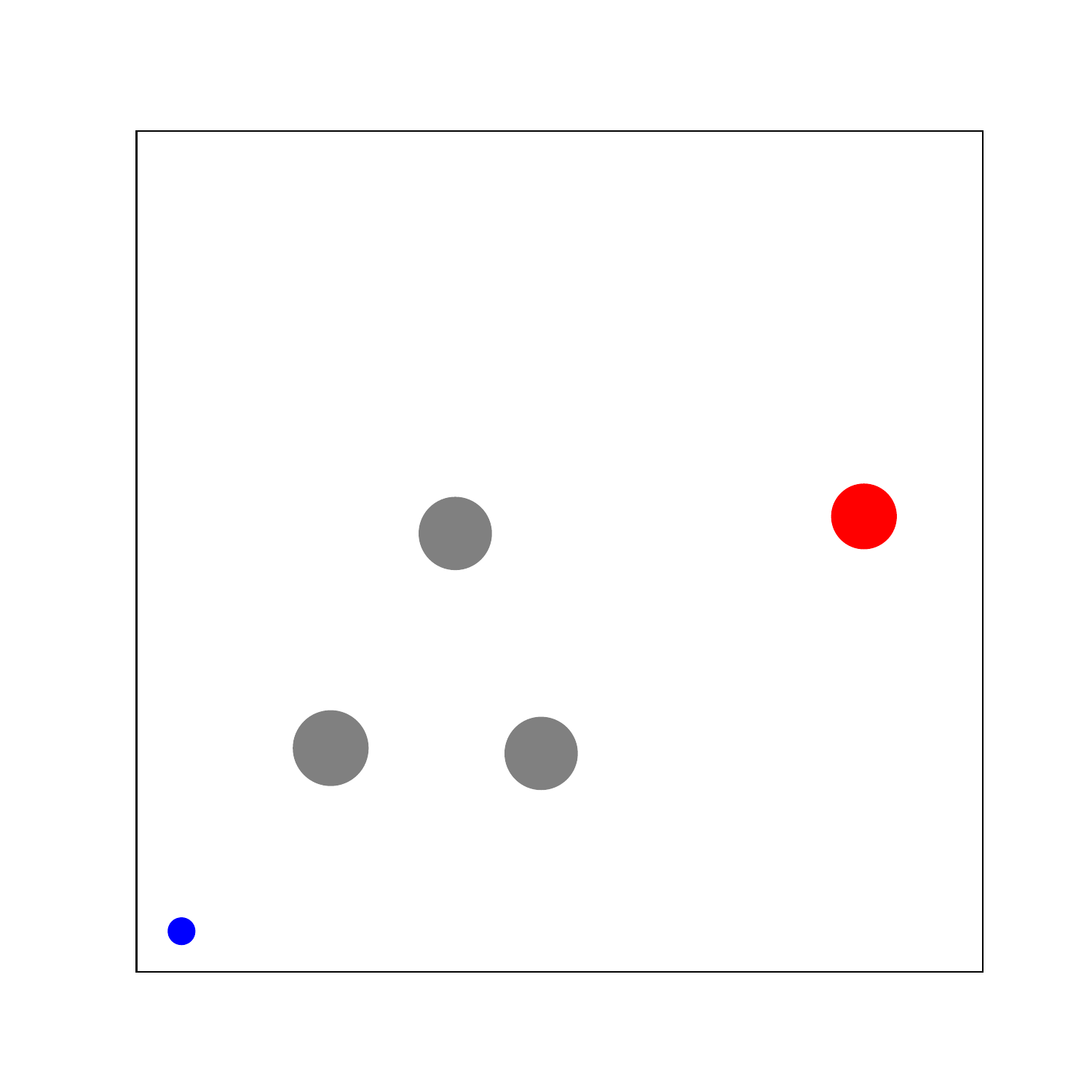}
		\caption{Visualisation of the target-reaching environment}
		\label{fig:myenv_vis}
	\end{figure}
	
	The immediate reward function for each time step is defined as:
	\begin{equation}
	r = \left\{
	\begin{aligned}
	&100, \,\text{if the agent reaches the target};\\
	&-10, \, \text{if the agent collides with edges or obstacles};\\
	&||v||_2, \text{otherwise}.
	\label{eq:myenv_reward}
	\end{aligned}
	\right.
	\end{equation}
	
	\section{Training Details}
	\label{app:training_details}
	For PMOE-SAC policy network, we use a two-layer fully-connected (FC) network with 256 hidden units and ReLU activation in each layer. For primitive network $\pi_{\psi}$, we use a two single-layer FC network, which outputs $\mu$ and $\sigma$ for the Gaussian distribution. Both the output layers for $\mu$ and $\sigma$ have the same number of units, which is $K * dim(\mathcal{A})$, with $K$ as the number of primitives and $dim(\mathcal{A})$ as the dimension of action space, \emph{e.g.}, 17 for \textit{Humanoid-v2}. For the routing function network $w_{\theta}$, we use a single FC layer with $K$ hidden units and the softmax activation function.. In critic network we use a three-layer FC network with 256, 256 and 1 hidden units in each layer and ReLU activation for the first two layers. Other hyperparameters for training are showed in Tab. 1(a). For PMOE-PPO, we use a two-layer FC network to extract the features of state observations. The FC layers have 64 and 64 hidden units with ReLU activation. The policy network has a single layer with the Tanh activation function. The routing function network has a single FC layer with $K$ units and the softmax activation function. The critic contains one layer only. Other training hyperparameters are showed in Tab. 1(b). We use the same hyperparameters in all the experiments without any fine-tuning. For MCP-SAC, we use the same network structure as MCP-PPO, other training hyperparemeters are the same as shown in Tab. 1(a). For other baselines, we use original hyperparameters mentioned in their paper.
	The full algorithm is summarised in Algorithm 1.
	\begin{table}[ht!]
		\centering
		\subtable[Hyperparameters for PMOE-SAC]{
		\begin{tabular}{l|l}
			\hline
			Parameter & Value \\ \hline
			optimiser & Adam \citep{Adam} \\
			learning rate & $10^{-3}$ \\
			discount ($\gamma$) & 0.99 \\
			replay buffer size & $10^6$ \\
			alpha & 0.2 \\
			batch size & 100 \\
			polyak ($\tau$) & 0.995 \\
			episode length & $10^3$ \\
			target update interval & 1 \\\hline
		\end{tabular}
		\label{tab:params_SAC}
		}
		\subtable[Hyperparameters for PMOE-PPO]{
		\begin{tabular}{l|l}
			\hline
			Parameter & Value \\ \hline
			optimiser & Adam \citep{Adam} \\
			learning rate & $3*10^{-4}$ \\
			discount ($\gamma$) & 0.99 \\
			alpha & 0.2 \\
			batch size & 64 \\
			polyak ($\tau$) & 0.95 \\
			episode length & $2*10^3$ \\
			gradient clip & 0.2 \\
			optimisation epochs & 20 \\\hline
		\end{tabular}
		\label{tab:params_PPO}
		}
		\caption{Hyperparameters}
	\end{table}

	\section{Probability Visualisation}
	
	We visualise the probabilities of each primitive over the time steps in the MuJoCo \textit{HalfCheetah-v2} environment. As shown in Fig.~\ref{fig:periodical}, we found that the probabilities are changed periodically. We also visualise the actions at the selected 5 time steps in one period. As shown in Fig.~\ref{fig:one_period}, the primitives are distinguishable enough to develop distinct specialisations.
	\begin{figure}[ht!]
		\centering
		\includegraphics[width=0.35\textheight]{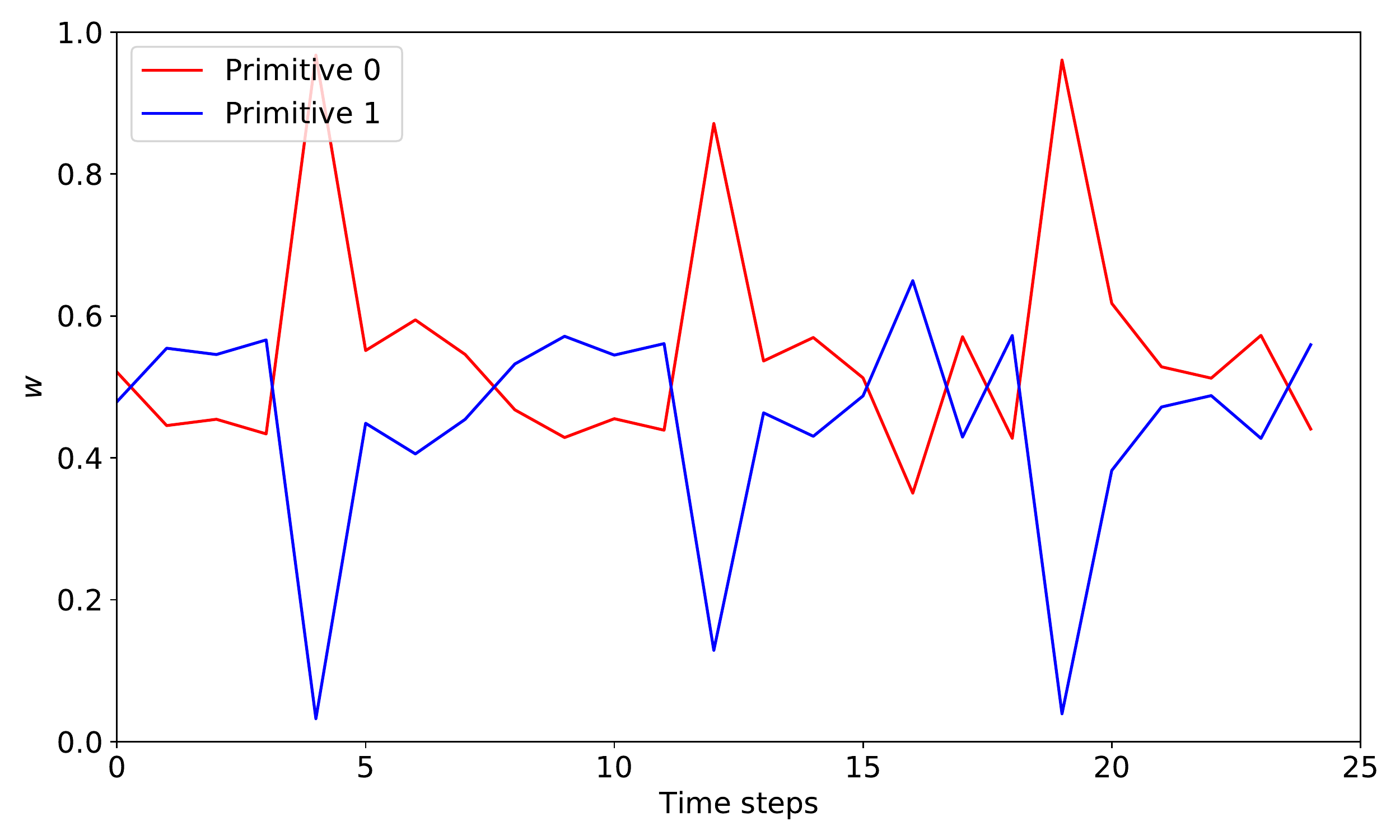}
		\caption{Visualisation of the probabilities of each primitive over the time steps in the MuJoCo \textit{HalfCheetah-v2} environment. The y-axis shows the probabilities of different primitives.}
		\label{fig:periodical}
	\end{figure}
	
	\begin{figure*}[ht!]
		\centering
		
		\subfigure[time step 1]{
			\label{time step 1}
			\includegraphics[width=0.10\textheight]{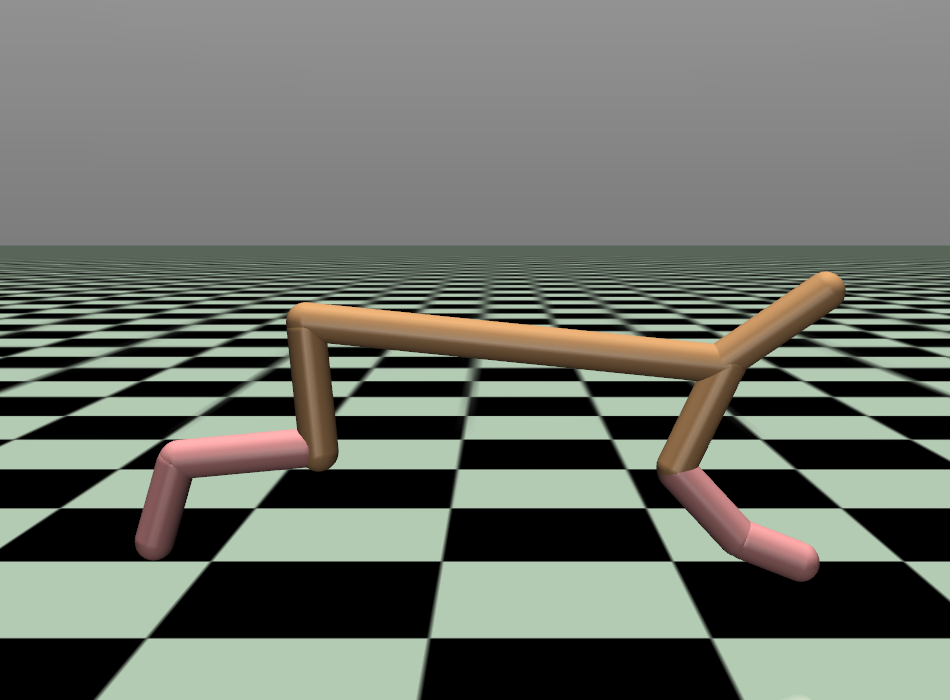}
		}
		\subfigure[time step 3]{
			\label{time step 3}
			\includegraphics[width=0.10\textheight]{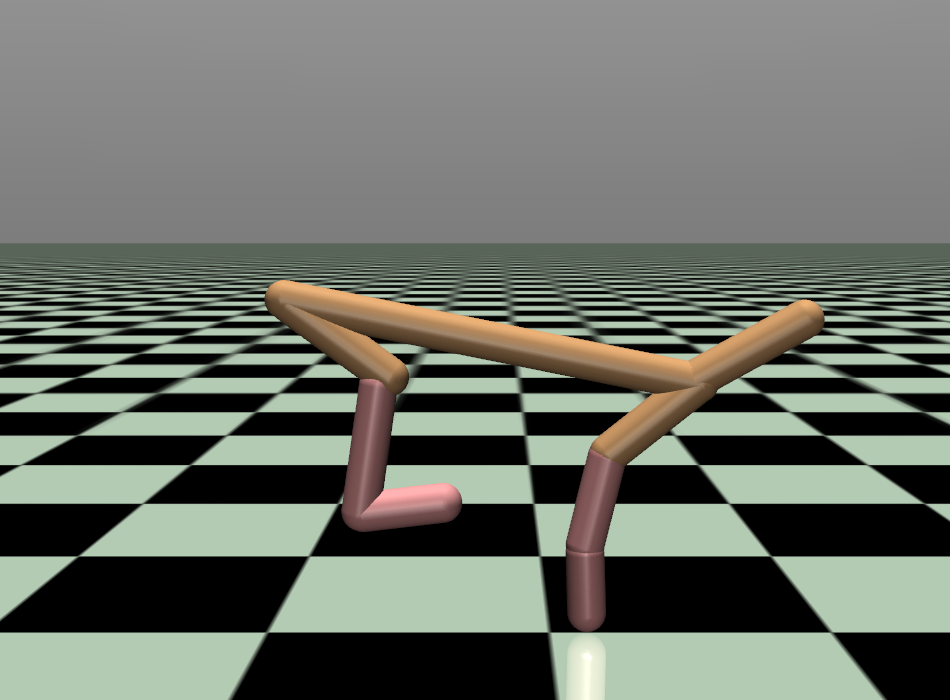}
		}
		\subfigure[time step 5]{
			\label{time step 5}
			\includegraphics[width=0.10\textheight]{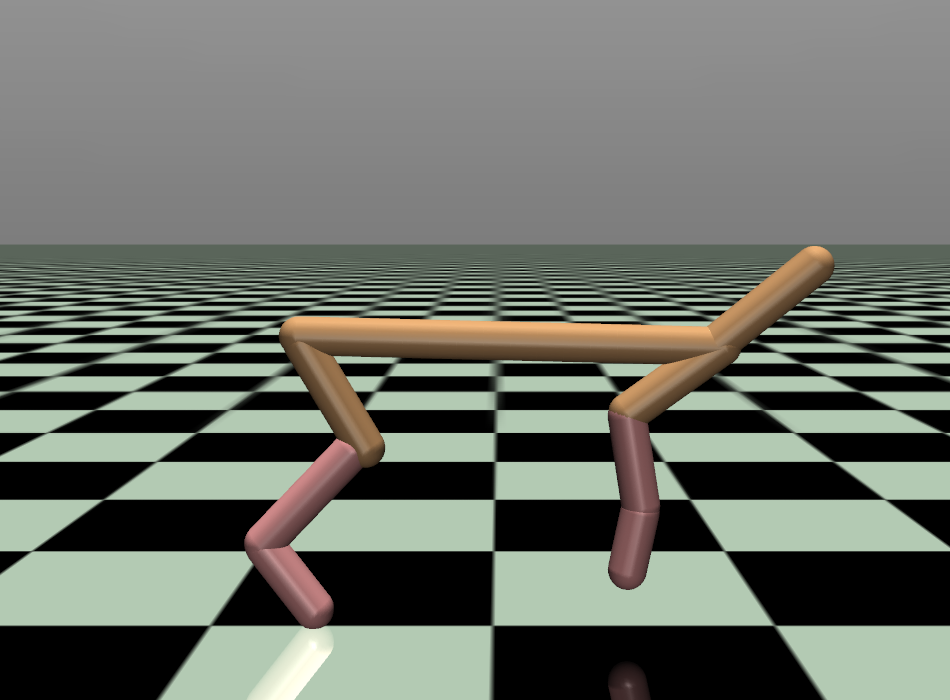}
		}
		\subfigure[time step 6]{
			\label{time step 6}
			\includegraphics[width=0.10\textheight]{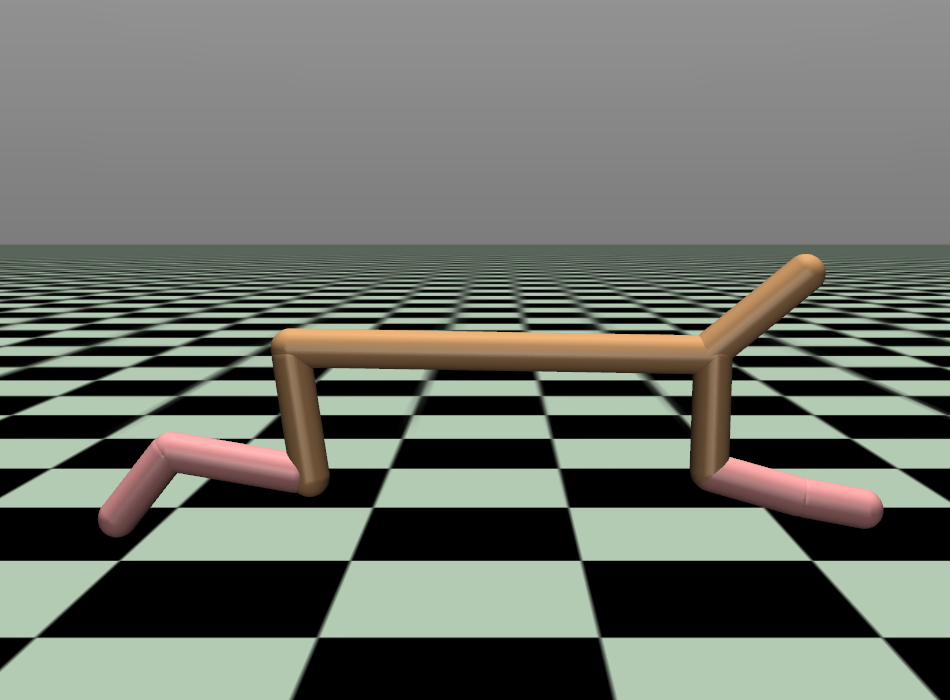}
		}
		\subfigure[time step 7]{
			\label{time step 7}
			\includegraphics[width=0.10\textheight]{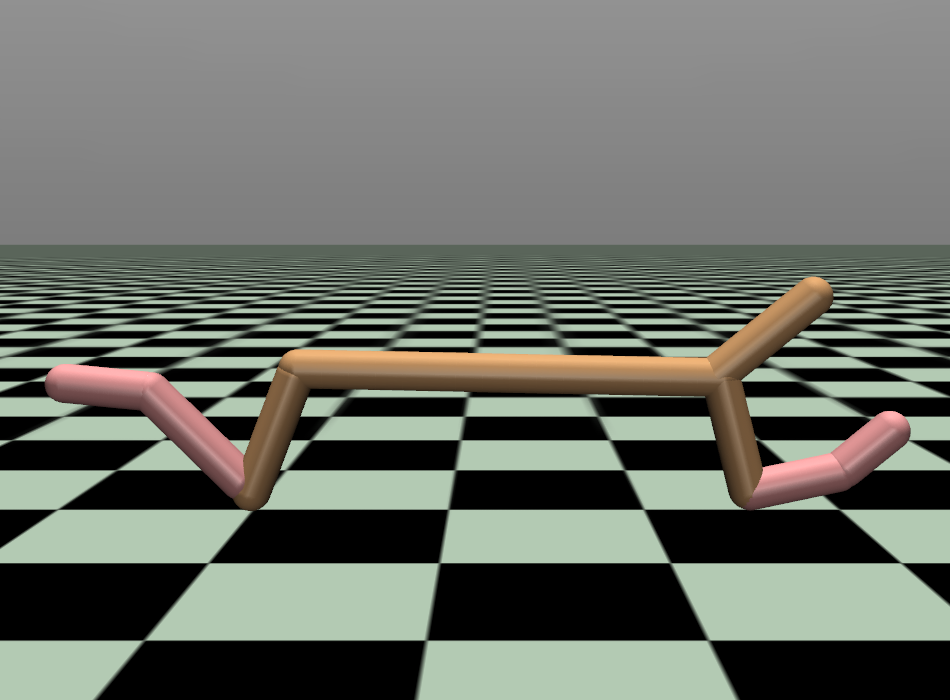}
		}
		
		\quad
		\subfigure{
			\includegraphics[width=0.6\textheight]{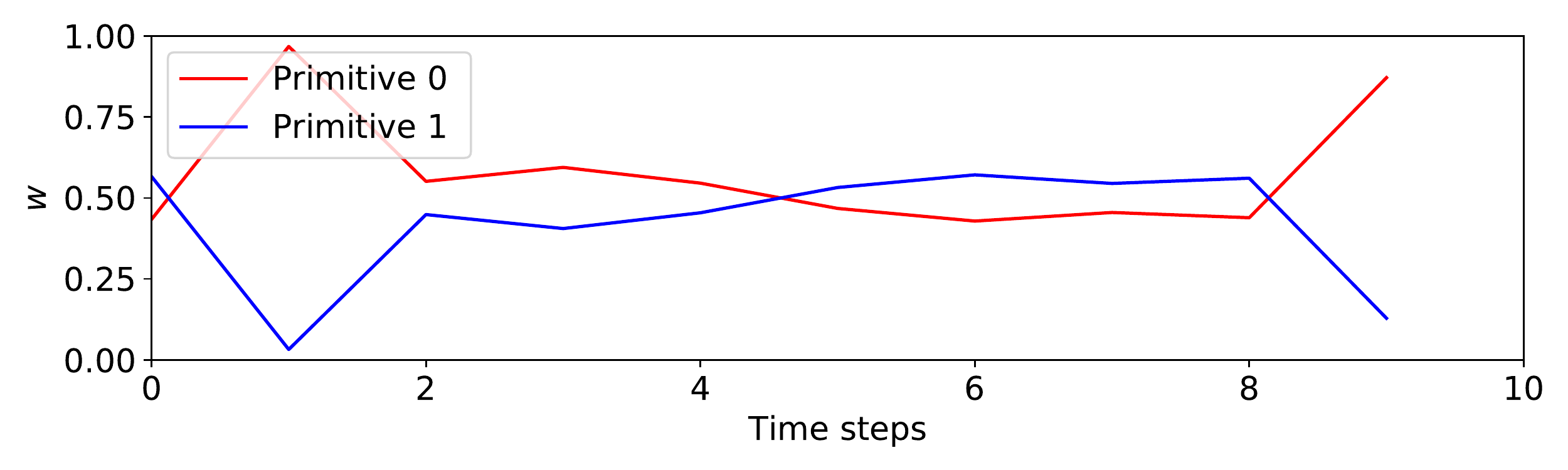}
		}
		\caption{Visualisation of the actions at the selected 5 time steps in one period. 
		The y-axis shows the probabilities of different primitives.
		This result shows that the primitives develop distinct specialisations, with the primitive 0 becomes the most active when the front leg touches the ground, while the primitive 1 becomes the most active when the leg leaves the ground.}
		\label{fig:one_period}
	\end{figure*}
	
	\section{{t}-SNE Visualisation}
	To demonstrate the distinguishable primitives, we plot the t-SNE visualisation for other 5 MuJoCo environments in Fig.~\ref{fig:all_tsne}.
    \label{app:tsne}
	\begin{figure*}[ht!]
		\centering
		\begin{tikzpicture}
		\node[anchor=south west, inner sep = 0] at(0,0){
			\includegraphics[width=0.195\textheight]{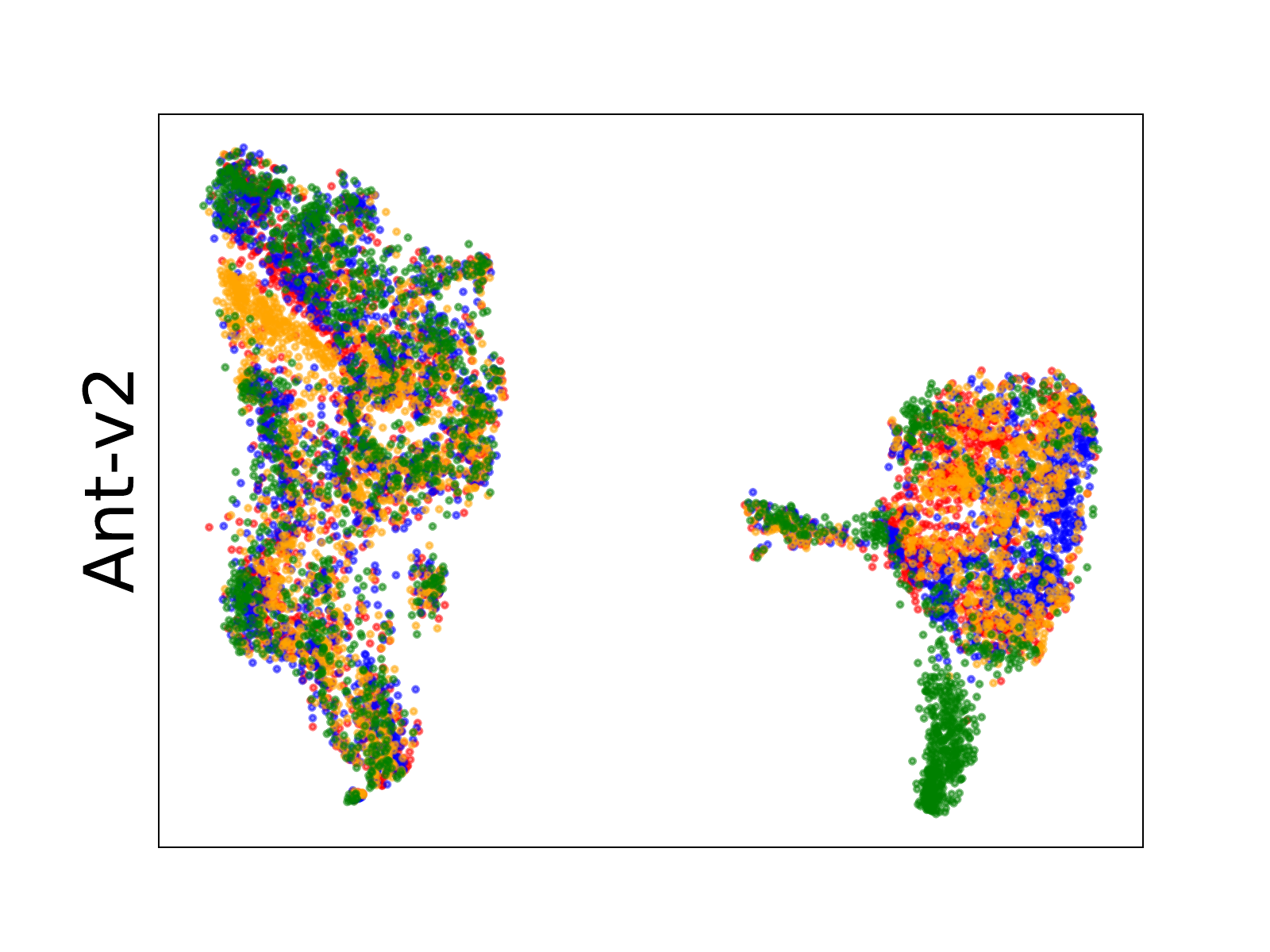}
			\includegraphics[width=0.195\textheight]{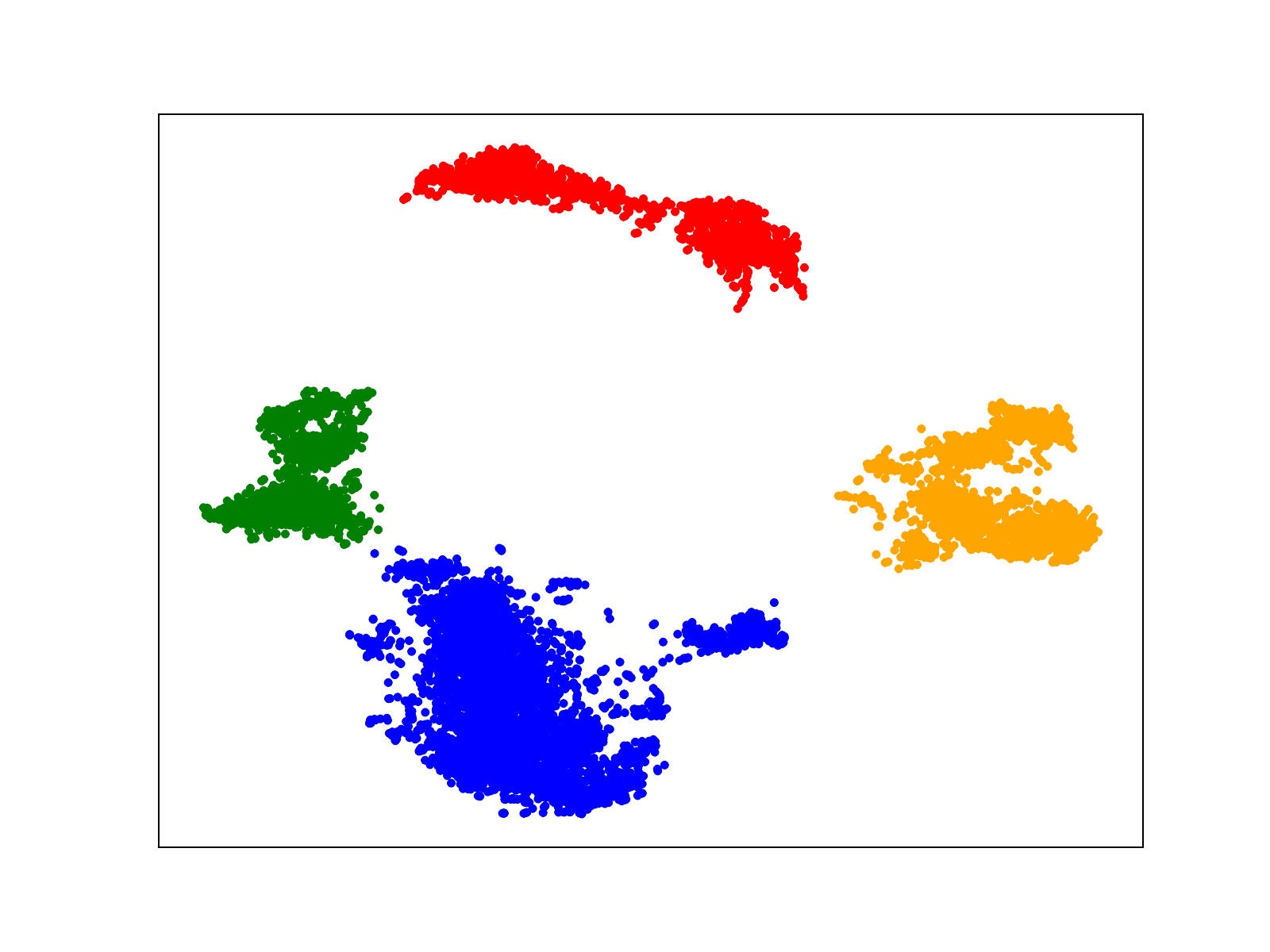}
			\includegraphics[width=0.195\textheight]{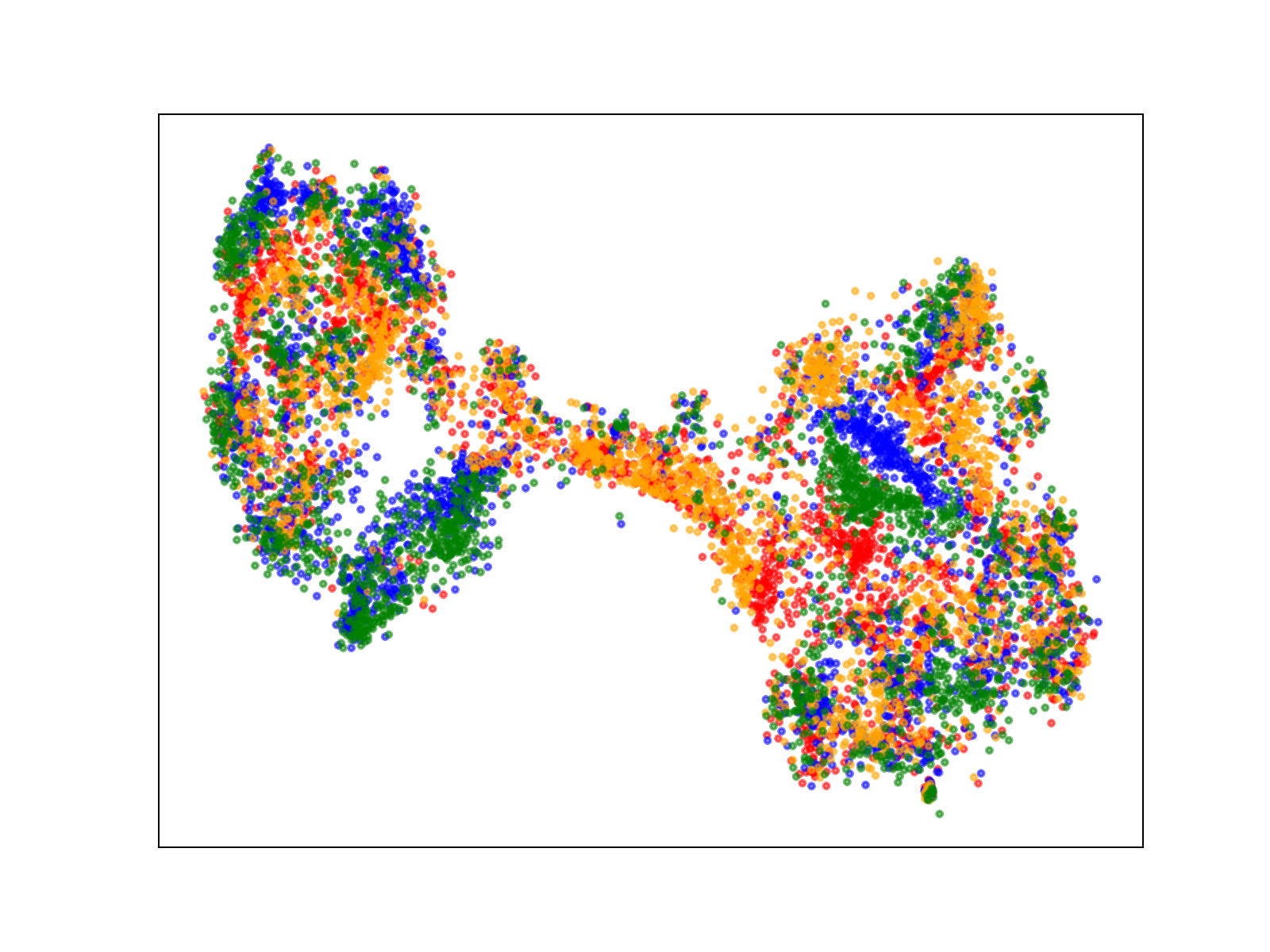}
		};
		\coordinate (startleft) at (7.5,2);
		\coordinate (gatingend) at (4.2,2);
		\draw[very thick, ->] (startleft) to[out=160,in=30] node {} (gatingend);
		\coordinate(startright) at (8.8, 2);
		\coordinate(oursend) at (9.5,2);
		\draw[very thick, ->] (startright) to[out=30,in=160] node {} (oursend);
		\draw[dashed, gray, thick, rounded corners] (7.5, 1.2) rectangle (8.6, 2.1);
		\end{tikzpicture}
		\begin{tikzpicture}
		\node[anchor=south west, inner sep = 0] at(0,0){
			\includegraphics[width=0.195\textheight]{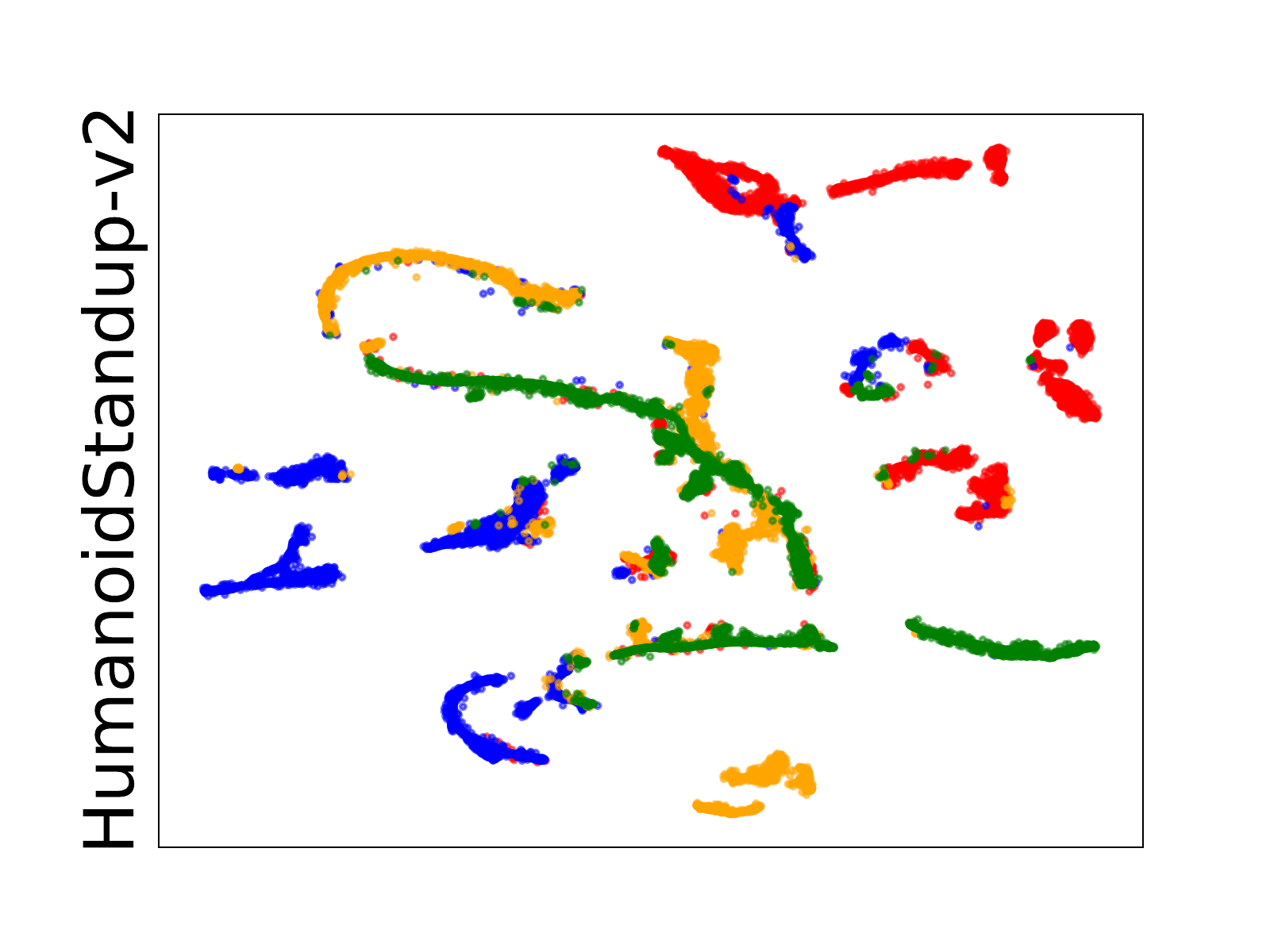}
			\includegraphics[width=0.195\textheight]{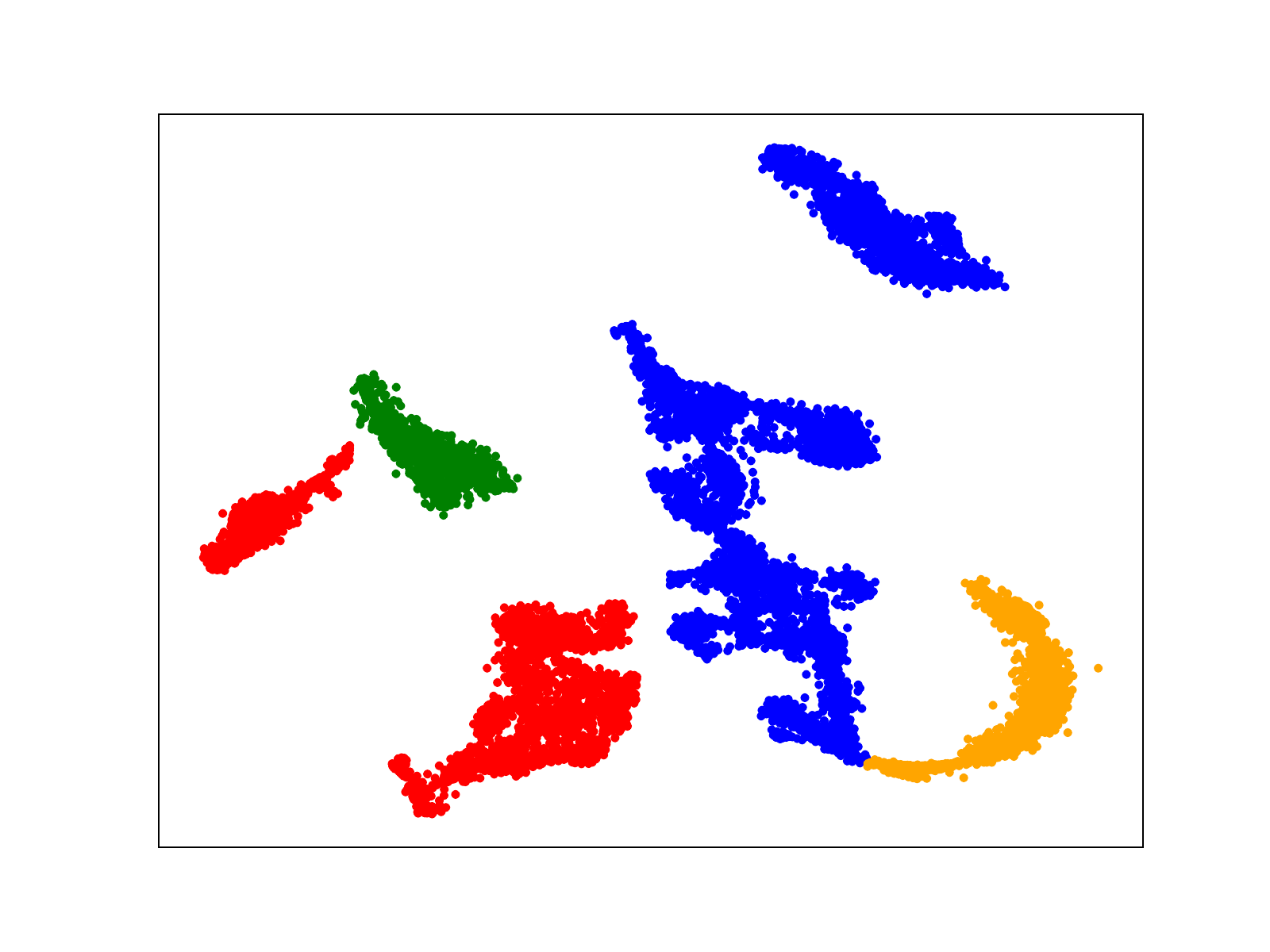}
			\includegraphics[width=0.195\textheight]{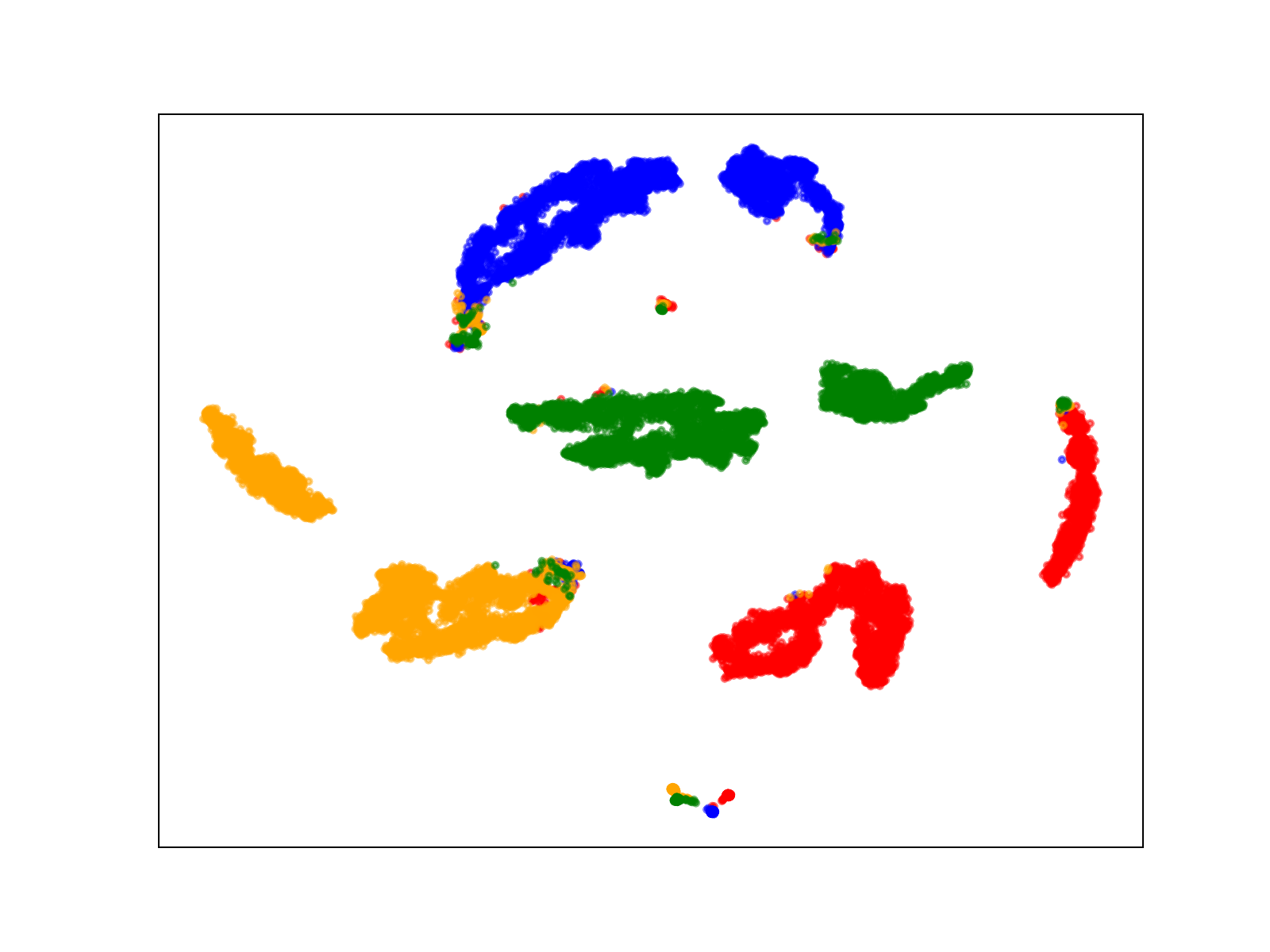}
		};
		\coordinate (startleft) at (6.8,2.3);
		\coordinate (gatingend) at (4.2,2);
		\draw[very thick, ->] (startleft) to[out=150,in=30] node {} (gatingend);
		\coordinate(startright) at (8, 2);
		\coordinate(oursend) at (9.5,2);
		\draw[very thick, ->] (startright) to[out=30,in=150] node {} (oursend);
		\draw[dashed, gray, thick, rounded corners] (6.7, 0.7) rectangle (7.7, 2.1);
		\end{tikzpicture}
		\begin{tikzpicture}
		\node[anchor=south west, inner sep = 0] at(0,0){
			\includegraphics[width=0.195\textheight]{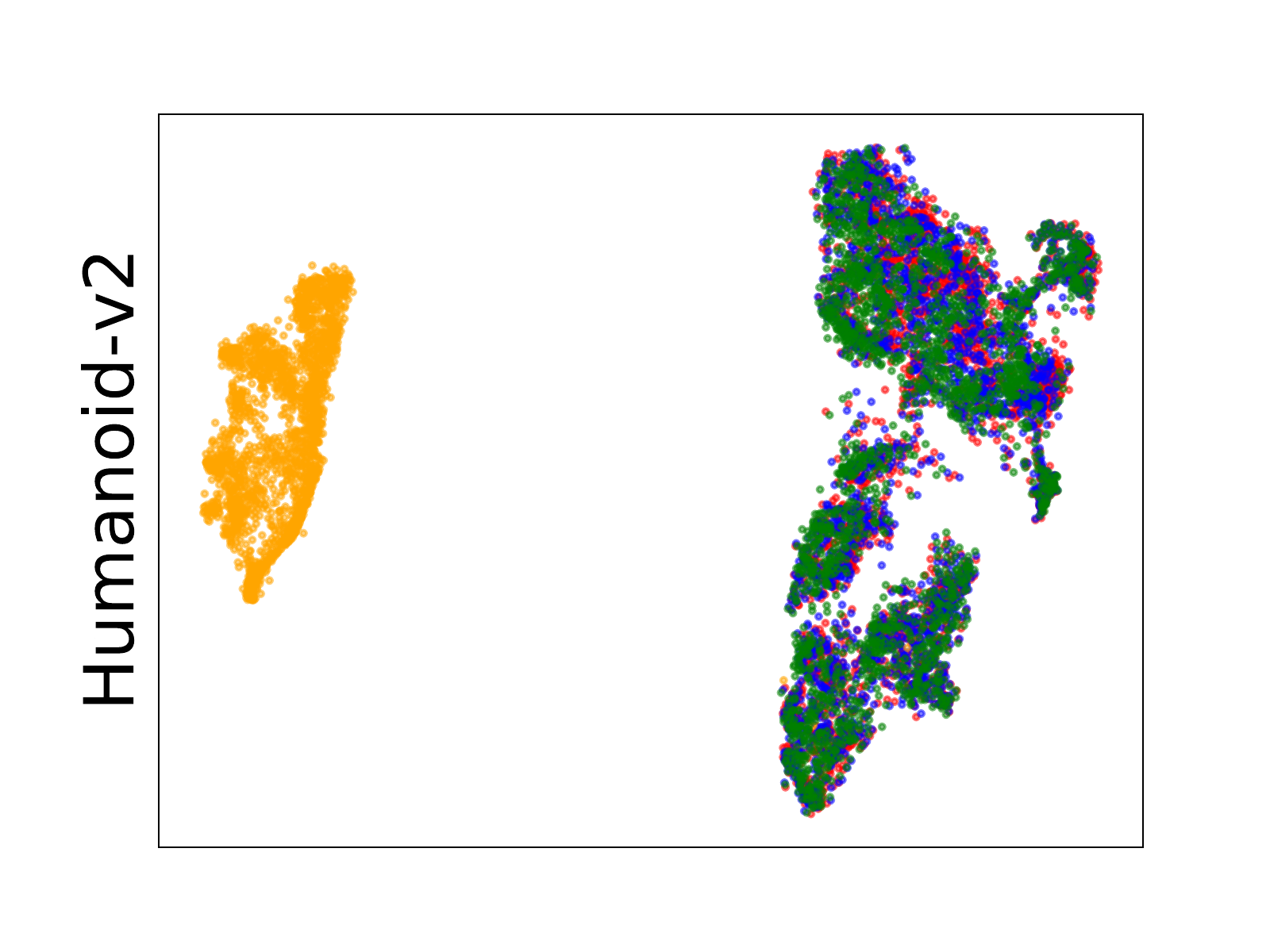}
			\includegraphics[width=0.195\textheight]{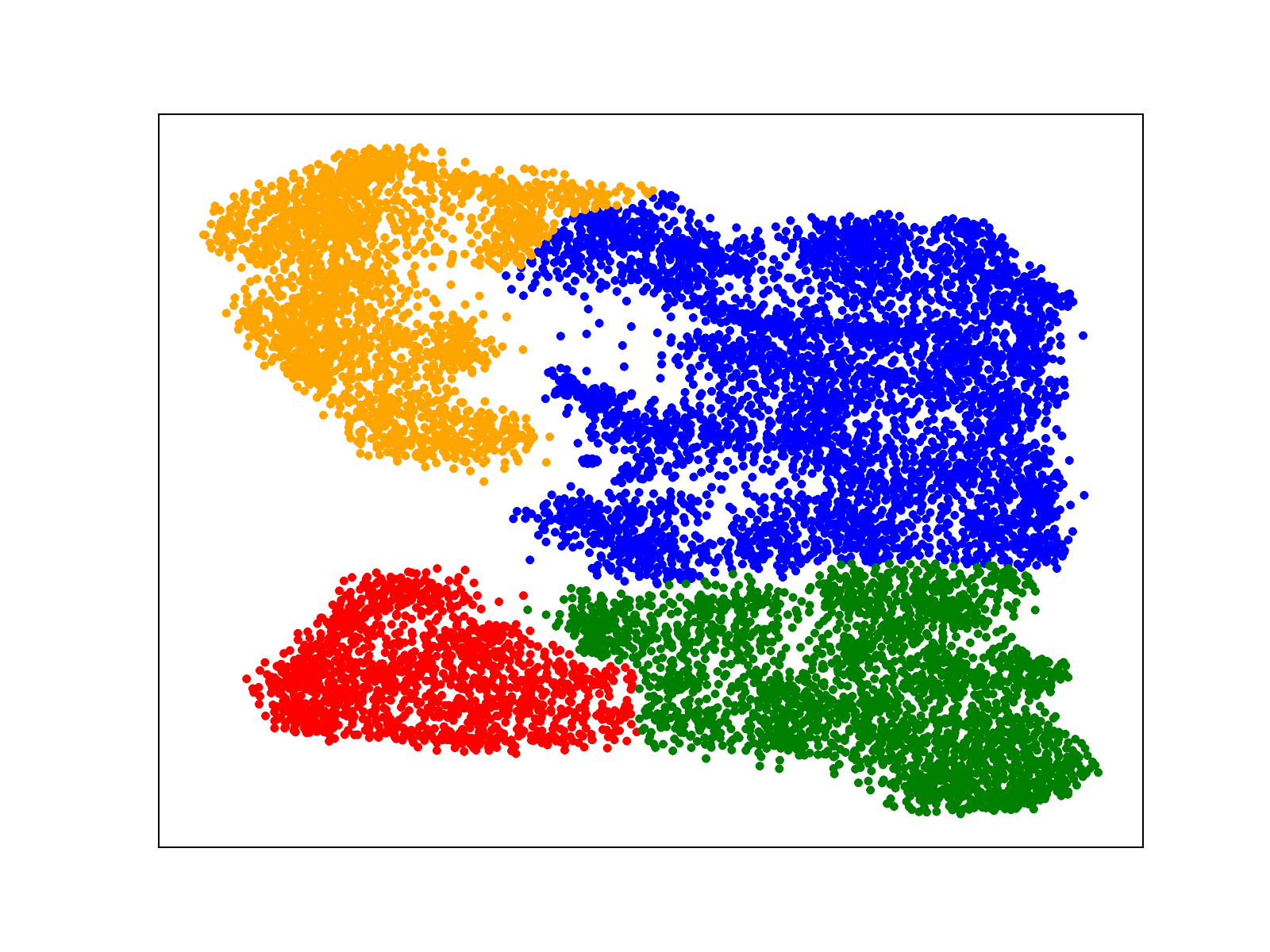}
			\includegraphics[width=0.195\textheight]{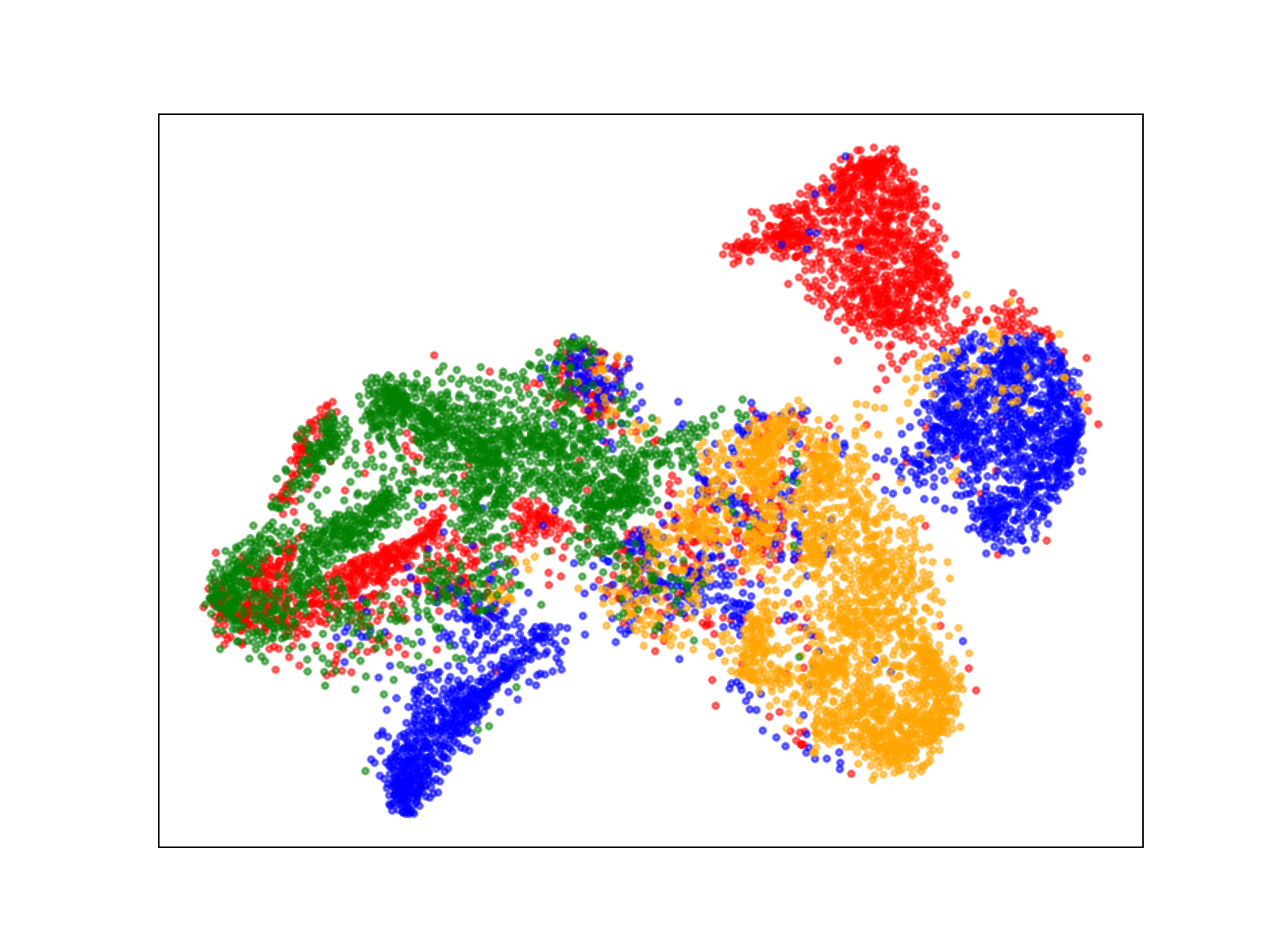}
		};
		\coordinate (startleft) at (5.4,2.5);
		\coordinate (gatingend) at (4.2,2);
		\draw[very thick, ->] (startleft) to[out=-160,in=30] node {} (gatingend);
		\coordinate(startright) at (6, 1.5);
		\coordinate(oursend) at (9.5,2);
		\draw[very thick, ->] (startright) to[out=-45,in=-135] node {} (oursend);
		\draw[dashed, gray, thick, rounded corners] (5.3, 1.7) rectangle (6.5, 2.8);
		\end{tikzpicture}
		\begin{tikzpicture}
		\node[anchor=south west, inner sep = 0] at(0,0){
			\includegraphics[width=0.195\textheight]{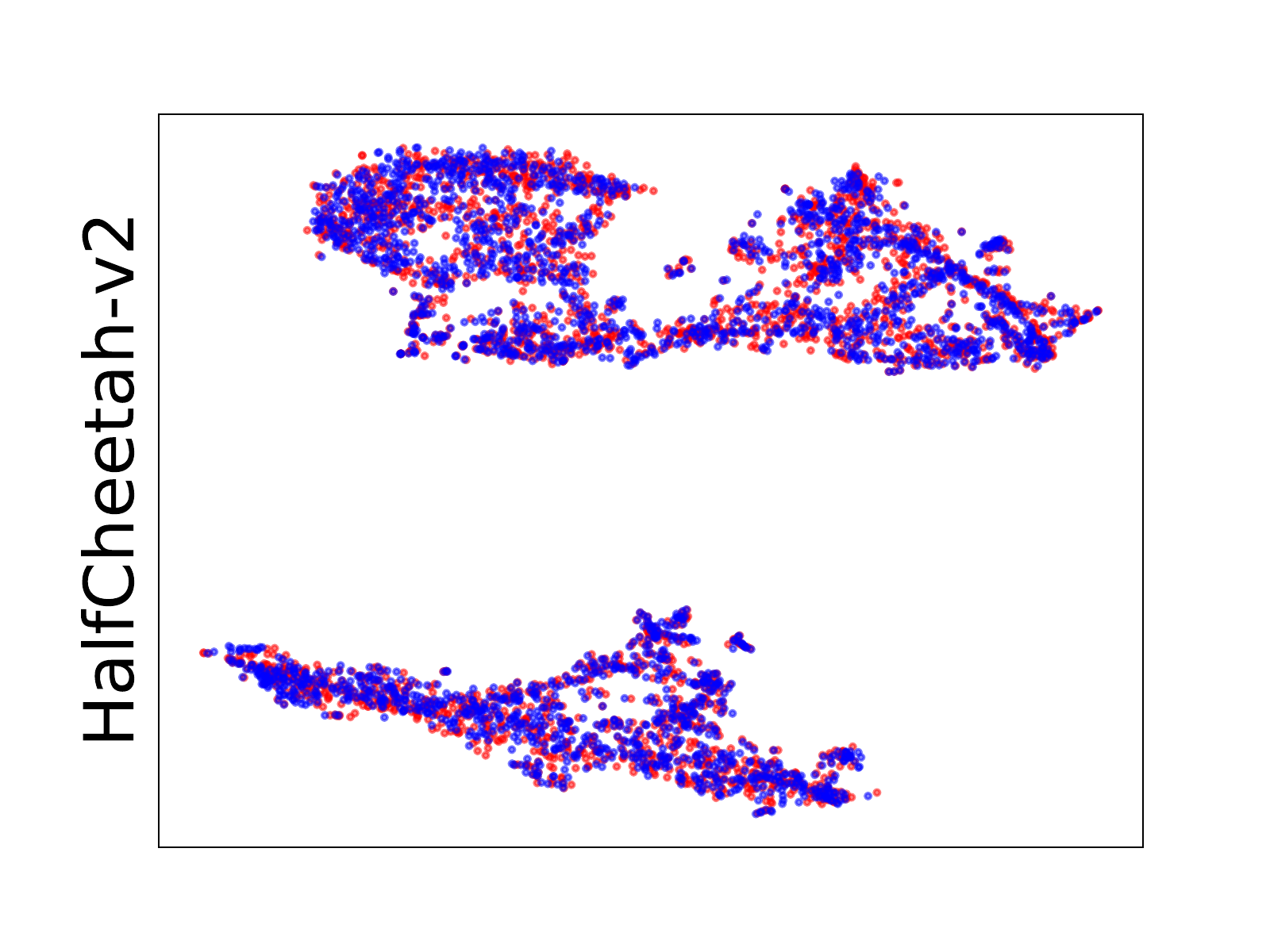}
			\includegraphics[width=0.195\textheight]{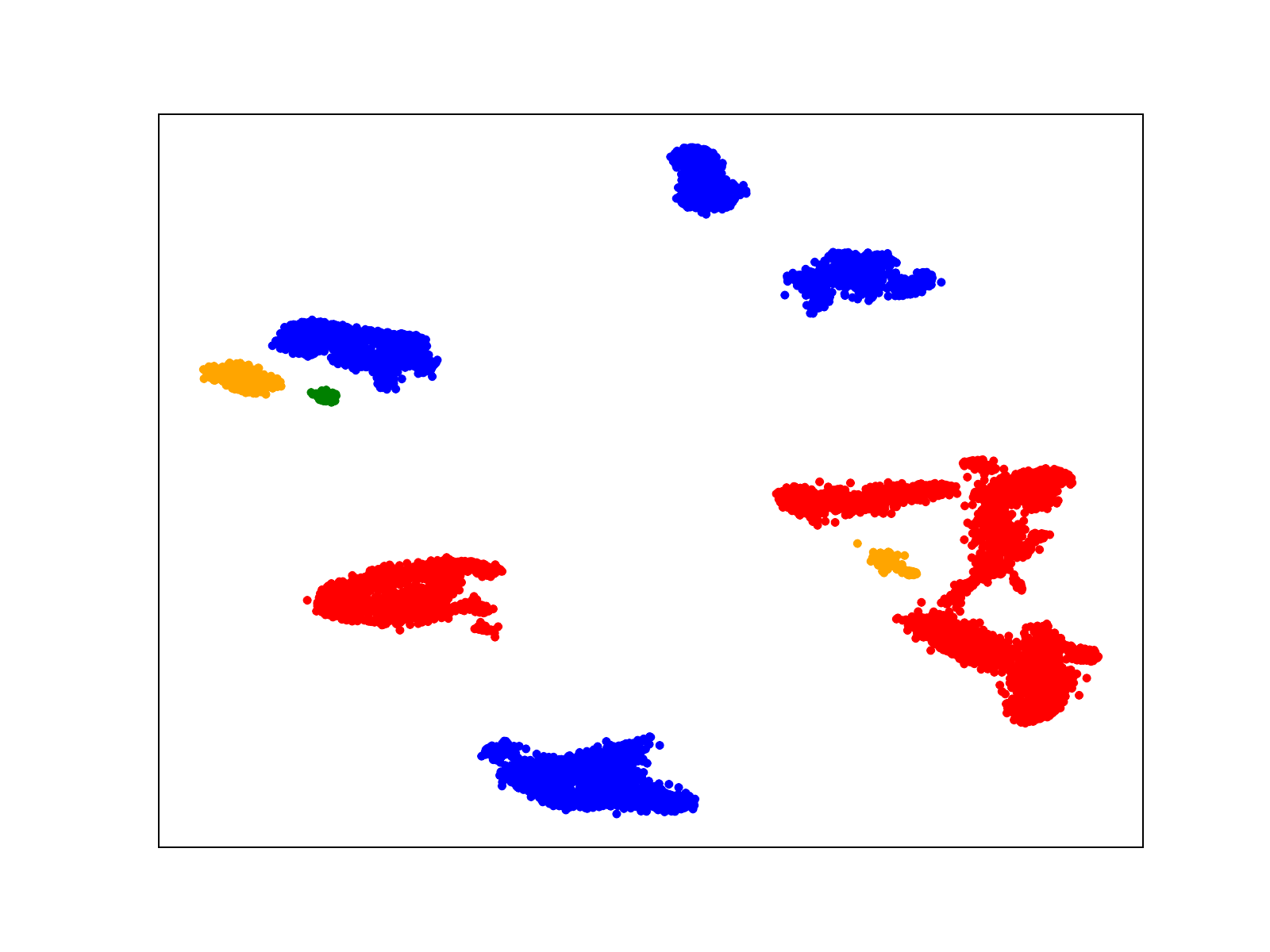}
			\includegraphics[width=0.195\textheight]{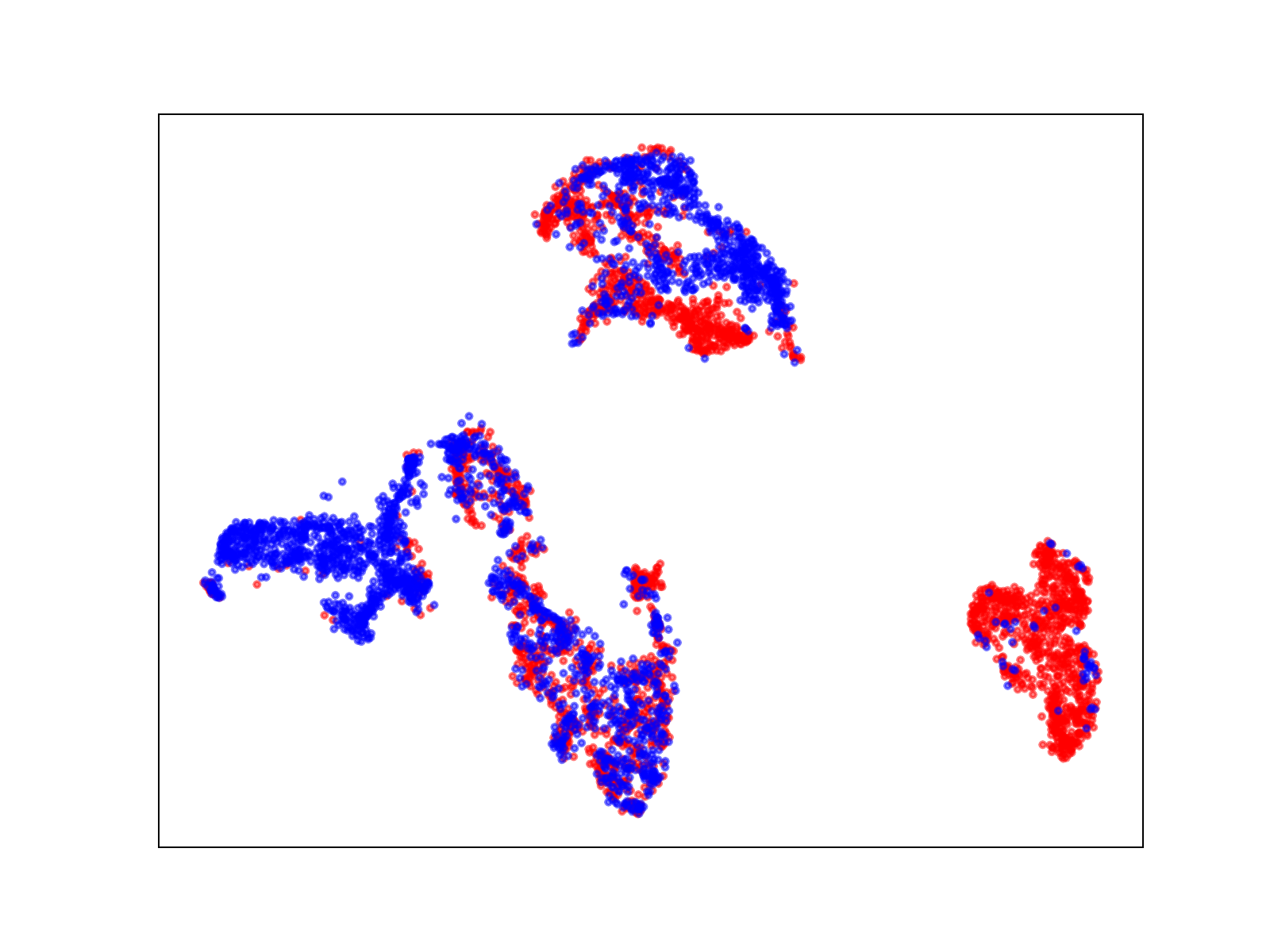}
		};
		\coordinate (startleft) at (7.3,2);
		\coordinate (gatingend) at (4.2,2);
		\draw[very thick, ->] (startleft) to[out=150,in=30] node {} (gatingend);
		\coordinate(startright) at (8, 2);
		\coordinate(oursend) at (9.5,2);
		\draw[very thick, ->] (startright) to[out=30,in=150] node {} (oursend);
		\draw[dashed, gray, thick, rounded corners] (7.3, 0.7) rectangle (8.5, 1.8);
		\end{tikzpicture}
		\begin{tikzpicture}
		\node[anchor=south west, inner sep = 0] at(0,0){
			\subfigure[Gating Operation]{
				\includegraphics[width=0.195\textheight]{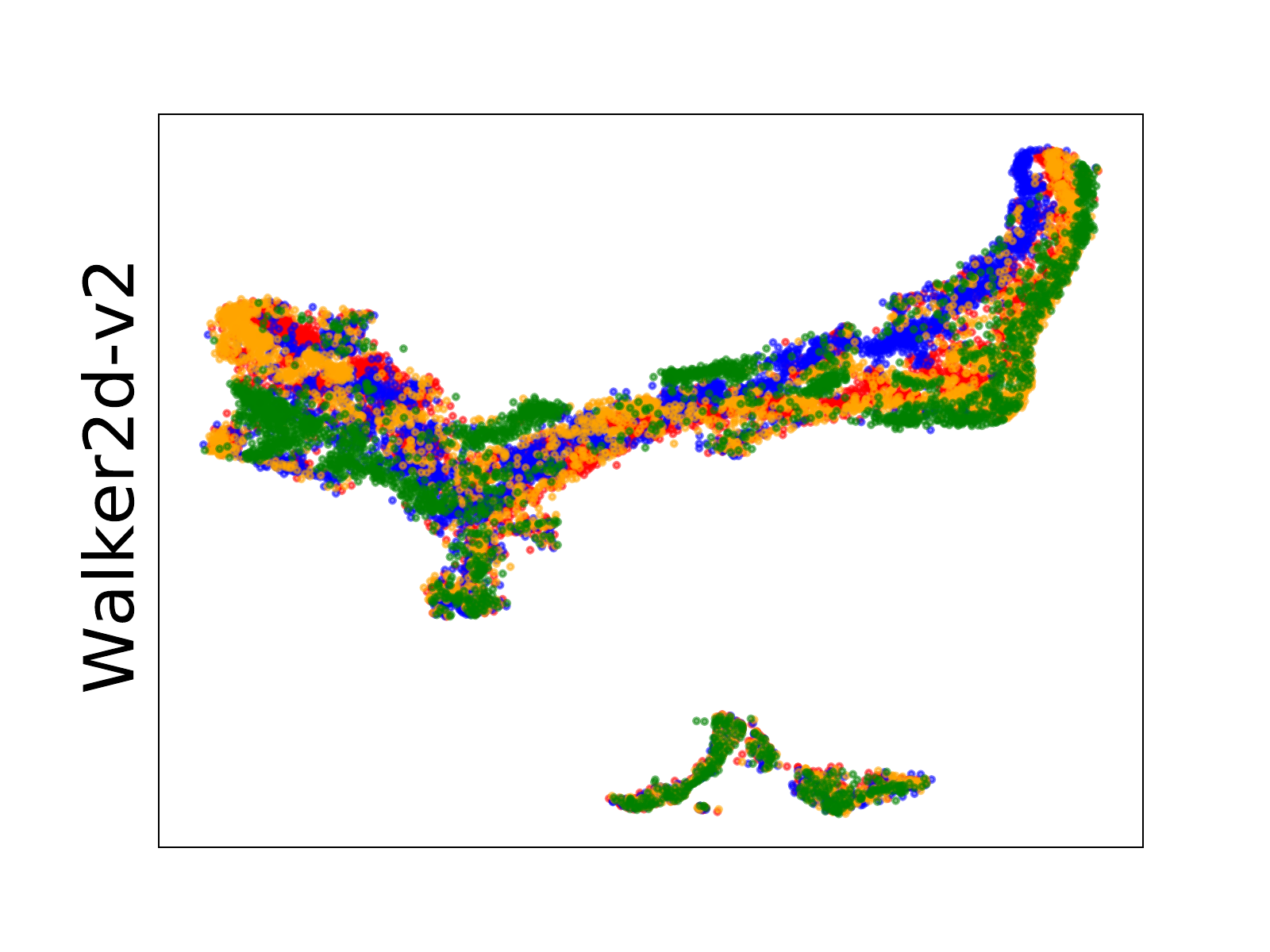}}
			\subfigure[State Observations]{
				\includegraphics[width=0.195\textheight]{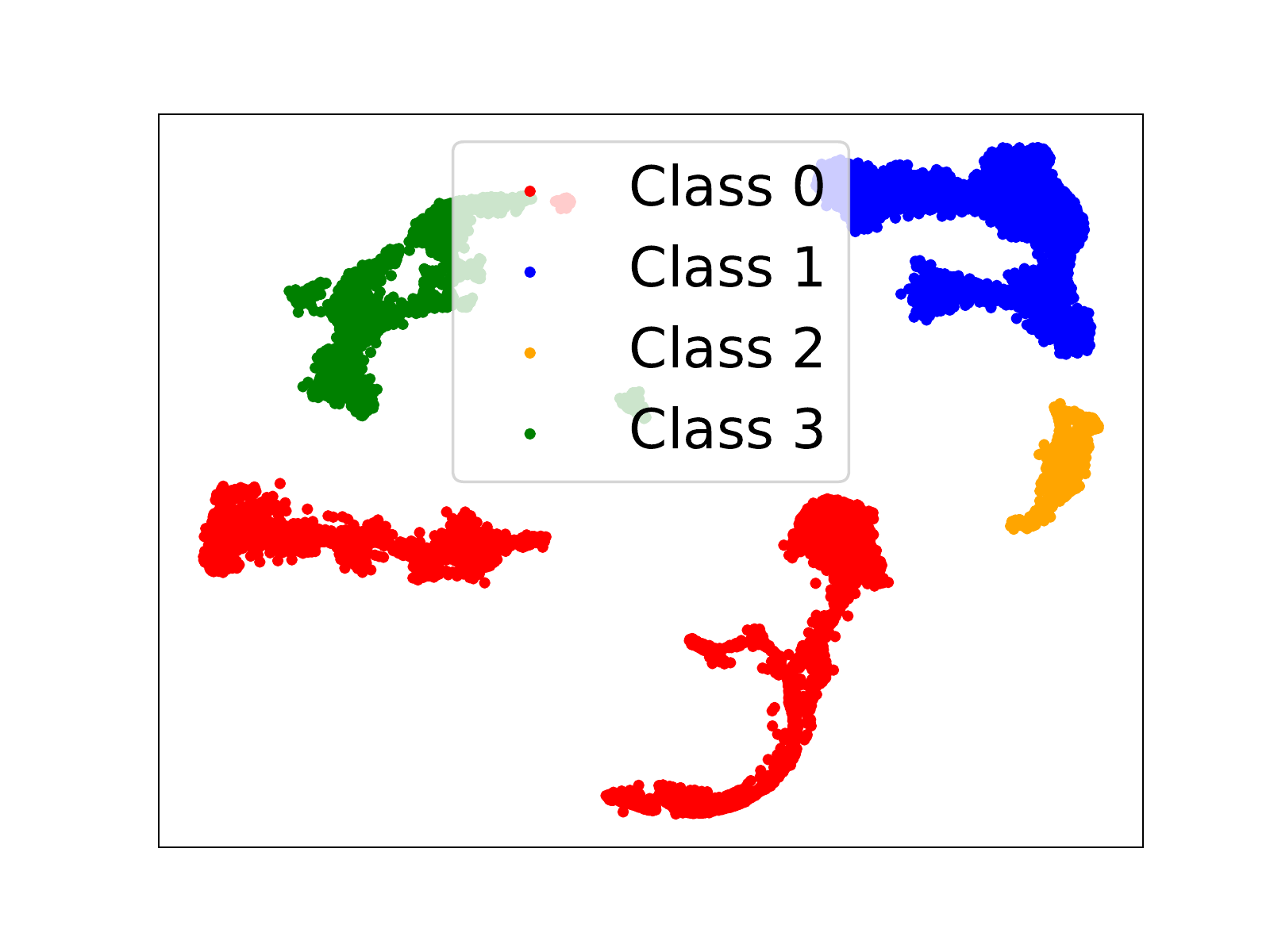}}
			\subfigure[Ours]{
				\includegraphics[width=0.195\textheight]{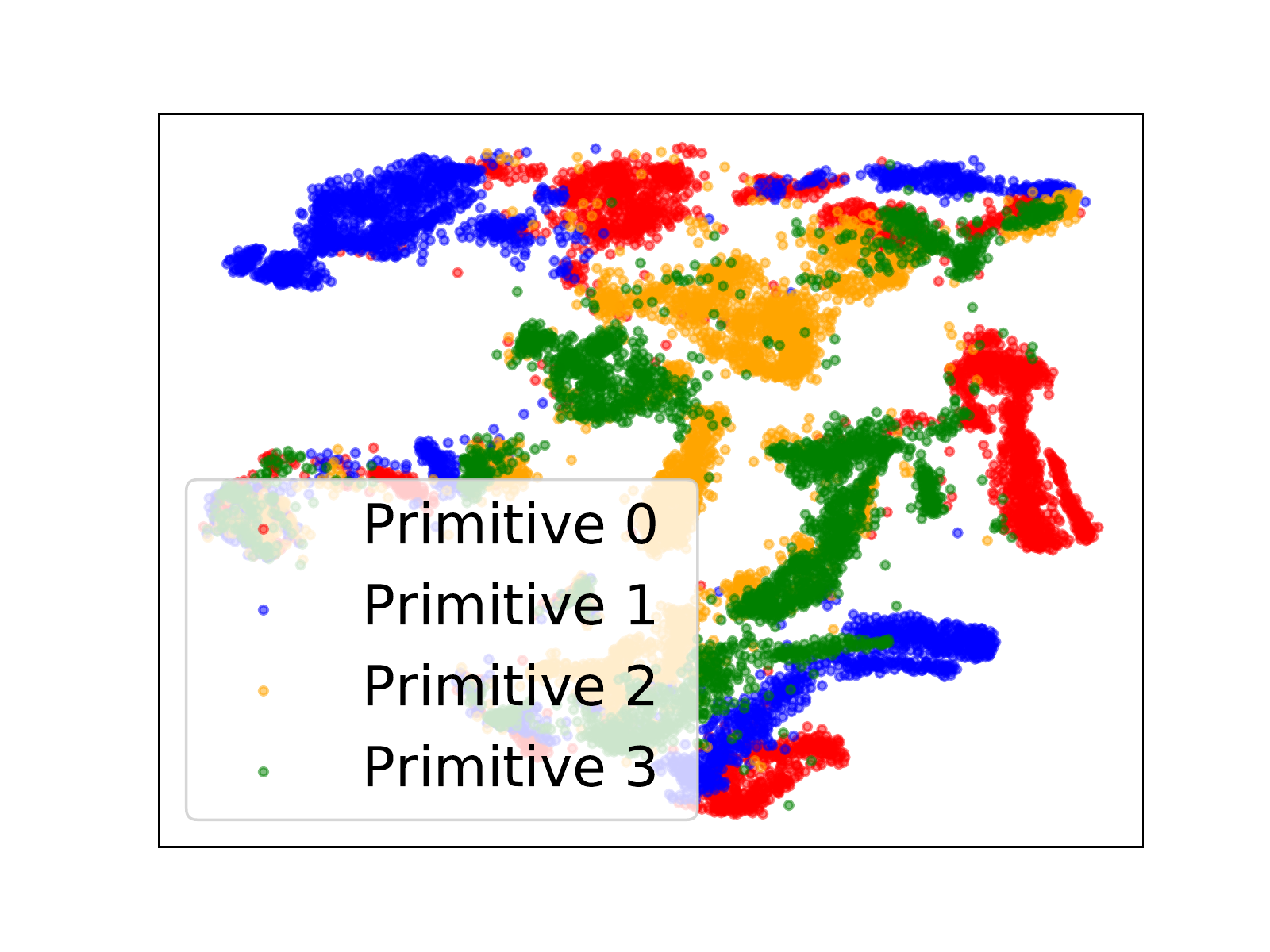}}
		};
		\coordinate (startleft) at (7.3,2.5);
		\coordinate (gatingend) at (4.2,2);
		\draw[very thick, ->] (startleft) to[out=-150,in=30] node {} (gatingend);
		\coordinate(startright) at (8.6, 2.7);
		\coordinate(oursend) at (9.5,2);
		\draw[very thick, ->] (startright) to[out=-30,in=150] node {} (oursend);
		\draw[dashed, gray, thick, rounded corners] (7.4, 2.7) rectangle (8.6, 3.5);
		\end{tikzpicture}
		\caption{We plot the t-SNE visualisation for other 5 MuJoCo environments: \textit{Ant-v2}, \textit{HumanoidStandup-v2}, \textit{Humanoid-v2}, \textit{HalfCheetah-v2} and \textit{Walker2D-v2}. Parameters and other details are the same as the setting mentioned in Sec 4.}
	\label{fig:all_tsne}
	\end{figure*}


\end{document}